\def\@onedot{\ifx\@let@token.\else.\null\fi\xspace}
\DeclareMathOperator*{\argmax}{arg\,max}
\DeclareMathOperator*{\argmin}{arg\,min}
\def \inv {^{-1}}
\begin{document}

\SetKwComment{Comment}{$\triangleright$ \ }{}
\SetKwInput{KwInput}{Input}
\SetKwInput{KwOutput}{Output}
\SetCommentSty{normalfont}

\title{Temporal Abstraction in Reinforcement Learning\\ with the Successor Representation}

\author{\name Marlos C. Machado \email machado@ualberta.com\\
       \addr DeepMind\\
       \addr Alberta Machine Intelligence Institute (Amii)\\
       \addr Department of Computing Science, University of Alberta\\
       \addr Edmonton, AB, Canada
       \AND
       \name Andr\'e Barreto \email andrebarreto@deepmind.com\\
       \addr DeepMind\\
       \addr London, United Kingdom
       \AND
       \name Doina Precup \email doinap@deepmind.com\\
       \addr DeepMind\\
       \addr Quebec AI Institute (Mila)\\
       \addr School of Computer Science, McGill University\\
       \addr Montreal, QC, Canada
       \AND
       \name Michael Bowling \email mbowling@ualberta.ca\\
       \addr DeepMind\\
       \addr Alberta Machine Intelligence Institute (Amii)\\
       \addr Department of Computing Science, University of Alberta\\
       \addr Edmonton, AB, Canada
       \vspace{-0.3cm}
}

\editor{Jan Peters}
\maketitle

\vspace{-0.6cm}

\begin{abstract}

Reasoning at multiple levels of temporal abstraction is one of the key attributes of intelligence. In reinforcement learning, this is often modeled through temporally extended courses of actions called \emph{options}. Options allow agents to make predictions and to operate at different levels of abstraction within an environment.  Nevertheless, approaches based on the options framework often start with the assumption that a reasonable set of options is known beforehand. When this is not the case, there are no definitive answers for which options one should consider. In this paper, we argue that the successor representation, which encodes states based on the pattern of state visitation that follows them, can be seen as a natural substrate for the discovery and use of temporal abstractions. To support our claim, we take a big picture view of recent results, showing how the successor representation can be used to discover options that facilitate either temporally-extended exploration or planning. We cast these results as instantiations of a general framework for option discovery in which the agent’s representation is used to identify useful options, which are then used to further improve its representation. This results in a virtuous, never-ending, cycle in which both the representation and the options are constantly refined based on each other. Beyond option discovery itself, we also discuss how the successor representation allows us to augment a set of options into a combinatorially large counterpart without additional learning. This is achieved through the combination of previously learned options. Our empirical evaluation focuses on options discovered for temporally-extended exploration and on the use of the successor representation to combine them. Our results shed light on important design decisions involved in the definition of options and demonstrate the synergy of different methods based on the successor representation, such as eigenoptions and the option keyboard.\looseness=-1

\begin{keywords}
  Reinforcement learning, Options, Successor representation, Eigenoptions, Covering options, Option keyboard, Temporally-extended exploration
\end{keywords}

\end{abstract}

\section{Introduction}
In the reinforcement learning problem, an agent interacts with its environment such that the agent receives an observation from the environment and takes an action based on the received observations. This interaction takes place at every time step, which is often the fundamental unit of time in this problem formulation. Nevertheless, several decision making problems,  such as robot locomotion~\citep{Stone05}, strategy games like StarCraft~\citep{Vinyals19}, and balloon navigation~\citep{Bellemare20}, involve operating over different time scales. The options framework \citep{Precup00,Sutton99} is maybe the most common formalism that allows us to do so, giving agents the ability to reason in terms of actions extended in time. This framework models courses of actions as \emph{options}, which have the ability to accelerate learning in different ways, allowing, for example, faster credit assignment~\citep[e.g.,][]{Mann14,Solway14}, better exploration~\citep[e.g.,][]{Baranes13,Fruit17}, and transfer~\citep[e.g.,][]{Konidaris07,Topin15}.

Despite the attention received by the options framework, it is still not clear where options should come from---a problem referred to as \emph{option discovery}. In this paper 
we argue that the \emph{successor representation} (SR) is a natural substrate for temporal abstractions in reinforcement learning. The SR~\citep{Dayan93} is a representation that generalizes between states using the similarity between their successors, that is, the similarity between the states that follow the current state given the environment’s dynamics and the agent’s policy. It allows us to discover options that are effective not only for planning \citep[e.g.,][]{Hoang21,Ramesh19,Stachenfeld17}, but also for temporally-extended exploration \citep[e.g.,][]{Jinnai19,Machado17,Machado18b}. The SR also allows us to combine existing options without additional learning \citep{Barreto19}. Furthermore, recent studies suggest that the SR models remarkably well behaviors observed in the brain~\citep[e.g.,][]{Momennejad17,Stachenfeld14,Stachenfeld17}.

In this paper, we present a general framework for option discovery in which the agent learns a representation that is used to identify meaningful options, which are then used to improve the agent's representation in a virtuous, never-ending, cycle (Sections \ref{sec:framework}~and~\ref{sec:illustration_rod_cycle}). To support our claim about the role of the SR for temporal abstraction, we show how we can instantiate this cycle with the SR, and how the SR is conducive to option discovery. We summarize existing methods that use the SR for option discovery, providing intuitions about the motivation behind them, and connecting papers from different contexts (Sections~\ref{sec:temporally_extended_exploration} and~\ref{sec:related_work}). Moreover, regardless of how effective a discovery method is, while more options means a more expressive set of behaviors, more options often makes learning and using these options more difficult. Thus, we also discuss an approach based on the SR for combining options, the \emph{option keyboard} \citep{Barreto19}, which addresses this issue by allowing the agent to extend, without extra learning, a finite set of options to a combinatorially large counterpart~(Section \ref{sec:options_keyboard}).

We perform numerical simulations to assess how effective options discovered by different methods are in capturing environment properties. Such an ability is particularly important for problems in which a fixed reward function is not easily defined, such as continual~\citep{Brunskill14,Mankowitz18}, multitask~\citep{Teh17}, and transfer learning~\citep{Taylor09}. We evaluate the impact of different design decisions every option discovery method needs to make~(Section \ref{sec:experiments_temporally_extended_exploration}). We present evidence on the potential of instantiating a cycle in which both the representation and the options are constantly refined based on each other (Section~\ref{sec:illustration_rod_cycle}), and on the synergy of different approaches based on the SR~(Section \ref{sec:experiments_ok}). We focus our discussion mostly on toy domains to provide intuition without confounding factors. We use navigation tasks throughout the paper because they are intuitive and it is easier to generate visualizations with them. From an agent's perspective, these tasks are not different from other tasks, the agent is always traversing an unknown state space. We review the extensions to more complex solutions when discussing relevant related work.

Besides the sections already discussed, we present the required background in Sections~\ref{sec:background} and~\ref{sec:successor_representation}, the related work in Section~\ref{sec:related_work}, and the conclusion in Section~\ref{sec:conclusion}. While the main contributions in Sections~\ref{sec:temporally_extended_exploration} and~\ref{sec:options_keyboard} are on presenting existing results under a single formulation, the results in Sections~\ref{sec:framework},~\ref{sec:experiments_temporally_extended_exploration},~\ref{sec:illustration_rod_cycle}, and~\ref{sec:experiments_ok} are novel and have not been presented anywhere~else.

\section{Background}~\label{sec:background}

In this section we introduce the formalism behind reinforcement learning and the options framework~\citep{Precup00,Sutton99}. 
Throughout this paper, as a convention, we indicate random variables by capital letters (e.g., $S_t$, $R_t$), vectors by bold lowercase letters (e.g., $\boldsymbol{\theta}, \boldsymbol{\phi}$), matrices by bold capital letters (e.g., $\mathbf{P_\pi}, \mathbf{\Psi_\pi}$), functions by non-bold lowercase letters (e.g., $v$, $q$), and sets with a calligraphic font (e.g., $\mathscr{S}, \mathscr{A}$).

\subsection{Reinforcement Learning}

Reinforcement learning (RL) is a problem formulation that allows us to tackle sequential decision making problems. In RL we consider an agent interacting with an unknown environment in a sequential manner, aiming to maximize cumulative reward. We often assume that the environment can be modeled as a finite Markov decision process (MDP). An MDP  is formally defined as a 4-tuple $\langle\mathscr{S}, \mathscr{A}, p, r\rangle$. Starting from state $S_0 \in \mathscr{S}$, at each time step $t$ the agent takes an action $A_t \in \mathscr{A}$, to which~the environment responds with a state $S_{t + 1} \in \mathscr{S}$, according to a transition probability kernel $p(s' | s, a) \doteq \Pr(S_{t+1} = s' | S_t = s, A_t = a)$, and with a bounded reward signal $R_{t+1} \in \mathbb{R}$, with $r(s, a)$ indicating the expected reward for a transition from state $s$ under action $a$, that is, $r(s, a) \doteq \mathbb{E}[R_{t+1} \ | \ S_{t} = s, A_t = a]$. 

The agent's goal is to learn a policy $\pi : \mathscr{S} \times \mathscr{A} \rightarrow [0,1]$ that maps each state to a probability distribution over actions. Specifically, the agent seeks a policy that maximizes, in expectation, the (discounted) cumulative sum of rewards, also known as \emph{return}, defined~as \begin{eqnarray}{G_t = \sum_{k=0}^{\infty} \gamma^k R_{t+k+1}},\end{eqnarray} with $\gamma \in [0, 1)$, the discount factor, defining the relative value of future rewards.

In this paper we focus on value-based methods. To obtain the policy $\pi$, we estimate the state-value function, $v_\pi : \mathscr{S} \rightarrow \mathbb{R}$, or the state-action value function, $q_\pi : \mathscr{S}\times \mathscr{A} \rightarrow \mathbb{R}$. The value of a state $s$ when following a policy $\pi$, $v_\pi(s)$, is defined to be the return from that state: $v_\pi(s) \doteq \mathbb{E}_\pi\big[G_t | S_t = s \big]$. The state-action value function is defined similarly, but it takes into consideration the action taken, that is, $q_\pi(s, a) \doteq \mathbb{E}_\pi\big[G_t | S_t = s, A_t = a \big]$, where the expectation in both definitions is with respect to the policy $\pi$ and the probability kernel~$p$. Importantly, these functions can be defined recursively~\citep{Bellman57}, for example:
\begin{eqnarray}
v_\pi(s) &=& \sum_{a} \pi(a|s) \sum_{s'}\sum_{r} p(s', r | s, a) \Big [ r + \gamma v_\pi(s') \Big ]. \label{eq:bellman_v}
\end{eqnarray}

These equations can also be written in matrix form. The state-value function, for example, can be defined with ${\bf v_\pi}$, ${\bf r} \in \mathbb{R}^{|\mathscr{S}|}$, and $\mathbf{P_\pi} \in \mathbb{R}^{|\mathscr{S}| \times |\mathscr{S}|}$: 
\begin{eqnarray}
{\bf v_\pi} = {\bf r} + \gamma \mathbf{P_\pi} {\bf v_\pi} = (I - \gamma \mathbf{P_\pi})\inv {\bf r}, \label{eq:value_function}
\end{eqnarray}
where $\mathbf{P_\pi}$ is the transition probability induced by $\pi$, i.e., $\mathbf{P_\pi}(s, s') = \sum\nolimits_a \pi(a|s) p(s'|s, a)$.

In the RL problem we assume the agent does not know $\mathbf{P_\pi}$ nor $\mathbf{r}$ beforehand. Instead, RL methods directly estimate $v_\pi$ or $q_\pi$ from samples $(s, a, r, s')$. Most approaches alternate between a \emph{policy evaluation} step, that is, estimating the value of the agent's current policy, and a \emph{policy improvement} step, which defines a new policy from these estimates: 
\begin{eqnarray}
\pi(s) \doteq \argmax_{a \in \mathscr{A}} Q(s, a). \label{eq:policy_improvement}
\end{eqnarray}

Q-Learning~\citep{Watkins92} is the most well-known algorithm for estimating the value of the optimal policy, $\pi_*$. It has the following update rule for the $q_{\pi_*}$ estimate, $Q$: \begin{eqnarray}
Q(S_t, A_t) \leftarrow Q(S_t, A_t) + \alpha \Big(R_{t+1} + \gamma \max_{a \in \mathscr{A}} Q(S_{t+1}, a) - Q(S_t, A_t) \Big),
\end{eqnarray}
where $\alpha$ is the algorithm's step-size parameter.

When the value of each state (or state-action pair) is individually stored, this is a \emph{tabular} method. Nevertheless, generalization is required, and desirable, in problems with large state spaces, where it is infeasible to learn an individual value for each state. This is done by parameterizing the function $V$ or $Q$ with a set of parameters ${\bm \theta}$. We write, given the parameters $\bm \theta$, $V(s; {\bm \theta}) \approx v_\pi(s)$ and $Q(s, a; {\bm \theta}) \approx q_\pi(s, a)$. In the past, a common approach was to use linear function approximation where $Q(s, a; {\bm \theta}) ={\bm \theta}^\top \bm{\phi}(s,a)$, in which ${\bm \theta}$ is a vector of weights and $\bm{\phi}(s,a)$ denotes a static feature representation of the state $s$ when taking action $a$. It is now common to use a neural network to compute a non-linear function approximation of the value function, an approach popularized by \citet{Mnih13,Mnih15} with Deep Q-Network (DQN). The study of algorithms that use neural networks as function approximators has since been dubbed \emph{deep reinforcement learning}.

\subsection{Temporal Abstraction in RL: The Options Framework}

Sequential decision making usually involves planning, acting, and learning about temporally extended courses of actions over different time scales. In reinforcement learning, \emph{options} are a well-known formalization of the notion of actions extended in time that allow us to represent courses of actions~\citep{Precup00,Sutton99}.

An option $\omega \in \Omega$ is a 3-tuple
\begin{eqnarray}
\omega = \langle \mathcal{I}_\omega, \pi_\omega, \beta_\omega \rangle,
\end{eqnarray}
where $\mathcal{I}_\omega \subseteq \mathscr{S}$ denotes the option's initiation set, $\pi_\omega : \mathscr{S} \times \mathscr{A} \rightarrow [0,1]$ denotes the option's policy, such that $\sum_a \pi_\omega(\cdot, a) = 1$, and $\beta_\omega : \mathscr{S} \rightarrow [0, 1]$ denotes the option's termination condition, that is, the probability that option $\omega$ will terminate at a given state. In this paper, we consider the \emph{call-and-return} option execution model in which a high-level policy, $\mu : \mathscr{S} \times \Omega \rightarrow [0,1]$, dictates the agent's behavior. Notice that the actions originally defined in the MDP are a special case of options, that is, $\mathscr{A} \subseteq \Omega$. Finally, we often write that an agent \emph{follows or takes} an option $\omega$, meaning that the agent, in a state in $\mathcal{I}_\omega$, commits to act according to the option's policy, $\pi_\omega$, until its termination condition is satisfied. To distinguish between options and actions, we often refer to the actions originally defined in the problem formulation as \emph{primitive actions}.

Options have different use cases, including planning, exploration, and credit assignment. In this paper we mostly focus on exploration. Specifically, we discuss the \emph{option discovery} problem, which consists in discovering useful options from the agent's stream of experience. In other words, we discuss different algorithms that, given a set of samples $(s, a, s', r)$, autonomously define extended courses of actions, represented by an initiation set, a policy, and a termination condition, such that they allow for temporally-extended exploration. The algorithms we discuss can all be cast as part of a general framework for option discovery, which we discuss in the next section. In subsequent sections we show how different algorithms instantiate this framework.

\section{A Framework for Option Discovery from Representation Learning}\label{sec:framework}

We first introduce a general approach for option discovery that is driven by the representation learning process. It follows a constructivist approach~\citep{Piaget63} depicted as a cycle in which options discovered from previous iterations act as a scaffold for more complex behaviors discovered in subsequent iterations. The framework depicted in Figure~\ref{fig:option_discovery_cycle} distills the main steps of this cycle. Note that, while we present these steps sequentially, they can be executed concurrently at different time scales. Below we further discuss each step.\\

\begin{figure}[h]
    \centering
    \includegraphics[width=0.8\columnwidth]{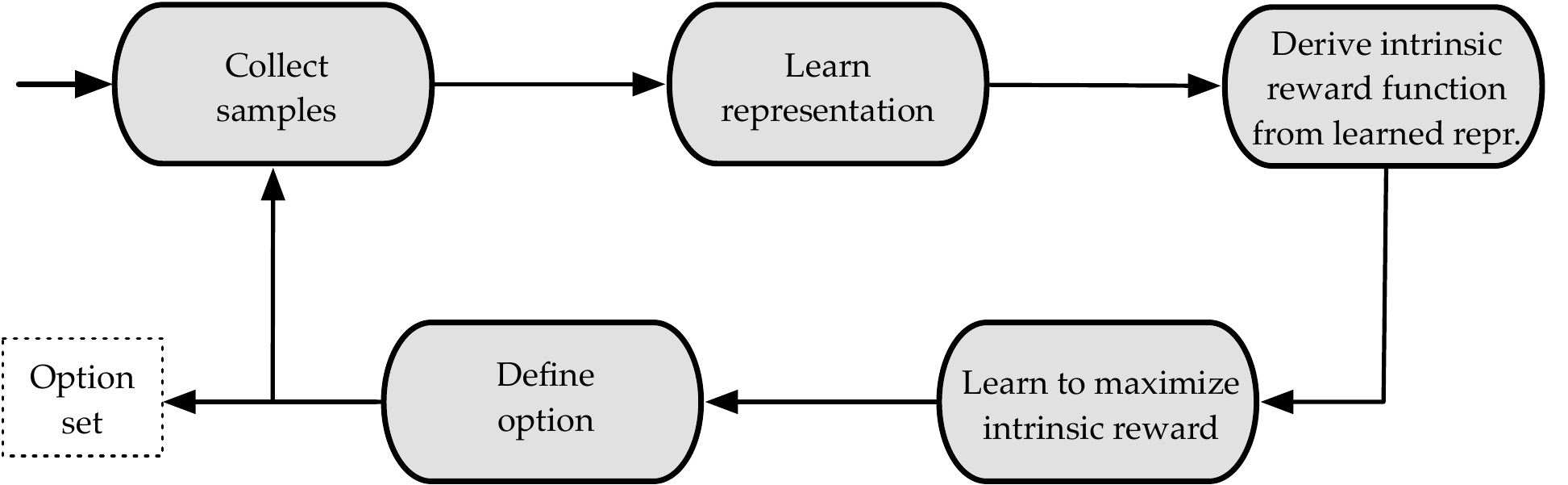}
    \caption{
    Representation-driven Option Discovery (ROD) cycle~\citep{Machado19}. The option discovery algorithms discussed in this paper can be seen as instantiating this cycle. The incoming arrow to \emph{Collect samples} depicts the start of the process. The arrow from \emph{Define option} to \emph{Option set} highlights the output generated by the ROD cycle. Note that other generated artifacts can also be used by the agent outside the ROD cycle, such as the learned representation.}~\label{fig:option_discovery_cycle}
\end{figure}

\emph{Collect samples:} The first step in each iteration of the representation-driven option discovery (ROD) cycle is to have the agent collect data in the form of trajectories. Selecting actions uniformly at random is an obvious first choice for the agent's policy. Once options have been identified, more possibilities for this step become available.\\

\emph{Learn representation:} In most problems of interest, the agent should learn a representation of its environment while acting in the world. Methods that are reward agnostic can be easier to implement, especially in early learning.
In this paper, we focus on the \emph{successor representation} as the output of this step, because, as we discuss in the next section, it naturally captures the dynamics of the environment.\\

\emph{Derive an intrinsic reward function from the learned representation:} After a representation is learned, the agent can use it to define an intrinsic reward function which an option could maximize. The algorithms we discuss here either use spectral analysis or some clustering of the successor representation to define this \emph{intrinsic} reward function. The first is often associated with more efficient exploration while the latter usually leads to more efficient credit assignment.\\

\emph{Learn to maximize intrinsic reward:} Once a representation has been learned and the intrinsic reward function has been defined, the agent needs to learn to maximize the (discounted) sum of these rewards, which is a standard reinforcement learning problem.  The learned policy is the policy of this new option being discovered. This can be done in parallel for multiple options, with off-policy learning, which allows one to learn about policies that are different from the policy that generated the observations.\\

\emph{Define option:} Finally, there are different ways to define the option's initiation set and termination condition, which give rise to different algorithms~\citep[e.g.,][]{Jinnai19,Machado17}. We discuss several possibilities in Sections~\ref{sec:temporally_extended_exploration} and~\ref{sec:experiments_temporally_extended_exploration} when introducing and evaluating instantiations of this framework. The output of this step can be immediately incorporated into the agent's option set, but it can also be used in the next iteration of the ROD cycle, ideally improving the data collection step, which then allows the discovery of more complex options.\\

As previously mentioned, the ROD cycle can be agnostic to the ultimate reward function that the agent may want to optimize. This is particularly important when aiming at discovering options for temporally-extended exploration. If the option discovery process consists in learning options that, for example, replicate observations (e.g., feature activation, state visitation), new options might allow the agent to better navigate in the environment by making events that were rare, or virtually impossible, more likely. Figure~\ref{fig:results_cycle_exploration}, adapted from \citeauthor{Jinnai20}'s~\citeyear{Jinnai20} work, presents a concrete example of this behavior in a task with a continuous state space. In Section~\ref{sec:illustration_rod_cycle} we present another illustration of multiple iterations of the ROD cycle that was generated by an algorithm we introduce in this paper.

\begin{figure}[t]
    \centering
    \includegraphics[width=\columnwidth]{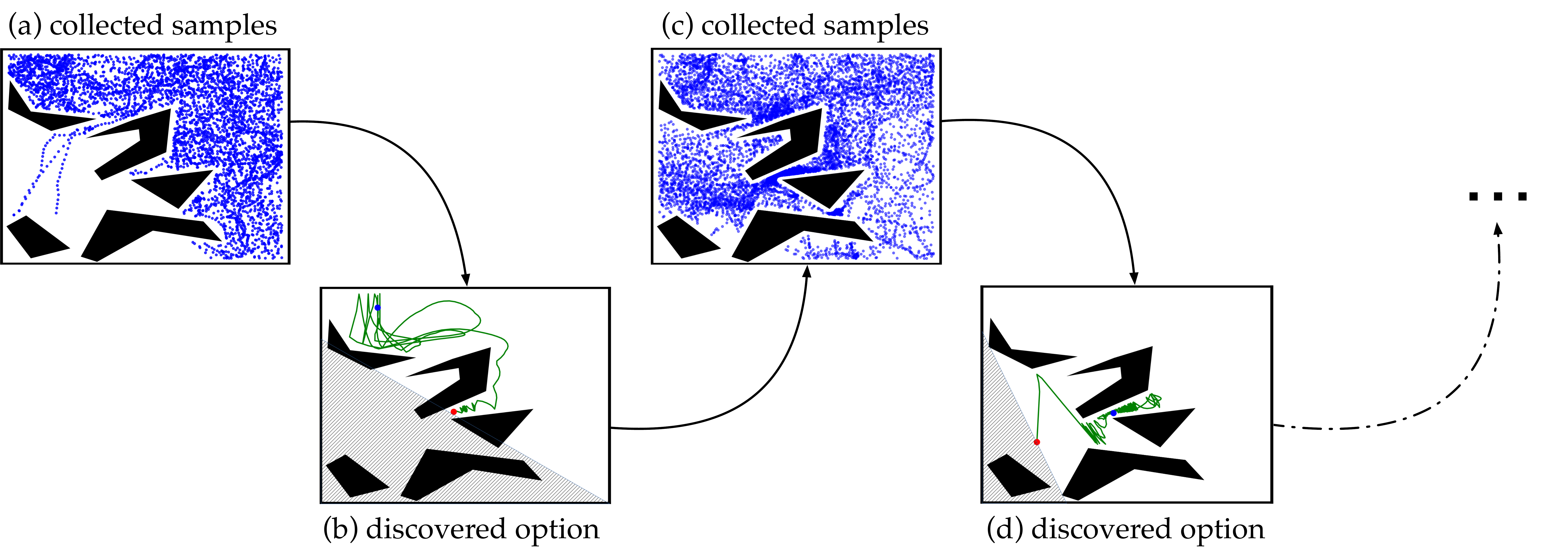}
    \caption{Two ROD cycles in the Pinball domain~\citep{Konidaris09}, originally presented by  \citet{Jinnai20}.  Blue dots denote state visitation, green lines the options' trajectories, and shaded regions the states in which the options terminate. The agent follows a random walk in every iteration. The agent first struggles to go through some narrow passages, as we can see in the samples collected at the first iteration of the ROD cycle (Fig.~\ref{fig:results_cycle_exploration}a). At the end of the cycle, the  discovered option takes the agent to the boundary of the region it visited (Fig.~\ref{fig:results_cycle_exploration}b). This new option enables the agent to visit regions that were originally hard to reach with a random walk, as we can see when looking at the samples collected in the second iteration (Fig.~\ref{fig:results_cycle_exploration}c). These samples were generated by the agent following a policy that  uniformly chooses between the primitive actions and the option discovered in the previous iteration. This process can, of course, be further refined. Fig.~\ref{fig:results_cycle_exploration}d depicts the option discovered at the end of the second iteration. }~\label{fig:results_cycle_exploration}
\end{figure}

\section{The Successor Representation}~\label{sec:successor_representation}

The successor representation \citep[SR;][]{Dayan93} is a classic method for automatically extracting a representation from the agent's observation, giving an answer to what representations one should use when performing function approximation. In this paper, we claim that the SR could be the natural substrate for temporal abstraction in reinforcement learning, as it is a representation learning method conducive to the discovery and use of options. The algorithms we present, one way or another, use the SR as their representation when instantiating the ROD cycle. This is due to the fact that it has a particular structure that captures the dynamics of the environment, as we discuss below.

\subsection{Tabular Setting}

The SR is a representation that captures the underlying environment dynamics. It does so by assigning similar values to states that are close in time; in other words, the notion of similarity between states is based on how similar their successor states are under a policy~$\pi$. Formally, the SR is defined to be the current and expected future occupancy of state $s'$ given the agent's policy $\pi$ and its starting state $s$.

The SR, with respect to a policy $\pi$, $\mathbf{\Psi_\pi}$, is defined as
\begin{equation}
\mathbf{\Psi_\pi}(s, s') = \mathbb{E}_{\pi, p} \Bigg[\sum_{t=0}^\infty \gamma^t \mathbbm{1}_{\{S_t = s'\}} \Big| S_0 = s \Bigg],
\end{equation}
where $\mathbbm{1}$ denotes the indicator function and $\gamma \in [0, 1)$. Thus, each state $s$ is represented as an $|\mathscr{S}|$-dimensional vector whose $i$-th component is the expected discounted visitation to each state in the environment. Figure~\ref{fig:sr} illustrates this concept. 

\begin{figure}[t]
   \begin{center}
    \includegraphics[width=0.95\columnwidth]{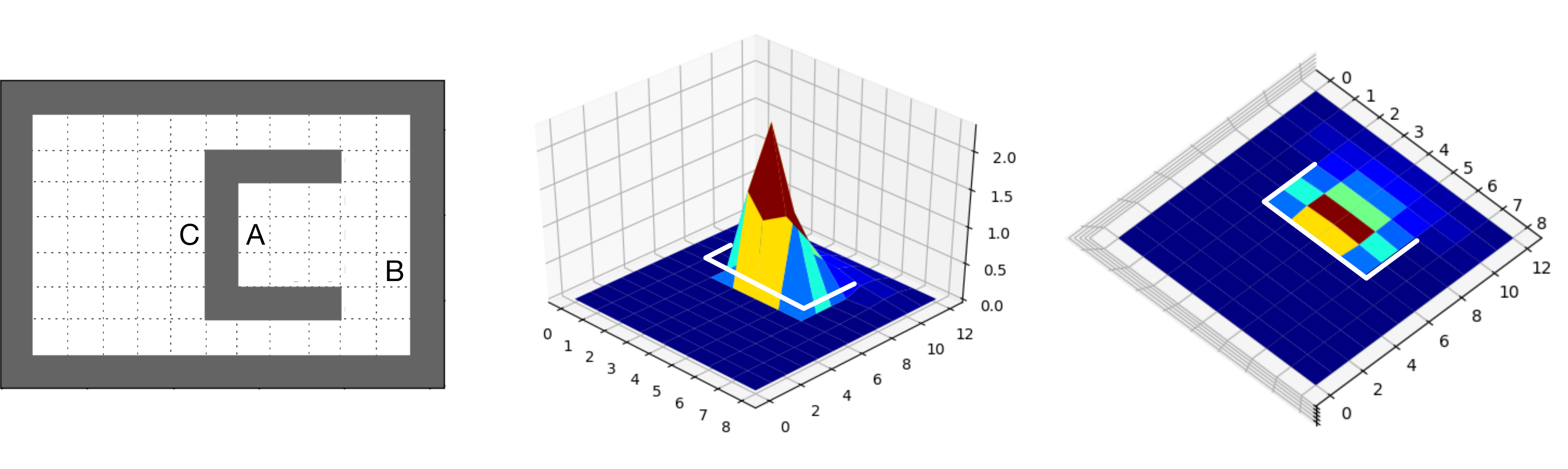}
     \centering
    \caption[Successor representation for the uniform random policy.]{
    Example similar to \citeauthor{Dayan93}'s~(\citeyear{Dayan93}) of the SR, w.r.t. the uniform random policy, of state A (left). Consider a navigation task where the agent has access to its $(x,y)$ coordinates. It is tempting to use some distance metric such as the Euclidean distance to define distance between states. However, if one considers the gray tiles to be walls, an agent in point A can reach point B much \emph{quicker} than point C. The SR captures this distinction, ensuring that, in this representation, point A is closer to point B than it is to point C. The plots of the SR were generated using a discretization of the grid, where each tile is a state. Red represents larger values while blue represents smaller values (states that are temporally further away). Recall the SR of a state, in the tabular case, is an $|\mathscr{S}|$-dimensional representation, thus allowing us to depict it as a heatmap over the state space.}\label{fig:sr}
    \end{center}
\end{figure}

Importantly, the SR can be estimated from samples with temporal-difference learning methods \citep{Sutton88}, where the reward function is replaced by the state occupancy:
\begin{eqnarray}
\hat{\Psi}(S_t, j) &\leftarrow& \hat{\Psi}(S_t,j) + \eta \Bigg[\mathbbm{1}_{\{S_t = j\}} + \gamma \hat{\Psi}(S_{t + 1}, j) - \hat{\Psi}(S_t, j) \Bigg],\label{eq:sr_td}
\end{eqnarray}
for all $j \in \mathscr{S}$, where $\eta$ is the step-size. Algorithm~\ref{alg:sr_closed_form} depicts an implementation of the SR. For clarity, because we refer back to this pseudo-code in later sections, we wrote it in such a way that transitions are stored in a data set $\mathcal{D}$. We do so to be able to write other components of the ROD cycle more compactly; but it is not necessary for any individual component.

\begin{algorithm}[t]
\caption{Successor Representation}\label{alg:sr_closed_form}
\KwInput{$\eta$ \Comment*[r]{Step-size}} 
\hspace{1.2cm} $\gamma \in [0, 1)$ \Comment*[r]{SR's discount factor}
\hspace{1.2cm} $\mathcal{D}$ \Comment*[r]{Data set with $(s,a, r, s')$ transitions}

\KwOutput{$\Psi \in \mathbb{R}^{|\mathscr{S}|\times|\mathscr{S}|}$ \newline}

$\Psi(i,j) \gets 0 \ \ \  \forall \ i, j < |\mathscr{S}|$ \Comment*[r]{Initialize the SR with zeros}

\For{$(s, a, s')$ \textrm{\normalfont \textbf{in}} $\mathcal{D}$}{
    \For{$i \gets 0$ \KwTo $|\mathscr{S}|$}{
        $\delta \gets \mathbbm{1}_{\{s = i\}} + \gamma \Psi(s', i) - \Psi(s, i)$\\
        $\Psi(s, i) \gets \Psi(s,i) + \eta \delta$
    }
}
\end{algorithm}

The SR can also be seen as a collection of general value functions~\citep{Sutton11} with a fixed discount factor and individual state visitation as cumulants. In this case, instead of seeing the SR as a representation, one would see it as a collection of predictions. The SR also corresponds~to~the~Neumann~series~of~$\gamma \mathbf{P}_\pi$:
\begin{eqnarray}
\label{eq:sr_matrix}
\mathbf{\Psi_\pi} = \sum_{t=0}^\infty (\gamma \mathbf{P_\pi})^t = (\mathbf{I} - \gamma \mathbf{P_\pi})\inv.
\end{eqnarray}

Thus, one can see the SR as an estimate of how often the agent expects to visit each state in the future, weighted by the discount factor. In fact, the SR is part of the solution when computing a value function (see Eq.~\ref{eq:value_function}):
\begin{eqnarray}
\label{eq:sr_reward}
{\bf v_\pi} = (\mathbf{I} - \gamma \mathbf{P_\pi})^{-1}  {\bf r} = \mathbf{\Psi_\pi} {\bf r}.
\end{eqnarray}
In words, one can compute the return by multiplying the SR and the estimates of the expected \emph{immediate} rewards in each state. This sum of weighted rewards relies on the SR to provide the weights, which encodes expected future state visitation.

The SR can be presented in multiple ways based on the several connections it has to other results in the field. The matrix in Eq.~\ref{eq:sr_matrix} is also known, for example, as the LSTD matrix~\citep{Lagoudakis03}. We further discuss some of these connections after presenting the generalization of the SR to the function approximation case.

\subsection{Successor Features: From States to Features}

The definitions given so far for the SR are limited to the tabular case. \emph{Successor features}~\citep[SFs;][]{Barreto17} are a generalization of the SR that can be extended to the function approximation setting.

Let $\bm{\phi}: \mathscr{S} \times \mathscr{A} \mapsto \mathbb{R}^d$ be a function that computes \emph{features}. The SFs of policy $\pi$ are
\begin{equation}
\bm{\psi}_\pi(s,a) \doteq \mathbb{E}_{\pi,p} \left[ \sum_{i=0}^{\infty} \gamma^{i} \bm{\phi}(S_{t+i}, A_{t+i}) \,|\, S_{t} = s, A_{t} = a \right].
\end{equation}
In words, $\bm{\psi}_{\pi, i}(s,a)$ encodes the discounted expected value of the $i$-th feature in the vector $\bm{\phi}(\cdot, \cdot)$ when the agent starts in state $s$, executes action $a$, and follows policy $\pi$ thereafter. The features $\bm{\phi}(\cdot, \cdot)$ can be either given to the agent or learned, as we discuss in Section~\ref{sec:related_work}. The update rule presented in Eq.~\ref{eq:sr_td} can be naturally extended to this definition.

SFs are a strict generalization of the SR. To see why this is so, suppose that $\mathscr{S}$ is finite and let $\bm{\phi}(s,a) = \bm{\phi}(s)$ for all $(s,a) \in \mathscr{S} \times \mathscr{A}$ (that is, $\bm{\phi}$ is a function of states only). Then, we can rewrite the definition of SFs in matrix form as 
\begin{equation}
    \label{eq:sfs_matrix}
\mathbf{\Psi_\pi} = \sum_{t=0}^\infty (\gamma \mathbf{P_\pi})^t \mathbf{\Phi} = (\mathbf{I} - \gamma \mathbf{P_\pi})\inv\mathbf{\Phi},
\end{equation}
where $\mathbf{\Phi} \in \mathbb{R}^{|\mathscr{S}| \times d}$ is a matrix encoding the feature representation of each state. When $d=|\mathscr{S}|$, if we define $\bm{\phi}_i(s_j) =\mathbbm{1}_{\{i=j\}}$, Eq.~\ref{eq:sfs_matrix} reduces to Eq.~\ref{eq:sr_matrix}. Again, this highlights the fact that the SR can be seen as the discounted state visitation distribution induced by policy $\pi$.

The connection between the SR and SFs also allows us to generalize Eq.~\ref{eq:sr_reward}. Assume there exists a $\bm{w} \in \mathbb{R}^d$ such that 
\begin{equation}
    \label{eq:reward_features}
r(s,a) = \bm{\phi}(s,a)^{\top}\bm{w} \text{ for all } (s,a) \in \mathscr{S} \times \mathscr{A}.
\end{equation}
Based on the definition of $q_\pi$ one can to show that~\citep{Barreto17} 
\begin{equation}
    \label{eq:sfs_q}
q_\pi(s,a) = \bm{\psi}_\pi(s,a)^{\top}\bm{w} \text{ for all } (s,a) \in \mathscr{S} \times \mathscr{A}.
\end{equation}
Again, when $d=|\mathscr{S}|$ and $\bm{\phi}_i(s_j,a)=\mathbbm{1}_{\{i=j\}}$ for all $a \in \mathscr{A}$, Eq.~\ref{eq:sfs_q} reduces to Eq.~\ref{eq:sr_reward}. Note that, once we have the SFs $\bm{\psi}_\pi$ of a policy $\pi$, Eq.~\ref{eq:sfs_q} allows us to instantaneously evaluate $\pi$ under any reward that can be represented as a linear combination of the features $\bm{\phi}$ (Eq.~\ref{eq:reward_features}).
This has been exploited in the past for transfer between tasks~\citep{Barreto17,Barreto18}.

\subsection{Properties of the Eigenvectors of the Successor Representation}\label{subsec:other_sr}

As aforementioned, the SR is present in several RL algorithms, either explicitly or implicitly. An important result for this paper is that the eigenvectors of the SR are equivalent to proto-value functions \citep[PVFs;][]{Mahadevan05,Machado18b}. We rely on this result to be able to also discuss algorithms originally presented under the PVFs formalism. The properties of the eigenvectors of the SR (i.e., PVFs) are particularly relevant to option discovery methods. A discussion of PVFs and the formal equivalence result between them and the eigenvectors of the SR is available in Appendix~\ref{app:pvfs}.

\begin{figure}[t]
    \centering
    \hspace{0.7cm}
    \begin{subfigure}[b]{0.17\columnwidth}
    \center
    \raisebox{1mm}{
        \includegraphics[width=\columnwidth]{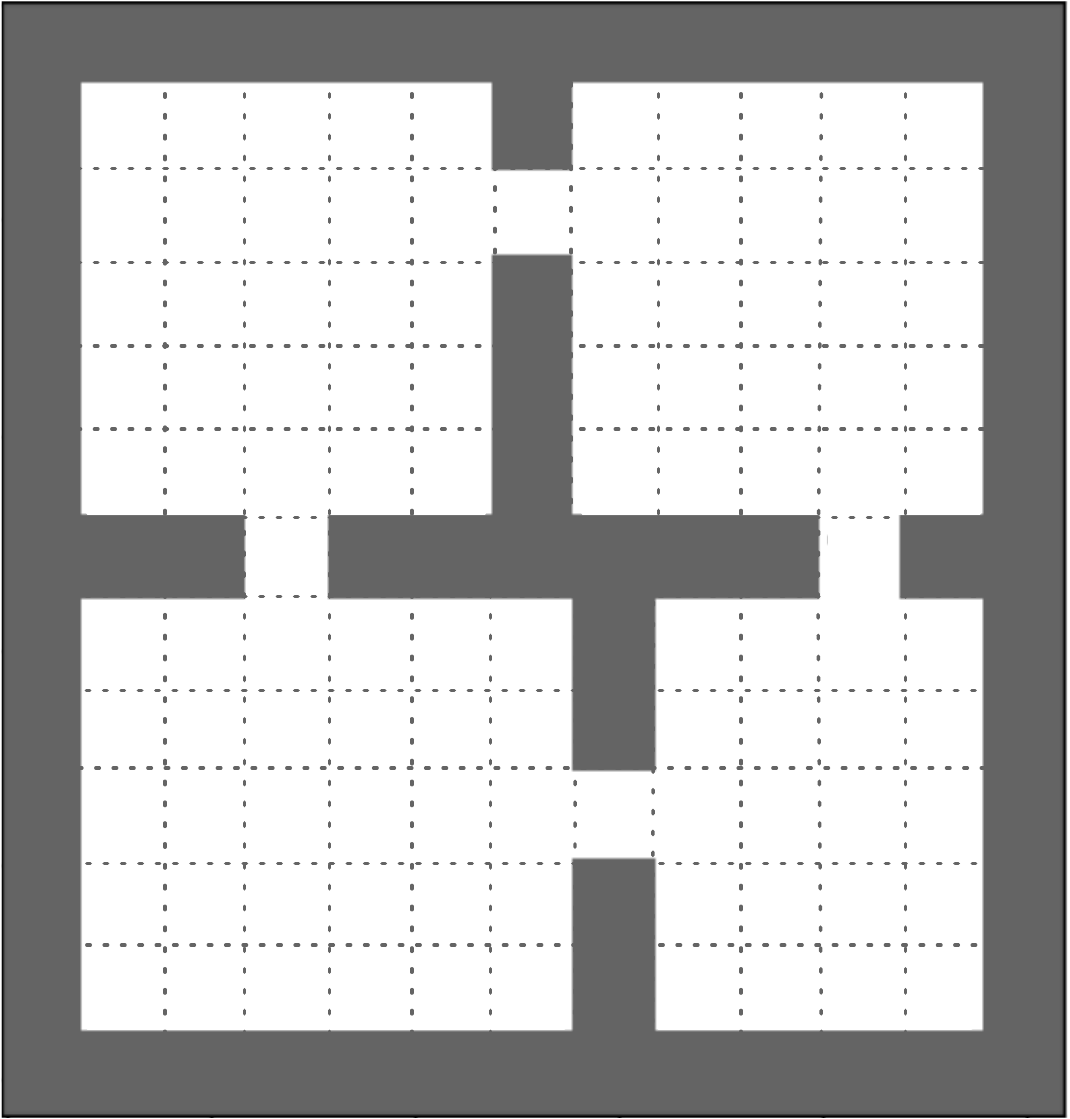}
        }
        \caption{Four-room}
    \end{subfigure}
    ~
    \begin{subfigure}[b]{0.23\columnwidth}
    \center
        \includegraphics[width=\columnwidth]{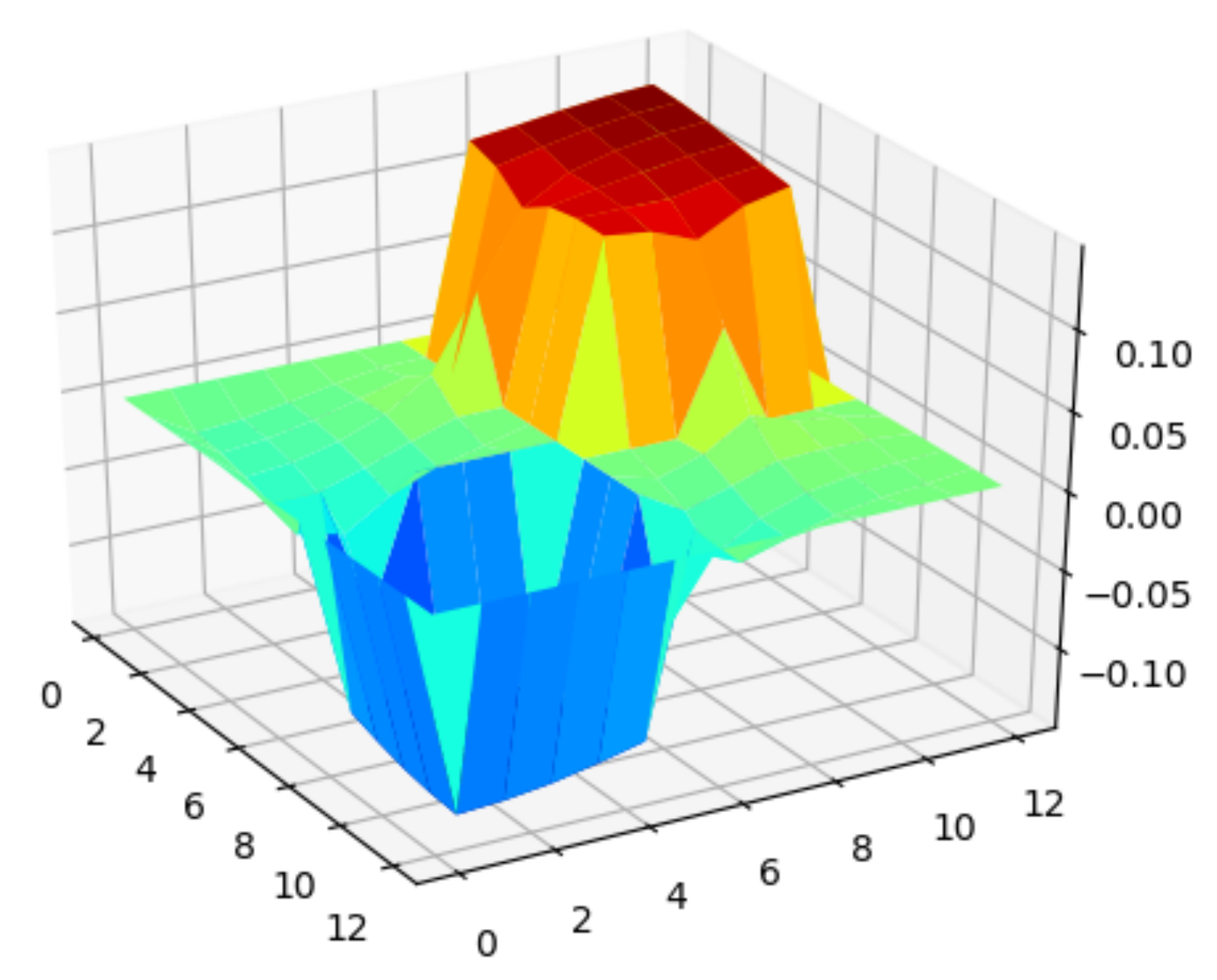}
        \caption{First PVF}
    \end{subfigure}
    ~
    \begin{subfigure}[b]{0.23\columnwidth}
    \center
        \includegraphics[width=\columnwidth]{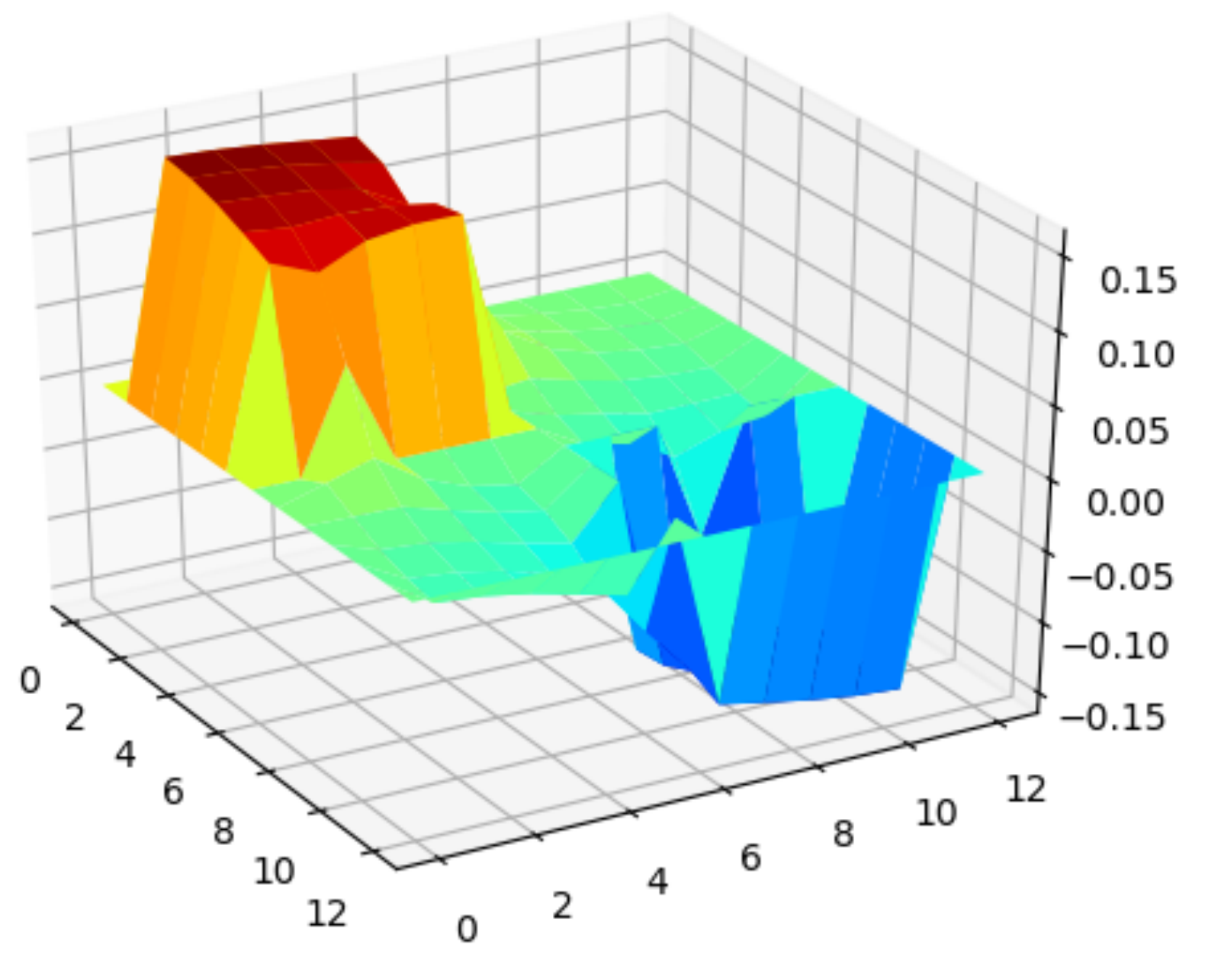}
        \caption{Second PVF}
    \end{subfigure}
    ~
    \begin{subfigure}[b]{0.23\columnwidth}
    \center
        \includegraphics[width=\columnwidth]{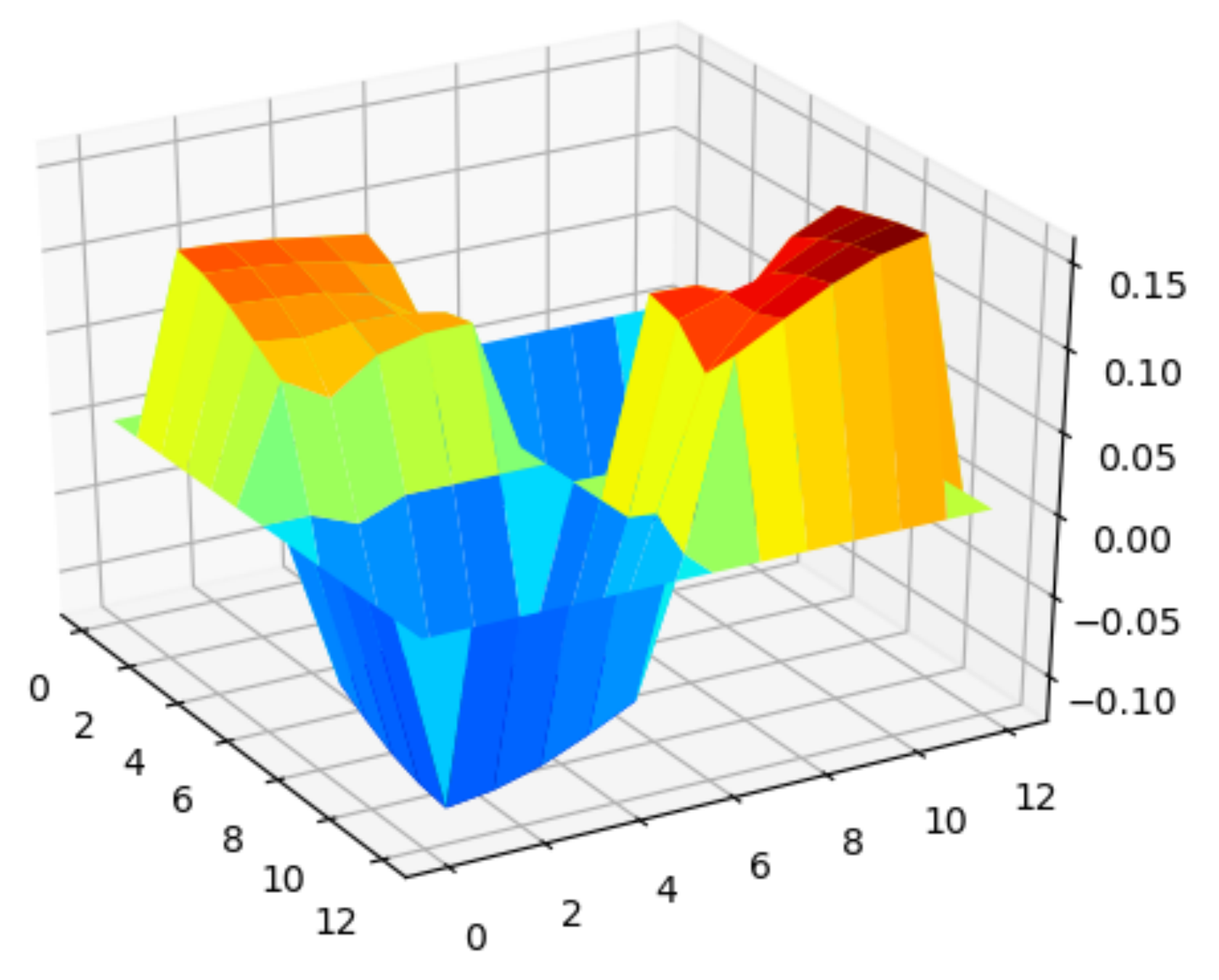}
        \caption{Third PVF}
    \end{subfigure}
\caption[First three PVFs in the four-room domain.]{First three PVFs in the \textbf{(a)} four-room domain. Gray squares represent walls and white squares represent accessible states. Four actions are available: \emph{up}, \emph{down}, \emph{right}, and \emph{left}. The transitions are deterministic and the agent is not allowed to move into a wall. \textbf{(b-d)} These plots depict the first, second, and third eigenvectors associated with each state. The axes are rotated for clarity. The bottom left corner of the four-room domain is the state closer to the reader. }\label{fig:pvfs}
\end{figure}

PVFs, and consequently the eigenvectors of the SR, capture temporal properties of an environment, with different eigenvectors capturing different time-scales of diffusion, a hallmark of Fourier analysis. This can be seen in Figure~\ref{fig:pvfs}, which depicts the first three PVFs in the four-room domain. The first eigenvectors capture longer time-scales, such as in Figures~\ref{fig:pvfs}b and~\ref{fig:pvfs}c, in which the biggest difference is seen between the two states that are furthest apart: the different diagonals in the environment. On the other hand, eigenvectors with corresponding larger eigenvalues\footnote{When using proto-value functions, the order of the eigenvectors is flipped, explaining why eigenoptions use the eigenvectors with corresponding lowest eigenvalues (see Theorem~\ref{th:equivalence} in Appendix~\ref{app:proofs}).} exhibit shorter time-scales, such as in Figure~\ref{fig:pvfs}d, in which the period of the curves depicted is already shorter---the distance between the states with largest and smallest values is smaller. Similar to value functions, PVFs are smooth, with the value of each state being a function of its neighbors. The methods we discuss below heavily benefit from such properties.\\

\section{Temporally-Extended Exploration}~\label{sec:temporally_extended_exploration}

To exemplify how the SR can be used for option discovery, we now discuss different methods that instantiate the ROD cycle using the SR as representation. In this section, we focus on discovering options useful for temporally-extended exploration. Specifically, we consider options that can be used by the agent, alongside primitive actions, when following a random walk. When an option is selected, instead of a primitive action, the agent acts according to the option's policy until its termination condition is satisfied.

Temporally-extended exploration with options is based on the intuition that agents explore the environment more effectively if they operate at a higher-level of abstraction. When acting according to options' policies, agents exhibit more directed behavior in contrast to the aimless dithering commonly observed when selecting primitive actions uniformly at random \citep{Dabney20, Jinnai19, Machado16, Machado17}. Intuitively, if one needs to explore, say, a building, it makes more sense to do so in terms of rooms than in terms of motor twitches. In fact, such an approach was used as a solution for the exploration problem in one of the recent high-profile success stories in artificial intelligence: the deployment of a reinforcement learning algorithm to navigate superpressure balloons in the stratosphere~\citep{Bellemare20}.

In this section, we discuss \emph{eigenoptions}~\citep{Machado17,Machado18b} and \emph{covering options}~\citep{Jinnai19, Jinnai20}. They instantiate the ROD cycle in different ways while using the SR as representation. These methods, and others \citep[e.g.,][]{Bar20}, use the eigenspectrum of the learned representation to guide the option discovery process. They are representative of a class of methods that is  motivated by the fact that the eigenvectors of the SR naturally encode the diffusion properties of the environment due to their close relationship to proto-vaue functions, as discussed in the previous section.

In this and in the next section, we use the differences between eigenoptions and covering options to highlight (and evaluate) some of the overall choices one can make when designing option discovery methods. Specifically, we discuss different ways of designing the intrinsic reward function that guides learning of the options' policy, the definitions of the options' initiation set and termination condition, and how these choices impact the design of online versions of these algorithms.

\subsection{Eigenoptions}

Eigenoptions are options defined by the eigenvectors of the SR.\footnote{\cite{Machado17} originally defined eigenoptions in terms of PVFs. Later, \cite{Machado18b} used the equivalence between PVFs and the eigenvectors of the SR to generalize eigenoptions to the setting in which the representation, that is, the SR, is learned online.} Each eigenvector assigns an intrinsic reward to every state in the environment. An eigenoption is an option, defined with respect to a specific eigenvector, that takes the agent to the state with largest (or smallest) value. Intuitively, what an eigenoption does is to ensure that there is an option that directly takes the agent to the state that was originally difficult to get to.

This description becomes clearer with an example. In the four-room domain, the second largest eigenvector of the SR, defined w.r.t. a uniform random policy, is depicted in Figure~\ref{fig:eigenoption} (the top eigenvector of the SR is constant). In this environment, the two states that are furthest apart are the states diagonally opposed in the corners, which is what the depicted eigenvector captures. The corresponding eigenoptions take the agent to one of those states.\\

\begin{figure}[t]
    \centering
    \includegraphics[width=0.9\columnwidth]{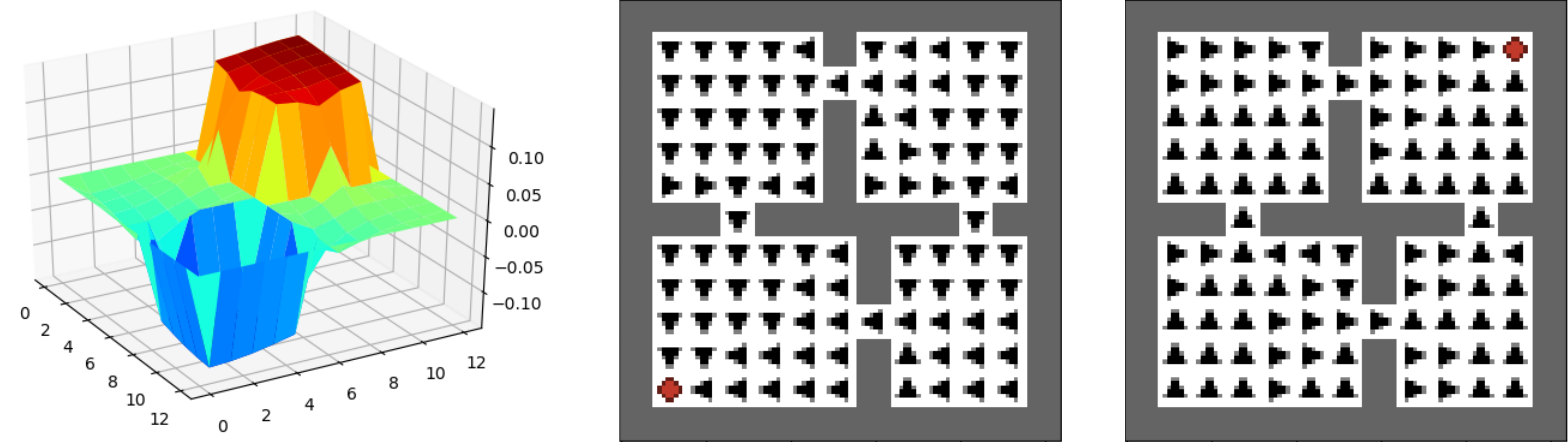}
    \caption{Second eigenvector of the SR and the two corresponding eigenoptions in the four-room domain. The actions of the deterministic policy are depicted as arrows and the state in which the option terminates is depicted in red (in the corners).}\label{fig:eigenoption}
\end{figure}

\emph{Learning the eigenoptions' policies.} To specify an option we need to define its policy, initiation set, and termination condition. An eigenoption's policy is defined by the intrinsic reward function $r^{\bf{e}}(s, s')$, obtained from the eigenvector ${\bf{e}} \in \mathbb{R}^{d}$ of the SR. It is defined as 
\begin{eqnarray}
r^{\bf{e}}(s, s') &=& {\bf e}^\top \big(\bm{\phi}(s') - \bm{\phi}(s)\big), \label{eq:eigenpurpose}
\end{eqnarray}
where $\bm{\phi}(s)$ denotes the feature representation of state $s$. Notice that, in the tabular case, if we define $\bm{\phi}(s)$ to be an $|\mathscr{S}|$-dimensional one-hot encoding of state $s$, $r^{\bf{e}}(s, s')$ becomes $${r^{\bf e}(s, s') = {\bf e}(s') - {\bf e}(s)}.$$

Importantly, the sign of an eigenvector is arbitrary. Thus, as aforementioned, this reward function can be interpreted as incentivizing the agent to either go to the highest or to the lowest point of the graph shown in Figure~\ref{fig:eigenoption}. In this example, these correspond to the top right and bottom left states in the environment.

We learn the option's policy in a newly defined MDP $\mathcal{M}^{\bf e} = \langle \mathscr{S}, \mathscr{A} \cup \{\bot\}, r^{\bf e}, p, \gamma \rangle$, where the state space and the transition probability kernel remain unchanged. The reward function is defined as in Eq. \ref{eq:eigenpurpose} and the action set is augmented by the action \emph{terminate} ($\bot$), which allows the agent to leave $\mathcal{M}^{\bf e}$ at no cost, returning to the original MDP in the same state it was when it left $\mathcal{M}^{\bf e}$. The discount factor can be chosen arbitrarily, although it impacts the timescale the option encodes by defining how myopic the agent will be w.r.t. $r^{\bf{e}}$. In this formulation, eigenoptions ignore the reward signal provided by the original MDP, but it is not difficult to imagine extensions in which this is not the case~\citep[c.f.,][]{Liu17,Sutton22}.

With $\mathcal{M}^{\bf e}$, we define the state-value function $v_{\pi}^{\bf e}(s)$, for policy $\pi$, as the expected value of the cumulative discounted intrinsic reward if the agent starts in state $s$ and follows policy $\pi$ until termination.  We define the action-value function $q_{\pi}^{\bf e}(s, a)$ similarly. The optimal value function for any intrinsic reward function obtained through $\bf{e}$ is then described as
$$ v_*^{\bf e}(s) = \max_{\pi} v{_{\pi}^{\bf e}}(s) \ \ \ \ \ \mbox{and} \ \ \ \ \ \ q_*^{\bf e}(s, a) = \max_{\pi} q{_{\pi}^{\bf e}}(s,a). $$

The option’s policy, $\pi_*^{\mathbf{e}}$, is the optimal policy w.r.t. the intrinsic reward function $r_i^\mathbf{e}$, i.e.,

$$\pi_*^{\bf e}(s) = \argmax_{a \in \mathscr{A}} q_{*}^{\bf e}(s, a).$$
Thus, finding the option's policy $\pi_*^{\bf e}$ becomes a traditional RL problem, with a different reward function. Importantly, the reward received for transitioning from one state to another is rarely zero, avoiding challenging exploration issues caused by sparse non-zero rewards.\\

\emph{Defining the eigenoptions' initiation sets and termination conditions.} When defining the MDP to learn the option's policy, we augment the agent's action set with the \emph{terminate} action so the agent can terminate the option. The termination condition is deterministic, with eigenoptions terminating when the agent is unable to accumulate further positive intrinsic rewards. This happens when the agent reaches the state with largest value assigned by the corresponding eigenvector (or a local maximum when $\gamma < 1$). Any subsequent sum of rewards will be at most zero. We formalize this condition by defining 
$$q_{\pi}^{{\bf e}+}(s, a) = 
      \left\{\begin{array}{rl} 
            q_{\pi}^{\bf e}(s, a), &\mbox{if $a \in \mathscr{A}$},\\
            0, &\mbox{if $a = \bot$,}
      \end{array}\right.
$$ 
where $q_{\pi}^{{\bf e}+}$ denotes the state-action value function, defined by $\pi$ and $r^{\bf e}$, augmented by the terminate action, which has value zero. When the terminate action is selected, control is returned to the higher level policy. In summary, because we break ties in favour of the terminate action, an option following a policy $\pi^{\bf e}$ terminates in state $s$ when $q_{\pi}^{\bf e} (s, a) \leq 0$ for all $a \in \mathscr{A}$.

The initiation set is defined to be the complement of the set of states in which an option terminates, i.e., all states in which there exists an action $a \in \mathscr{A}$ s.t. $q_{\pi}^{\bf e} (s, a) > 0$.~In~summary, an eigenoption consists of a policy $\pi^{{\bf e}+}$, which is augmented by the \emph{terminate} action,
\begin{eqnarray}
\pi^{{\bf e}+}(s) = \argmax_{a \in \mathscr{A} \cup \{\bot\}} q_{\pi}^{{\bf e}+}(s,a). \label{eq:extended_eigenoption_policy}
\end{eqnarray}
The termination and initiation sets are implicitly defined. That is, for an eigenoption $\omega^\mathbf{e}$, $\beta_{\omega^\mathbf{e}}(s) = 1$ if $q_{\pi}^{\bf e} (s, \cdot) \leq 0$, and $\beta_{\omega^\mathbf{e}}(s) = 0$ if there is an action $a \in \mathscr{A}$ such that $q_{\pi}^{\bf e} (s, a) > 0$. In this second case, $s \in \mathcal{I}_{\omega^\mathbf{e}}$. In practice, this means the option terminates in states that are assigned the (locally) largest values in the eigenvector; the option can be initialized in all other states. Algorithm~\ref{alg:eigenoptions_closed_form} and~\ref{alg:eigenoptions_online}, in Appendix~\ref{sec:pseudocode}, summarize the presentation of eigenoptions when computed in both closed-form and online. Importantly, for any eigenoption, there is always at least one state in which it terminates. The theorem below formalizes this result.

\begin{restatable}[\citeauthor{Machado17} \citeyear{Machado17}]{theorem}{thtermination}
Let $\omega = \langle\mathcal{I}_\omega, \pi_\omega, \beta_\omega\rangle$ denote an eigenoption. In a finite and ergodic MDP with $\gamma \in [0, 1)$, there is at least one state $s \in \mathscr{S}$ such that $\beta_\omega(s) = 1$.
\end{restatable}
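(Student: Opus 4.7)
The plan is to locate a specific state that is forced to terminate by the structure of the reward, namely a state that maximizes the underlying eigenvector. Since $\mathscr{S}$ is finite, pick $s^\star \in \arg\max_{s \in \mathscr{S}} \mathbf{e}(s)$ and set $e^\star \doteq \mathbf{e}(s^\star)$. By the termination rule for eigenoptions, it suffices to prove $q^{\mathbf{e}}_{\pi_\omega}(s^\star, a) \leq 0$ for every $a \in \mathscr{A}$, since ties are broken in favour of the terminate action and this forces $\beta_\omega(s^\star) = 1$.

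The engine of the argument is a telescoping identity. Writing the tabular intrinsic reward as $r^{\mathbf{e}}(s,s') = \mathbf{e}(s') - \mathbf{e}(s)$ and reindexing the two sums inside
\begin{equation*}
v^{\mathbf{e}}_\pi(s) = \mathbb{E}_\pi\!\left[\sum_{t=0}^\infty \gamma^t\bigl(\mathbf{e}(S_{t+1}) - \mathbf{e}(S_t)\bigr) \,\Big|\, S_0 = s\right],
\end{equation*}
one obtains, for any policy $\pi$ over primitive actions and any $s$,
\begin{equation*}
v^{\mathbf{e}}_\pi(s) \;=\; -\mathbf{e}(s) \,+\, (1-\gamma)\,\mathbb{E}_\pi\!\left[\sum_{t=1}^\infty \gamma^{t-1}\mathbf{e}(S_t) \,\Big|\, S_0 = s\right],
\end{equation*}
where the boundary term $\lim_{T\to\infty}\gamma^T\mathbf{e}(S_{T+1})$ vanishes because $\gamma < 1$ and $\mathbf{e}$ is bounded on the finite state space. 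Upper-bounding $\mathbf{e}(S_t) \leq e^\star$ inside the expectation and summing the geometric series yields the clean inequality $v^{\mathbf{e}}_\pi(s) \leq e^\star - \mathbf{e}(s)$, valid for every policy $\pi$ and every state $s$.

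Specialising this bound at $s = s^\star$ gives $v^{\mathbf{e}}_\pi(s^\star) \leq 0$. A one-step unrolling then handles the Q-values: for any $a \in \mathscr{A}$,
\begin{equation*}
q^{\mathbf{e}}_\pi(s^\star, a) \;=\; \mathbb{E}\bigl[\mathbf{e}(S_1) - e^\star + \gamma\, v^{\mathbf{e}}_\pi(S_1) \,\big|\, S_0 = s^\star, A_0 = a\bigr] \;\leq\; (1-\gamma)\,\mathbb{E}\bigl[\mathbf{e}(S_1) - e^\star\bigr] \;\leq\; 0,
\end{equation*}
using the $v^{\mathbf{e}}_\pi$ bound and the definition of $e^\star$. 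Since this holds for every $\pi$, it holds in particular at $\pi = \pi_\omega$, so every primitive action at $s^\star$ is nonpositive, and the tie-breaking rule yields $\beta_\omega(s^\star) = 1$.

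I do not expect a genuine obstacle: the only delicate step is justifying that the tail term $\gamma^T \mathbf{e}(S_{T+1})$ drops out, which is where the hypothesis $\gamma \in [0,1)$ combined with finiteness of $\mathscr{S}$ is used. Finiteness also ensures that the maximiser $s^\star$ exists; ergodicity plays no role in this particular proof and appears only as a standing assumption. If desired, the argument extends verbatim to the linear function approximation setting by replacing $\mathbf{e}(s)$ with $\mathbf{e}^\top \boldsymbol{\phi}(s)$ and taking $s^\star$ to maximise $\mathbf{e}^\top\boldsymbol{\phi}(s)$.
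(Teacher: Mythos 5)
Your proof is correct, and it reaches the same destination as the paper's --- the state maximizing the potential $\mathbf{e}$ (equivalently $\mathbf{w}=\Phi\mathbf{e}$ in the paper's notation) must be terminal --- but by a more elementary route. The paper works in matrix form: it derives the identity $\mathbf{v}+\mathbf{w}=(1-\gamma)(\mathbf{I}-\gamma\mathbf{P_\pi})^{-1}\mathbf{P_\pi}\mathbf{w}$, invokes two auxiliary lemmas on induced infinity norms of $(\mathbf{I}-\gamma\mathbf{T})^{-1}\mathbf{T}$ to conclude $\|\mathbf{v}+\mathbf{w}\|_\infty\le\|\mathbf{w}\|_\infty$, then shifts $\mathbf{w}$ to be nonnegative and closes with a contradiction at the arg-max state. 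Your telescoping of the trajectory sum produces the pointwise bound $v^{\mathbf{e}}_\pi(s)\le e^\star-\mathbf{e}(s)$ directly, which is exactly the content of the paper's norm inequality but without the Neumann-series lemmas or the shift-and-contradiction step; the justification you give (absolute convergence from $\gamma<1$ and boundedness of $\mathbf{e}$ on a finite state space) is sound. You also make explicit a step the paper leaves implicit: the termination condition is stated in terms of $q^{\mathbf{e}}_\pi(s^\star,a)\le 0$ for all $a$, and the paper only establishes $v_{s^\star}\le 0$ (sufficient for the greedy option policy since $v_*=\max_a q_*$, but not spelled out); your one-step unrolling handles this for an arbitrary policy. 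Your observation that ergodicity is not actually used is also consistent with the paper's argument, which only needs $\|\mathbf{P_\pi}\|\le 1$. The trade-off is that the paper's matrix identity and norm lemmas are reused elsewhere (e.g.\ in the equivalence result with proto-value functions), whereas your argument is self-contained but local to this theorem.
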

\begin{proof}
See Appendix~\ref{app:proofs}.
\end{proof}

This result is due to the fact that the reward function resembles a potential function~\citep{Ng99}. It is not clear if it would be possible to obtain such a natural termination criteria if the agent maximized, for example, the feature itself, as in $r^{\bf{e}}(s) = {\bf e}^\top \bm{\phi}(s)$.

\begin{wrapfigure}{c}{0.45\columnwidth}
  \centering
  \includegraphics[width=0.45\columnwidth]{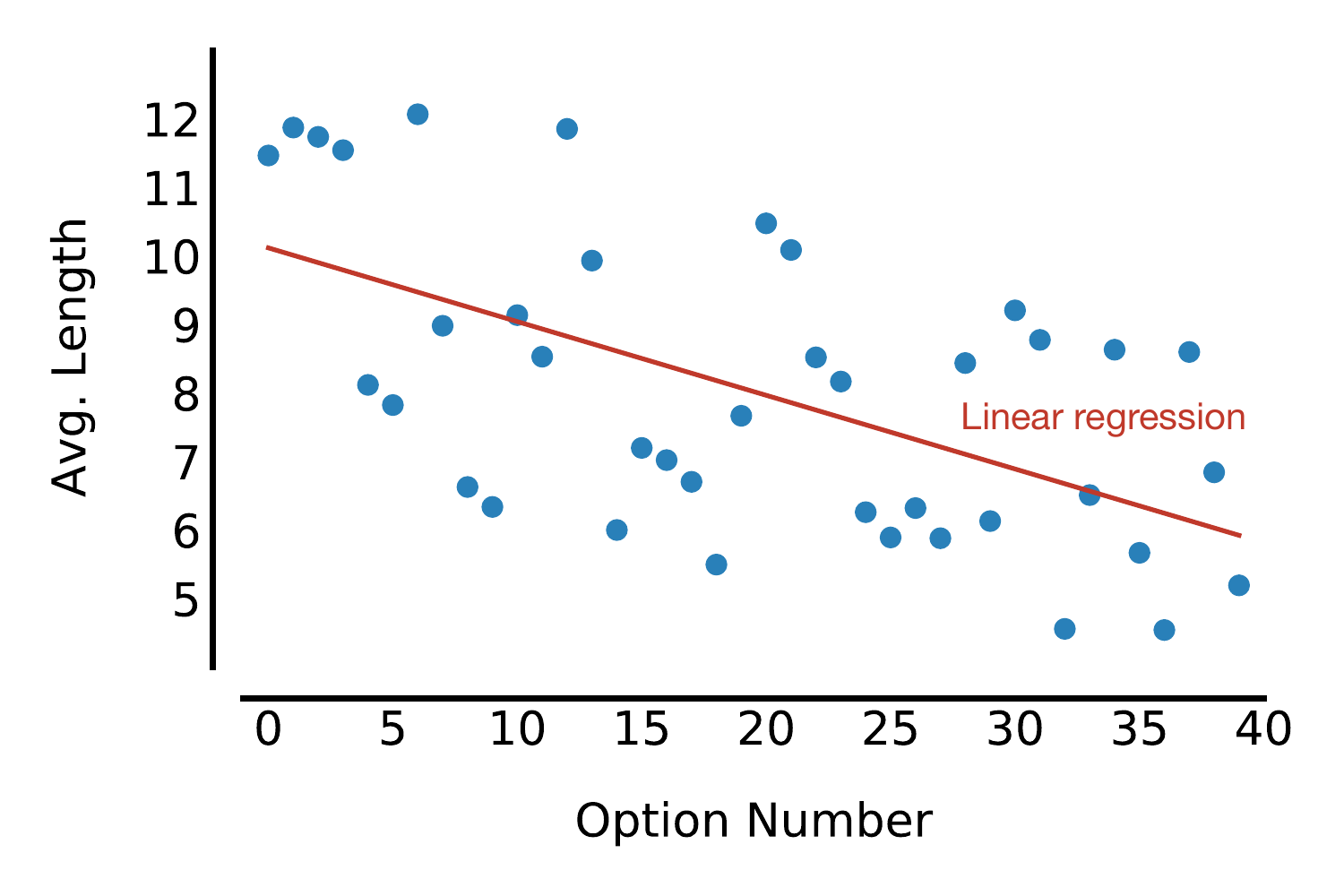}
    \caption{Avg. length of different eigen-options by the order they are discovered. The linear regression emphasizes the downward trend on the option length.}\label{fig:eigenoptions_length}
\end{wrapfigure}
Eigenoptions have several interesting properties that allow them to improve exploration. One of them is that different eigenoptions have different durations, effectively letting the agent operate at different time scales. We exemplify this in Figure~\ref{fig:eigenoptions_length}, where we show how the eigenoptions discovered first tend to be longer  (i.e., those discovered from eigenvectors with corresponding larger eigenvalues). We overlay a linear regression of the data to emphasize this trend. Besides their duration, eigenoptions can be easily sequenced, and they are task-independent because, as the SR, they do not depend on information related to the reward function. Nevertheless, there could be twice as many eigenoptions as states in the environment and it is not clear how to choose the number of desired options. Covering options \citep{Jinnai19}, discussed next, are an attempt to address this issue.

\subsection{Covering Options}
Covering options are options defined by the bottom eigenvector of the graph Laplacian (i.e., the first PVF), that is, the eigenvector with the smallest corresponding eigenvalue.\footnote{The eigenvector of the graph Laplacian with corresponding smallest eigenvalue is constant, so we refer to the second smallest one here.} They have the explicit goal of minimizing the environment's expected cover time---the number of steps required for a random walk to visit every state \citep{Broder89}. The intuition behind them is similar to the one behind eigenoptions, as they exploit the fact that the bottom eigenvector of the graph Laplacian captures the states that are furthest apart in the environment. Nevertheless, a covering option is defined as a \emph{point option}~\citep{Jinnai19b} connecting only two states. It explicit adds an edge to the graph representing the underlying MDP to connect the two furthest vertices in that graph, in an attempt to shrink its diameter. They are obtained with an iterative procedure in which, after the discovery of each option, the environment's underlying graph is updated and the option discovery procedure is executed again. Covering options are naturally defined by multiple iterations of the ROD cycle. Eigenoptions can be seen as adding multiple edges to the graph, connecting every state in the initiation set to one of the terminal states. Covering options are discovered one at a time and connect only two states.

The description of covering options becomes clearer with an example. In the four-room domain, the bottom eigenvector of the graph Laplacian, defined w.r.t. a uniform random policy, is depicted in Figure~\ref{fig:covering_option_example}, as well as its corresponding covering options---note the eigenvector is equivalent to the one depicted in Figure~\ref{fig:eigenoption}. In  this  environment,  the two states that are furthest apart are the states diagonally opposed in the corners. Covering options connect those two states, allowing the agent to easily navigate between them.
\vspace{0.3cm}

\begin{figure}[t]
    \centering
    \includegraphics[width=0.9\columnwidth]{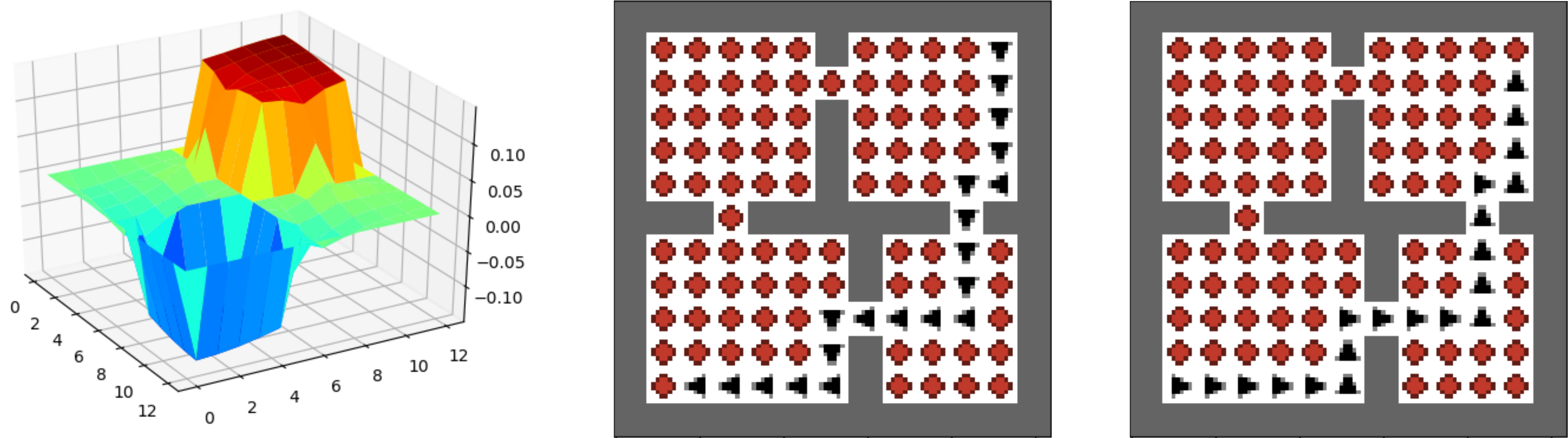}
    \caption{Second eigenvector of the graph Laplacian and corresponding covering options in the four-room domain. The actions of the deterministic policy are depicted as arrows and the state in which the option terminates is depicted in red.}\label{fig:covering_option_example}
\end{figure}

\emph{Learning the covering options' policies.} Formally, a covering option's policy is defined by the intrinsic reward function $r^{\bf{e}}(s, s')$ obtained from the non-constant eigenvector ${\bf e} \in \mathbb{R}^d$ of the graph Laplacian that has the smallest corresponding eigenvalue. It is defined as 
$$r^{\mathbf{e}}(s, s')=\begin{cases}
1,&\mbox{if}\quad \arg\max_{i} \ \mathbf{e}_i = s',\\
0, &\mbox{otherwise}.
\end{cases}
$$

In words, arriving at the state in which the eigenvector has the corresponding largest value leads to a reward of 1. The same reward function is applied to the negation of the eigenvector so the agent can also learn an option that leads to the state with smallest value.

This reward function is arguably simpler than the reward function used by eigenoptions, but it does not provide the agent with a gradient to follow when learning the option's policy, which might lead to exploration issues. We further discuss the pros and cons of each choice in the next sections.\\

\emph{Defining the covering options' initiation sets and termination conditions.} Because these are point options, their initiation sets and termination conditions are trivially defined. The initiation set has a single state, the one with corresponding smallest value in the considered eigenvector. Covering options terminate with probability $1$ when they reach the state with corresponding largest value. Formally, for a covering option $\omega^\mathbf{e}$,

\begin{align*}\beta_{\omega^\mathbf{e}}(s)=\begin{cases}
1,&\mbox{if}\quad \arg\max_i \mathbf{e}_i = s,\\
0, &\mbox{otherwise}.
\end{cases} && \mathcal{I}_{\omega^\mathbf{e}}(s)=\begin{cases}
1,&\mbox{if}\quad \arg\min_i \mathbf{e}_i = s,\\
0, &\mbox{otherwise}.
\end{cases}
\end{align*}
In which we overloaded the notation to have $\mathcal{I}_{\omega^\mathbf{e}}(s)$ denoting whether state $s$ belongs to the initiation set of option $\omega^\mathbf{e}$ or not. As before, we evaluate $\pm \mathbf{e}$. This can obviously lead to problems when reaching an individual state is unlikely (e.g., large/continuous state-spaces). Recently, \cite{Jinnai20} extended this notion to a region of the state-space.\\

\emph{Iterative discovery of covering options.} While the formulation of eigenoptions does not prescribe the number of options to be used, there always exist exactly two covering options associated with an SR. In order to get more covering options, one must compute an updated SR induced by the newly-added options and then use its dominant eigenvectors to compute two new options. This process can be repeated multiple times, resulting in an iterative procedure in which two covering options are added at each iteration. Thus, while eigenoptions work well with a single iteration of the ROD cycle, learning multiple options in parallel, covering options require a much lighter iteration, but several iterations of the ROD cycle. Algorithm~\ref{alg:co_closed_form} and~\ref{alg:co_online}, in Appendix~\ref{sec:pseudocode}, summarize the presentation of covering options when computed in both closed-form and online.

Importantly, every iteration is guaranteed to improve the upper bound of the expected number
of steps an agent would need, when following a uniform random policy, in order to visit every state in the environment. We refer the reader to the work by \citet{Jinnai19} for the formal statement behind this result.

Covering options provide a simpler approach for option discovery. This approach discovers a fixed number of options at each iteration, the intrinsic reward function the agent needs to maximize is trivial, as well as the definitions of the initiation set and termination condition. Nevertheless, it is not clear that this simplicity implies better empirical performance. Is it empirically better to use the full spectrum of the graph Laplacian or only the bottom eigenvector? Do covering options face an exploration issue when learning the option's policy? Are point options more effective than options defined over most of the state space? In the next section, we empirically evaluate these different choices option discovery methods have.

\section{Evaluation of Temporally-Extended Exploration with Options}~\label{sec:experiments_temporally_extended_exploration}

\citet{Machado17,Machado18b} and \citet{Jinnai19,Jinnai20} have already presented empirical evidence about the efficacy of the approaches discussed in the previous section. Thus, we focus instead on comparing components of these approaches. These comparisons elicit fundamental questions surrounding option discovery, such as the impact of different initiation sets, how to best use a limited number of samples observed by the agent, and on the trade-offs posed by different numbers of options available to the agent at a given moment. We evaluate these choices in the context of temporally-extended exploration, when using options to provide persistent behaviors inside uniformly random policies in both closed-form and online settings. This analysis also allows us to highlight different choices that can be made when instantiating the ROD cycle.

Because we are interested in the agent's exploration capabilities in a given environment, irrespective of a specific reward function, we use the \emph{diffusion time} as the main evaluation metric. The diffusion time reflects the expected number of decisions\footnote{We use the term \emph{decisions} instead of steps to estimate the likelihood that a sequence of random choices (i.e., options or primitive actions) will lead to the desired state.} required to navigate between any two states while following a random walk~\citep{Dayan92,Machado17}. A small expected number of decisions implies  that the agent is more likely to reach any state with a random policy. The diffusion time captures the agent's ability to learn about the structure of the environment, and it is particularly relevant in settings without a single, fixed task, such as continual~\citep{Brunskill14,Mankowitz18}, multitask~\citep{Teh17}, and transfer learning~\citep{Taylor09}. Moreover, the diffusion time allows us to  summarize more easily a large number of results. While we will use this metric throughout most of the paper, in Section~\ref{sec:reward_max} we also evaluate how eigenoptions and covering options impact the speed at which agents accumulate rewards.

In tabular domains, we can easily compute the diffusion time with dynamic programming. We do so by defining an MDP in which the value function of a state $s$, under a uniform random policy, encodes the expected number of decisions required to navigate between state $s$ and a chosen goal state. We compute the expected number of decisions between any two states by setting one as the goal and checking the value of the other. We then compute the expected number of decisions across the entire state space by averaging over all possible pairs of the initial state and the goal state. The MDP in which the value function of state $s$ encodes the expected number of decisions from $s$ to a goal state has $\gamma = 1$ and a reward function of $+1$ at every decision that does not lead to the goal state. Policy evaluation computes the expected number of decisions the agent will take before arriving to the goal state. When computing the diffusion time, we iterate over all possible states, defining them as terminal states. We report both average and median as summary statistics of diffusion time.

To provide a direct comparison between eigenoptions and covering options, in Section~\ref{sec:diffusion_time} we evaluate the agent's diffusion time induced by these approaches. In Section~\ref{sec:reward_max}, we check how the insights obtained from this comparison impact reward maximization. In Section~\ref{sec:ablation_exploration}, we evaluate the impact of different approaches for defining the options' initiation sets and termination conditions, and of using the whole eigenspectrum of the successor representation. We report our last set of experiments in Section~\ref{sec:online_experiments}, when we discuss how these algorithms behave in the online setting. We summarize our findings in Section~\ref{sec:summary_experiments_exploration}.

\subsection{Diffusion Time of Eigenoptions and Covering Options}~\label{sec:diffusion_time}

We report the diffusion time obtained by eigenoptions and covering options in Figure~\ref{fig:comparison_diffusion_time}. We use the four-room domain~\citep{Sutton99}, which we implemented with Gym-Minigrid~\citep{gym_minigrid}. For simplicity, we consider deterministic transitions and we compute both sets of options in closed form.

\begin{figure}[t]
     \centering
         \includegraphics[width=\textwidth]{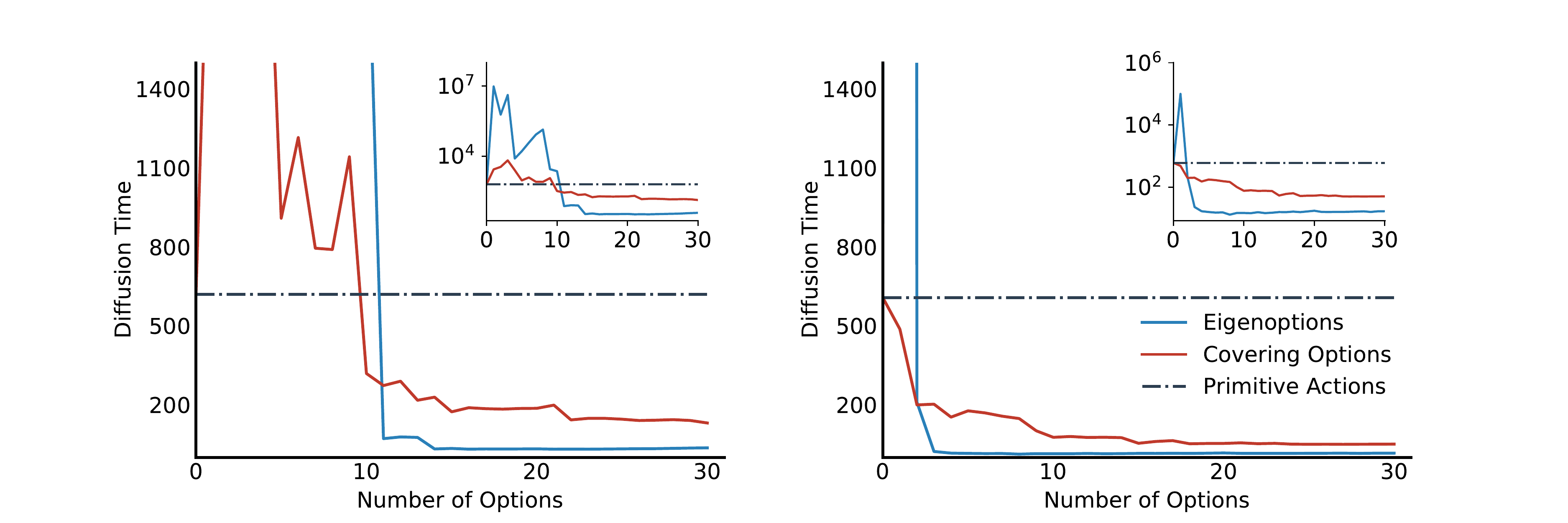}
     \caption{Comparison between the agent's average (left) and median (right) diffusion time in the four-room domain when using covering options and eigenoptions. The inset plots depict, in log-scale, the range in which the diffusion time lies.}
     \label{fig:comparison_diffusion_time}
\end{figure}

The average diffusion time we report for eigenoptions is similar to what \citet{Machado17} reports. Covering options had not been evaluated under this metric yet. We observe that after a sufficient number of options becomes available, eigenoptions lead to a smaller diffusion time, although they lead to much worse performance with fewer options. Figure~\ref{fig:comparison_diffusion_time} shows that once options start to outperform primitive actions, given the same number of options, eigenoptions lead to a lower diffusion time. This suggests that  it is better to use options derived from, say, the top ten eigenvectors of the SR, as done by eigenoptions, than to use the top eigenvector of the SR in ten successive iterations, as done by covering options.\footnote{In these experiments, for covering options, we used the eigenvectors of the Laplacian because we are computing them in closed form and also because, differently than the SR, it does not lead to eigenvectors with a complex component. We further discuss this issue in Section~\ref{sec:online_experiments} and Appendix~\ref{app:mismatch_co_sr_pvfs}.} This is supported by other results that also show the benefits of looking beyond the first eigenvector of time-based representations~\citep{Bar20}.

Aside from the number of eigenvectors they use, another difference between eigenoptions and covering options is with respect to how the options are defined. While an eigenoption is defined in the whole state space, the initiation set of a covering option has a single state. The main consequence of this is that at the beginning, before the options sufficiently connect the underlying graph, eigenoptions tend to create sink states. The agent needs to take enough random actions to reach a state while also \emph{not} choosing, by chance, options that move it away from the desired state. This explains the much worse performance of eigenoptions when fewer options are added. Point options have a less ubiquitous effect.

There is a stark difference between the reported average and median diffusion time. The average diffusion time is heavily impacted by outliers while the median diffusion time is more representative of performance for a random pair of states. As options are added, most of the states become easier to reach. However, without enough options, it is very difficult to reach states that are far from the options' terminal states. The average diffusion time captures this dichotomy. It has very high values at first because the options (mainly eigenoptions) make some states almost impossible to reach. The median is not impacted by these worst-case scenarios. It is particularly impressive to see that covering options, even with a single option, are already capable of reducing an agent's median diffusion time.

\subsection{Maximizing Rewards after Temporally-Extended Exploration}~\label{sec:reward_max}

A natural question to ask is how eigenoptions and covering options impact an agent's ability to accumulate reward in a single, fixed task. We answer this question in the setting in which agents learn the values of primitive actions while being allowed to also act according to the options' policies. In this setting, options do not incur an additional cost in terms of sample complexity~\citep{Brunskill14} because the agent does not learn state-option values. Nevertheless,  the agent is still able to exhibit temporally extended-exploration when acting with respect to an option's policy. Specifically, we use Q-learning~\citep{Watkins92} to learn the agent's policy with $\epsilon$-greedy exploration. When an exploratory step is chosen, the agent randomly chooses amongst all primitive actions and options with equal probability. When an option is selected, the agent acts according to the option's policy until it terminates, while updating, off-policy, the value of each of the primitive actions taken. Additionally, this evaluation allows us to evaluate the setting in which the number of steps taken by the agent matter. While the diffusion time ignores the set of states visited by the agent when following an option, the off-policy updates we use do not.

We evaluated our agents in the four-room domain with the agent having access to a varied number of options. We used ten tasks defined by different, randomly sampled, start and goal states. Episodes were at most 1,000 steps long and we evaluated the agents for 50 episodes. The agent observes a reward signal of $0$ until reaching the goal, when it observes a reward signal of $+1$. The Q-learning parameters we use are $\alpha=0.1$, $\gamma=0.9$, and $\epsilon=0.05$. We use the options pre-computed from the previous section, adding them to  the agent's option set according to their corresponding eigenvalue (or iteration), as previously discussed.

Figure~\ref{fig:comparison_return} depicts the performance of an agent augmented with eigenoptions or covering options. We chose a representative setting amongst the ten tasks. The results for the other nine tasks can be found in Appendix~\ref{sec:appendix_accum_return}. We observe that, for eigenoptions, as few as four options are enough to accelerate learning. We did not observe four eigenoptions hurting agent's performance in any of the ten tasks we randomly sampled. On the other hand, surprisingly, covering options did not improve the agent's performance. This was consistent across the ten tasks. We conjecture this is due to the sparse initiation set that reduces the effectiveness of covering options in the online setting because they are rarely sampled. Although this may be alleviated by the use of an agent that also learns option values, this strategy can have unforeseen consequences, such as a higher sample complexity for learning the optimal policy.

\begin{figure}[t]
     \centering
     \begin{subfigure}[b]{0.22\textwidth}
         \centering
         \raisebox{.3cm}{\includegraphics[width=\textwidth]{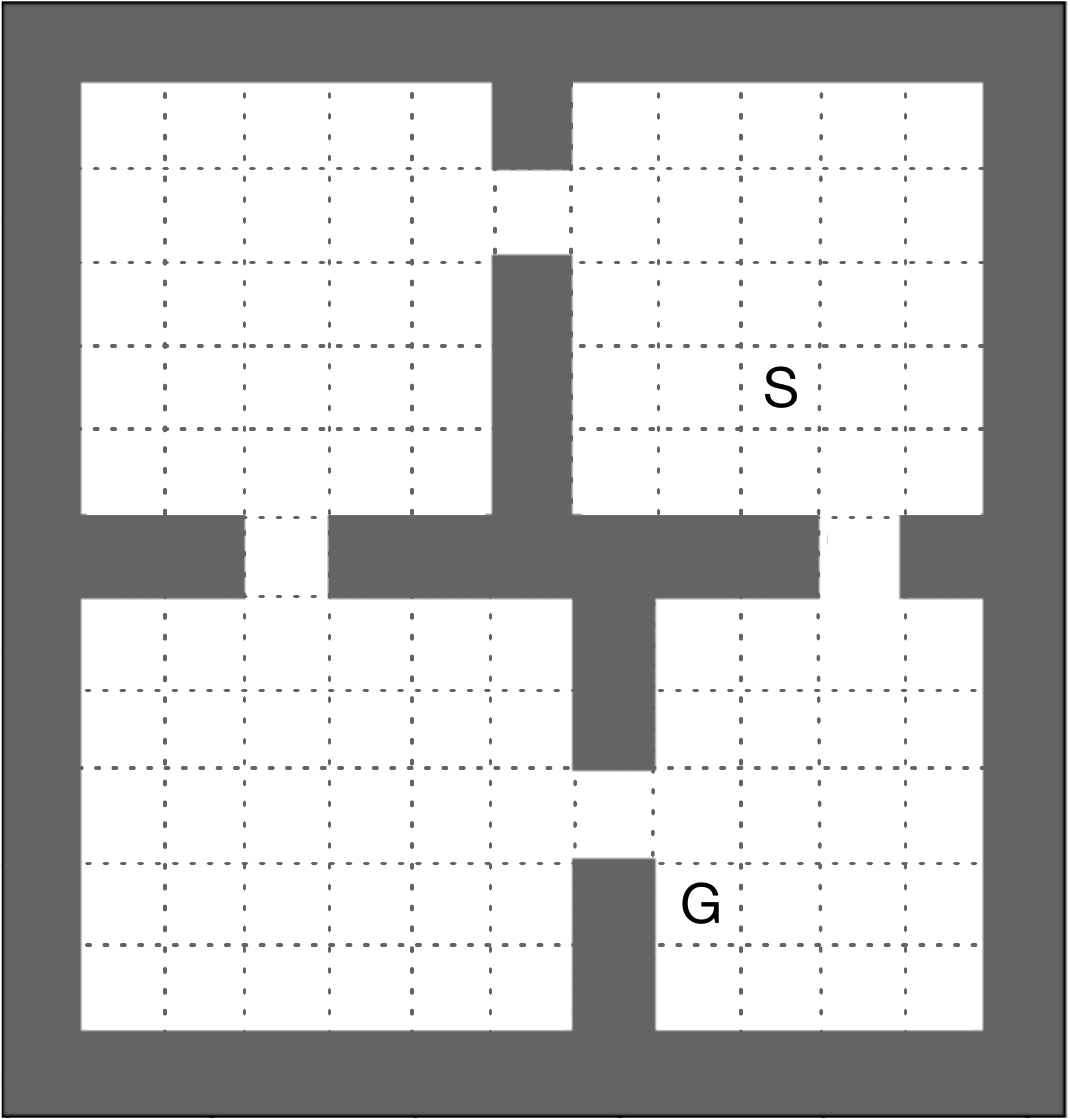}}
         \caption{Task}
     \end{subfigure}
     \hfill
     \begin{subfigure}[b]{0.37\textwidth}
         \centering
         \includegraphics[width=\textwidth]{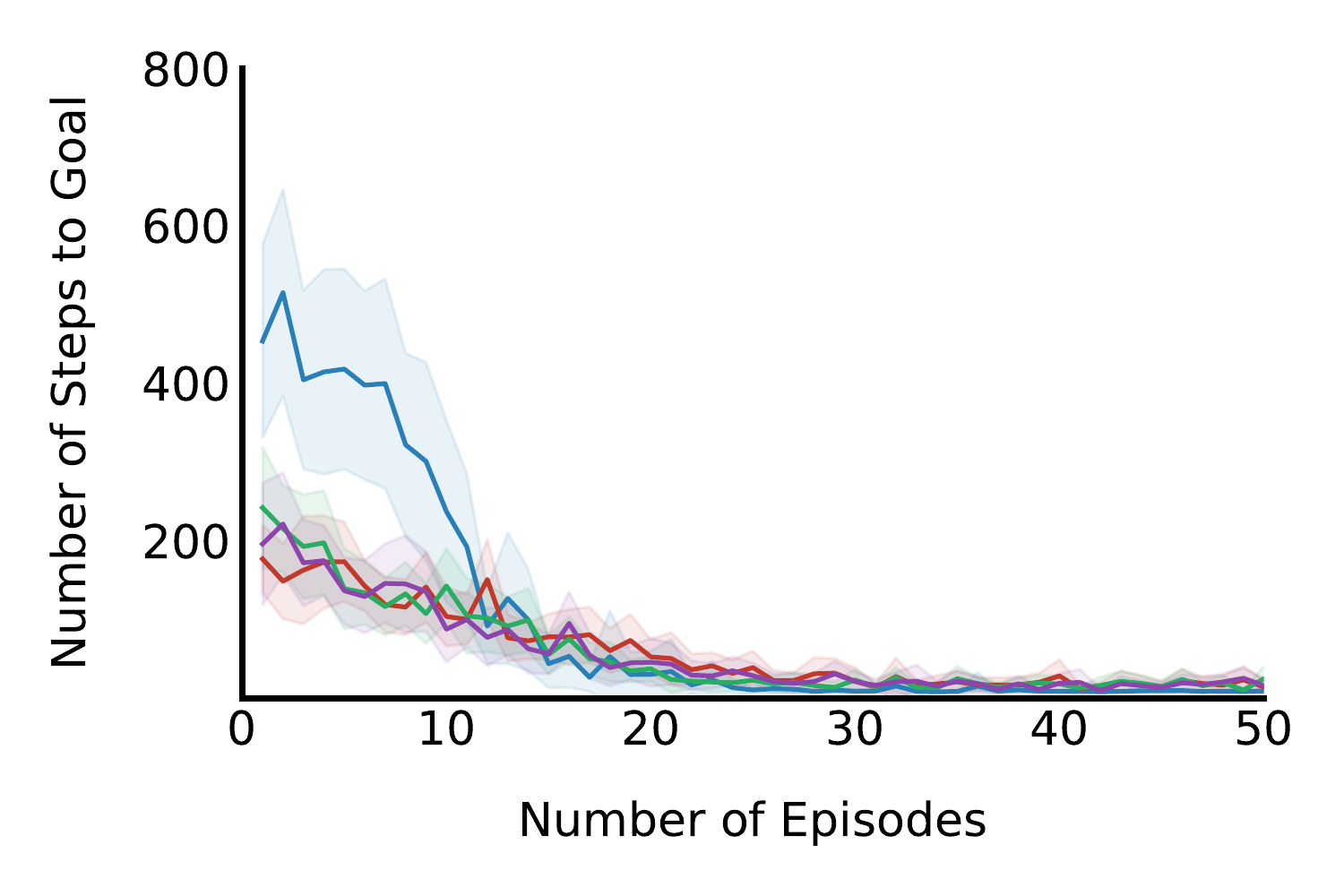}
         \caption{Eigenoptions}
     \end{subfigure}
     \hfill
     \begin{subfigure}[b]{0.37\textwidth}
         \centering
         \includegraphics[width=\textwidth]{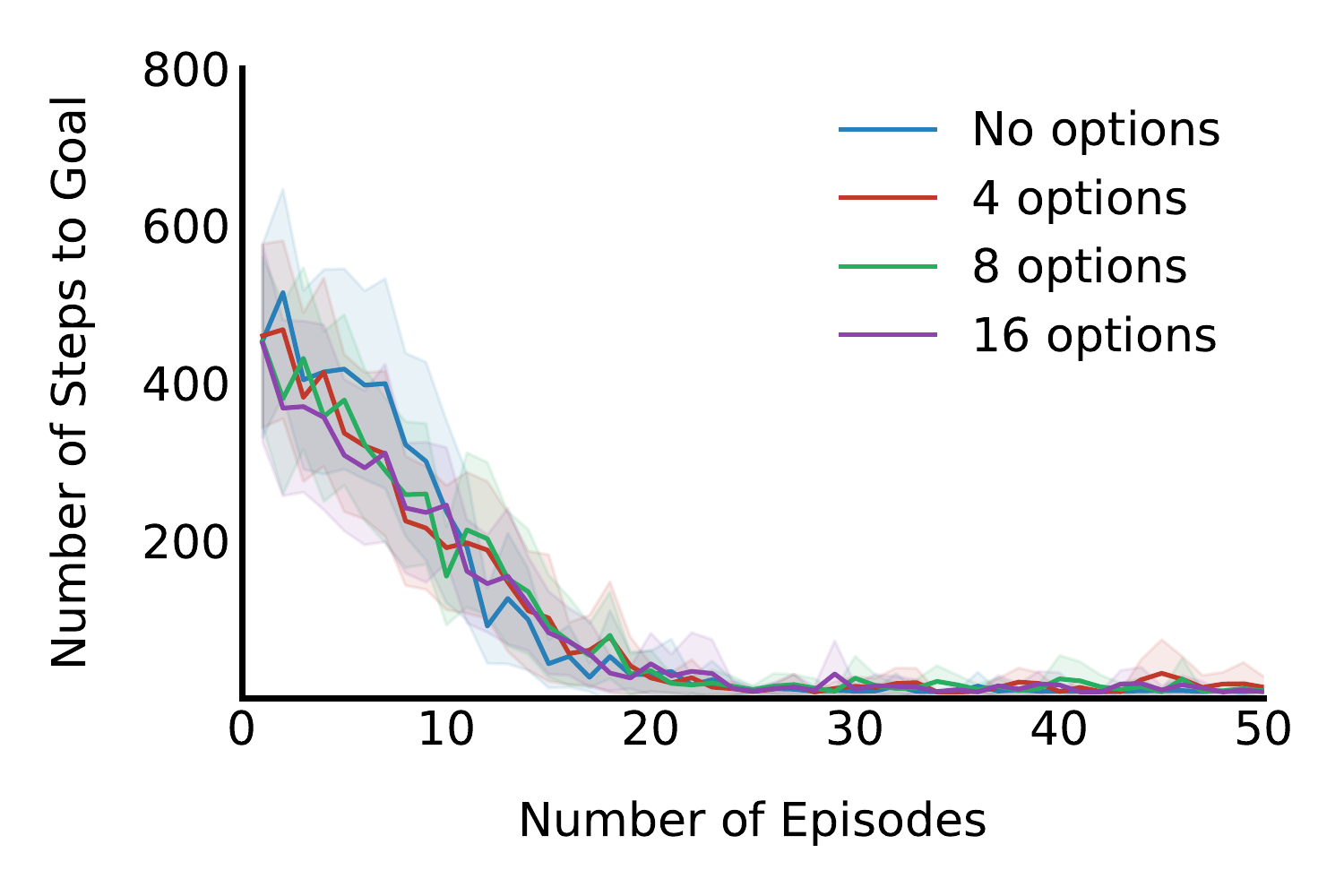}
         \caption{Covering options}
     \end{subfigure}
     \caption{Performance of Q-learning augmented with eigenoptions and covering options. In the task, \textsf{S} and \textsf{G} denote the start and goal state. Results are averaged across 50 runs and shaded regions denote a 99\% confidence interval. See text for details.}
     \label{fig:comparison_return}
\end{figure}

These results can also be interpreted in light of \citeauthor{Liu18}'s (\citeyear{Liu18}) work, which characterizes the properties of an environment that make exploration hard or easy. Informally, \citeauthor{Liu18} show that, if, asymptotically, a random walk can explore well, then it is possible to obtain a polynomial sample complexity bound for finite sample exploration. We conjecture eigenoptions transform the problem to one where random walks are more effective, making the problem more amenable to $\epsilon$-greedy exploration. Covering options, on the other hand, fail to make random walks more effective. This is  due to the fact that each option is available in a single state, and, when the agent happens to be in that state, it still needs to sample the option instead of primitive actions. Recent extensions of covering options to problems that require function approximation define the  initiation set as a region of the state space instead of a single state \citep{Jinnai20}.

\subsection{The Impact of Different Initiation Sets and Uses of the Eigenspectrum}~\label{sec:ablation_exploration}

In this section, we quantify the impact that different choices have when designing option discovery algorithms for temporally-extended exploration. We answer the following questions:
\begin{itemize}
  \setlength\itemsep{1pt}
    \item Is it better to have options that are broadly available, i.e., with large initiation sets?
    \item Is it beneficial to use more than one eigenvector of the SR when discovering options?
\end{itemize}

We ask these questions because eigenoptions and covering options are based on the same principles but, as previously discussed, their effectiveness is not the same. These questions are motivated by how these methods differ.
 
We use a factorial design to evaluate the impact of these different choices, which are outlined in Table~\ref{tab:variations}. We use the diffusion time induced by the discovered options as evaluation metric. As aforementioned, the diffusion time is task-agnostic and it concisely describes the effectiveness of different sets of options across multiple potential tasks by assessing how options capture relevant properties of the environment. We compare, given the same number of options, how the diffusion time induced by different methods varies.

\setlength{\dashlinedash}{0.5pt}
\setlength{\dashlinegap}{1pt}
\begin{table}[t]
    \small
    \centering
    \begin{tabular}{l c c}
    \toprule
    \multirow{2}{5cm}{\textbf{Algorithm description}} & \multirow{2}{2cm}{\textbf{Single iteration}} & \multirow{2}{3.7cm}{\textbf{Options broadly available (init. set)}}\\
                     & & \\
    \hline\\[-0.2cm] 
        
         Covering options (CO) & {\textsc{No}} &{\textsc{No}}\\[0.05cm]
         \hdashline\\[-0.35cm]
         CO w/ Broad Initiation Set & {\textsc{No}} &{\textsc{Yes}}\\[0.05cm]
         \hdashline\\[-0.35cm]
         Point-based Eigenoptions & {\textsc{Yes}} &{\textsc{No}}\\[0.05cm]
         \hdashline\\[-0.35cm]
         Eigenoptions & {\textsc{Yes}} &{\textsc{Yes}}\\[0.1cm]
    \bottomrule
    \end{tabular}
    \caption{The different dimensions of variation between eigenoptions and covering options. Covering options with a broad initiation set are defined such that the terminal state of these options is the same as in covering options, but the initiation set, instead of being a single state, is now every state that is not terminal. Conversely, point-based eigenoptions are eigenoptions that have a single start state, defined to be the state with smallest value in the corresponding eigenvector.}
    \label{tab:variations}
\end{table}

Figure~\ref{fig:comparison_diffusion_time_ablation} depicts the performance of the algorithms described in Table~\ref{tab:variations}. These results show that a broad initiation set eventually leads to a smaller diffusion time but, until a minimum number of options is available, they hinder performance. Eigenoptions, for example, obtain the smallest diffusion time but requires more options before doing so---the same applies, in a smaller scale, to using covering options with a large (broad) initiation set. There is a bigger difference in the median diffusion time, as point-based options always reduce the median diffusion time while broad initiation sets can create attractor states that are difficult to escape from. Additionally, another trade-off to be considered is that, depending on how the options are used, a small initiation reduces the likelihood agents will have access to the corresponding options.

In the setting we analyzed here, additional eigenvectors provide benefits when compared to using only the top eigenvector of the SR, even when we discard the cost of collecting more samples at each iteration. Notice that, because we use the closed-form solution, it is as if the agent had covered the whole state-space before starting the option discovery process. We explore the setting in which the agent does not cover the whole state-space before the option discovery step in Section~\ref{sec:illustration_rod_cycle}, which is a setting in which the benefit of multiple iterations of the ROD cycle is clearer.

\begin{figure}[t]
     \centering
     \includegraphics[width=\textwidth]{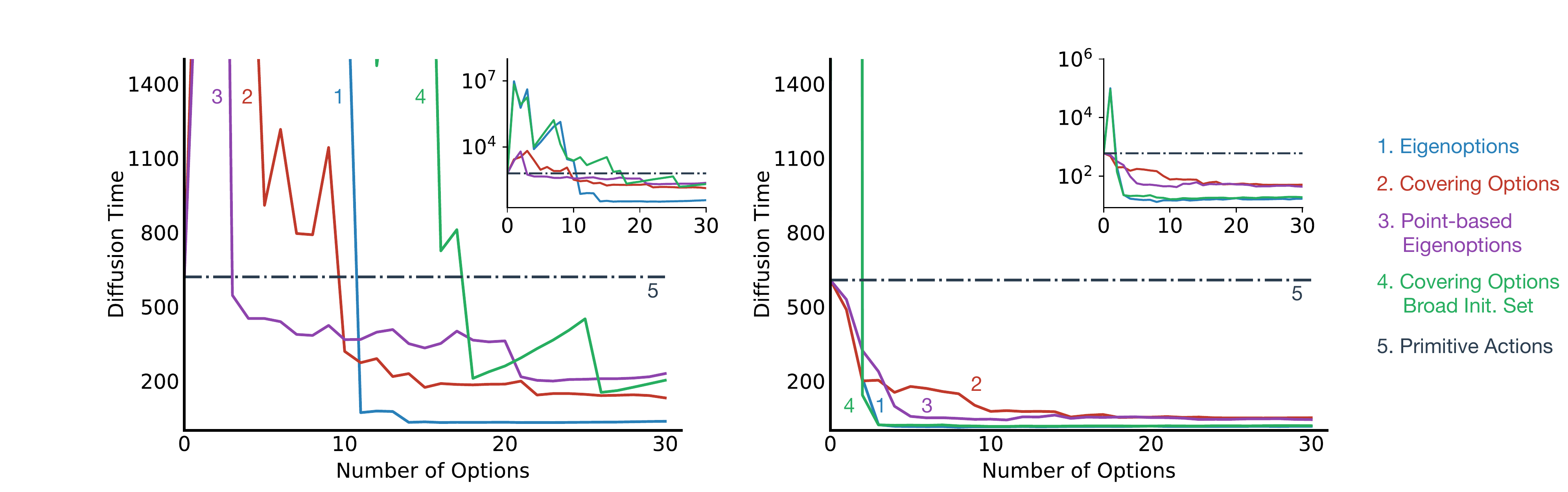}
     \caption{Average (left) and median (right) diffusion time in the four-room domain induced by the options generated with the algorithms in Table~\ref{tab:variations}. The inset plots depict, in log-scale, the range the diffusion time lies. See text for details.}
     \label{fig:comparison_diffusion_time_ablation}
\end{figure}

We can also analyze other successes of temporally extended-exploration in light of the insights we obtained here. \citet{Dabney20} recently introduced $\epsilon z$-greedy, an extended form of $\epsilon$-greedy exploration in which an agent, when taking an exploratory action, repeats the sampled action for a random duration, often sampled from a zeta distribution. Such an approach achieves remarkable success in standard benchmarks such as Atari 2600 games. Despite not being based on the successor representation, such an approach gives us evidence to support our conclusions. In $\epsilon z$-greedy exploration, the agent is allowed to sample an extended sequence of actions at any time, corroborating the intuition that restrictive initiation sets might prevent the agent from exploring the environment. Moreover, the duration in which the sampled action will be repeated is random, showing the benefits of varying time-scales, one of the features we get from using additional eigenvectors of the SR (c.f. Figure~\ref{fig:eigenoptions_length}). Similarly, the temporally-extended exploration used when learning to control superpressure balloons in the stratosphere~\citep{Bellemare20} is also based on options that are not constrained to a small region of the state space and they too have varying duration.

The results in this section also raise the question of whether multiple iterations of the ROD cycle improve an agent's ability to explore. As one can expect, this is indeed true, as we can see how, for covering options, additional iterations lead to better results. In Section~\ref{sec:illustration_rod_cycle}, we provide a different illustration where we use the insights gained here to design an algorithm that performs multiple iterations of the ROD cycle. Before we do so though, we assess the impact of discovering options online, which is an important aspect of any iterative cycle.

\subsection{Online Option Discovery}~\label{sec:online_experiments}

The results so far were obtained in closed-form, which can only be achieved after the agent has thoroughly explored the environment. This allowed us to evaluate algorithmic ideas in a conceptually simpler way, without approximation errors. However, this is not a realistic setting. In this section, we evaluate the impact of using options discovered from online estimates of the SR, instead of assuming access to an adjacency matrix representing the environment. We use TD learning to estimate the SR from samples, which allows us to compute options before the environment has been exhaustively explored.

Eigenoptions are robust to using online estimates of the SR, as one can see in Figure~\ref{fig:comparison_dt_eigenoptions_online}. This result is similar to \citeauthor{Machado18b}'s~(\citeyear{Machado18b}). To minimize the number of interactions with the environment, we re-use the data used to compute the SR to learn the eigenoptions' policies (c.f. Algorithm~\ref{alg:sr_closed_form}). We use Q-learning to learn these policies. The agent always starts at the bottom left corner. Episodes were 1,000 steps long and we used a step-size $\eta = 0.1$ and $\gamma=0.9$ to learn the SR.

\begin{figure}[t]
     \centering
     \includegraphics[width=\textwidth]{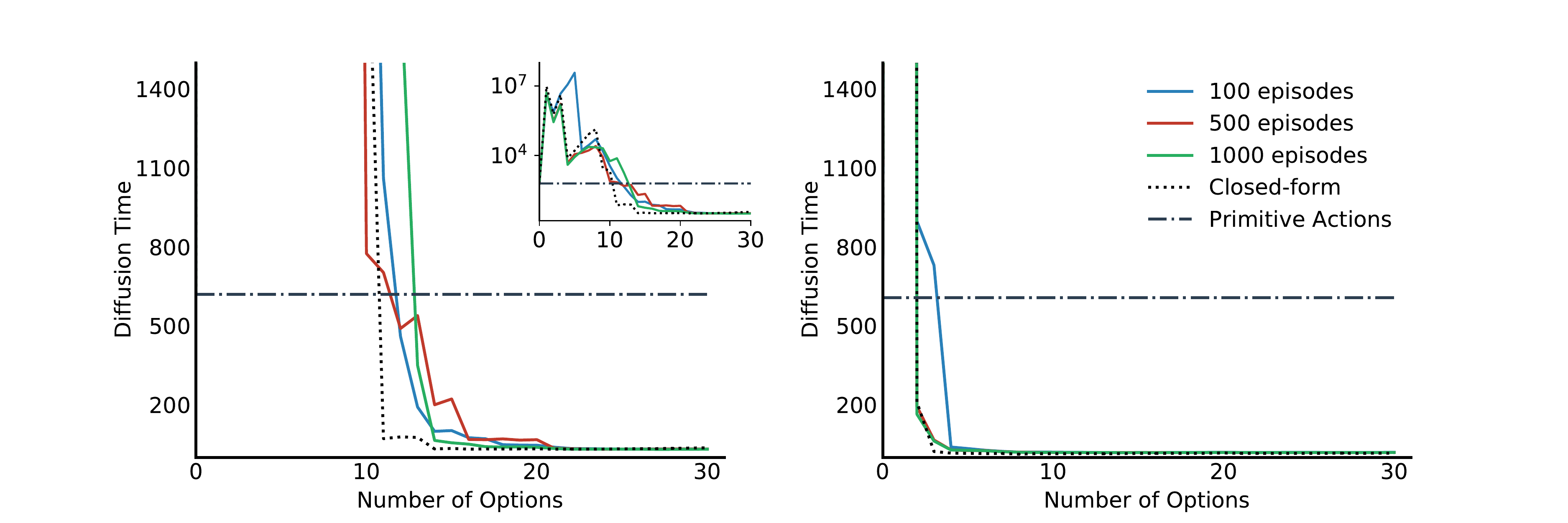}
     \caption{Average (left) and median (right) diffusion time, in the four-room domain, induced by eigenoptions computed with the SR, online and in closed-form. Episodes were 1,000 steps long. We report, averaged across 50 different runs, the impact of using different number of episodes for computing the SR online. The inset plot on the left figure depicts, in log-scale, the range the diffusion~time~lies.} 
     \label{fig:comparison_dt_eigenoptions_online}
\end{figure}

Surprisingly, covering options are not as effective in the setting in which the representation is learned online. This is depicted in Figure~\ref{fig:comparison_dt_co_online}. While the results for the median diffusion time are consistent with the other results in this section, the results for the average diffusion time are not. Covering options are not able to reduce the average diffusion time in the environment; in fact, they increase it substantially. The reason becomes clearer when we look at the first options discovered. When estimating the SR online,\footnote{When the agent selects an option, we ignore individual transitions when estimating the SR, assuming the the agent ``teleports'' to the state in which the option terminates. This mimics the closed-form solution, which sees an option as creating an edge between two vertices. Moreover, we observed that estimating the SR from individual transitions leads to much worse results.} the agent rarely samples an option because it is available in a single state. This leads to similar options being discovered in multiple iterations. Moreover, covering options are less robust because they rely only on the maximum and minimum values of the top eigenvector of the SR to define the state in which an option is available. Using a single eigenvector adds to this brittleness because the agent cannot rely on other eigenvectors to capture different dimensions of the state space, or to correct for an option that captures the wrong timescale at which the agent should act. As an example, the top two eigenvectors of the SR capture the two diagonals of the four-room domain (c.f. Figure~\ref{fig:covering_option_example}). If an agent tries to use the top eigenvector of the SR to capture, in two different iterations, these two diagonals, it may fail to do so if the random walk of the agent in the second iteration does not sample the option that navigates the dimension the first iteration captured. The mismatch between the closed-form as online solutions only increases in later iterations---Appendix~\ref{sec:appendix_evolution} presents visualizations of eigenoptions and covering options learned online.

\begin{figure}[t]
     \centering
     \includegraphics[width=\textwidth]{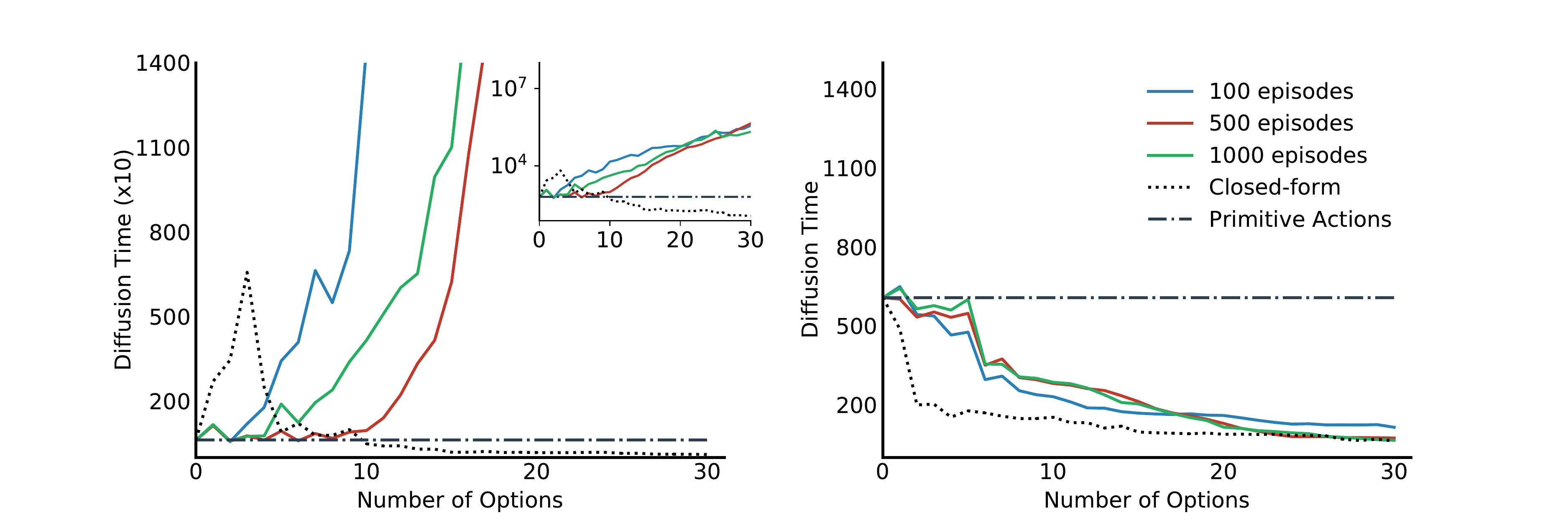}
     \caption{Average (left) and median (right) diffusion time, in the four-room domain, induced by covering options computed with the SR, online and in closed-form. Episodes were 1,000 steps long. We report, averaged across 50 different runs, the impact of using different number of episodes for computing the SR online. The inset plot on the left figure depicts, in log-scale, the range the diffusion time lies. The y-axis in the left plot is scaled by 10 in comparison to Figure~\ref{fig:comparison_dt_eigenoptions_online} to depict the average diffusion time induced by the covering options discovered online.}
     \label{fig:comparison_dt_co_online}
\end{figure}

Finally, using the SR to compute covering options can violate the symmetry assumption of Theorem~\ref{th:equivalence}. After the first iteration of covering options, in some states the agent has access to four primitive actions and one option, while in others only four actions are available. This mismatch is interesting from a theoretical point of view, but cannot be used to justify the poor performance we observed. At least in the four-room domain, this mismatch does not meaningfully impact the diffusion time induced by covering options discovered from the SR. The empirical analysis of the impact of this mismatch is available in Appendix~\ref{app:mismatch_co_sr_pvfs}, where we compare the diffusion time induced by covering options when using the eigenvectors of the SR and proto-value functions, and we show it is minimal.

\subsection{Summary}~\label{sec:summary_experiments_exploration}

In this section, we showed that eigenoptions and covering options reduce the diffusion time in a given environment, meaning that they capture properties of the environment that are useful when subsequent tasks are posed to the agent. Eigenoptions do so both when computed in closed-form and online, while covering options only do so when discovered in closed-form, the setting they were originally introduced. We also investigated the impact of the different choices these approaches make to understand their overall impact in temporally-extended exploration with options. Specifically, whether it is beneficial to discover more than one option per iteration, with our results showing that discovering multiple options leads to a lower diffusion time, more robust solutions, and a more judicious sample use. Moreover, we evaluated different approaches for defining options' initiation sets and termination conditions. Our results highlight an interesting trade-off: while making options available in fewer states avoids the creation of sink states that increase the diffusion time, these options end up not being so useful to the agent when learning to maximize rewards. They can also be quite detrimental in the online setting.

The results in this section also raise several interesting questions. In the next section we use the insights gained here to introduce an algorithm that illustrates multiple iterations of the ROD cycle, and in Sections~\ref{sec:options_keyboard} and~\ref{sec:experiments_ok} we explore the possibility of linearly combining eigenoptions, without additional sample complexity, in order to obtain more diverse behavior. Other questions we leave for future work revolve around, for example, combining the benefits of eigenoptions and covering options, maybe through an ensemble of both types of options; or on how to chain such options.

\section{Iterative Option Discovery with the ROD Cycle}~\label{sec:illustration_rod_cycle}

So far, we have investigated settings in which either the agent has access to the true SR or it is able to learn an accurate estimate of the SR at each iteration~of~the~ROD~cycle. It is hard to see the benefits of multiple iterations of the ROD cycle in these settings because a single iteration is already enough for the agent to learn the SR accurately. We now look at the more challenging (and realistic) setting in which it is impossible for the agent to visit every state of the environment in a single iteration. This allows us to validate the claim that options can be used to generate a different distribution of state visitation, which leads to different representations, which further impacts subsequently discovered options, in a virtuous, never-ending, cycle.

\subsection{Iterative Online Eigenoption Discovery}

Inspired by the results in the previous section, such as the importance of a large initiation set and the dangers of sink nodes, we introduce \emph{covering eigenoptions (CEO)}. CEO is an algorithm for option discovery that illustrates the benefits of multiple iterations of the ROD cycle. It combines the best design decisions of both eigenoptions and covering options~for~the~online~case.

CEO discovers one eigenoption per iteration, slowly increasing the size of the option set. It uses the top eigenvector of the SR to define such an eigenoption but, differently than covering options, the SR it uses is defined only over primitive actions.  At each time step, actions or options (if available) are randomly selected. Options are sampled with a smaller probability than primitive actions to ensure they do not dominate the agent's behavior. 

At first, discovering one option per iteration might seem surprising in light of the results in the previous section suggesting that discovering more options per iteration is beneficial. However, such a choice stems from the difference in problem setting: now that options are sampled, it is important to acknowledge that any algorithm based on the ROD cycle will end up with a growing set of options. If too many options are added at each iteration, the probability of sampling any individual option becomes vanishingly small. This is in contrast with the previous evaluation that was mostly focused on how the discovered options change the topology of the environment. Finally, by discovering eigenoptions at each iteration, due to their large initiation set, we increase the likelihood that the options will actually be sampled, minimizing the chance an iteration is wasted, as sometimes happens with covering options (c.f. Section~\ref{sec:online_experiments}). Algorithm~\ref{alg:rod_eigenoptions}, in the next page, depicts the pseudo-code for CEO. We explicitly discuss our choices for each each step of the ROD cycle below. \vspace{0.3cm}

\emph{Collect samples.} The agent collects samples by randomly interacting with the environment. In the first iteration, only primitive actions are available to the agent; later, options also become available. If an option is sampled, the agent acts according to that option's policy until termination. We save the observed transitions in a data set. We consider transitions in terms of primitive actions, even when they are induced by an option. Importantly, the probability of sampling an option, $p_{option}$, should be lower than the probability that a primitive action is sampled to account for the fact that options have a longer duration. \vspace{0.3cm}

\emph{Learn representation.} This step consists in learning the successor representation from the samples gathered by the agent. One important aspect to the success of the introduced algorithm is that it never throws away any data, meaning that it is constantly refining the SR instead of learning it from scratch at every iteration. \vspace{0.3cm}

\emph{Derive an intrinsic reward function from the learned representation.} Having access to the SR learned online, the agent now derives an intrinsic reward function it will use to learn the option's policy. We use the reward function defined by eigenoptions. Importantly, we only consider one direction for the eigenvector: to incentivize the agent to go to states that it has not visited much, it is important to pre-determine the direction of the eigenvector, otherwise one of the generated eigenoptions is the one that takes the agent to the place it has visited the most. Obviously, it is important to avoid such a behavior when options are sampled because we want to incentivize the agent to visit places it has not visited much. For this step, we choose the direction of the eigenvector such that $\sum_i \mathbf{e}(i) < 0$. Inspired by covering options, we use only the top eigenvector of the SR. \vspace{0.3cm}

\emph{Learn to maximize intrinsic reward.} We use Q-Learning to learn how to maximize the intrinsic reward defined above. Because the intrinsic reward is only defined in states which the agent has visited before, letting the agent learn from scratch how to maximize such reward might lead the agent to visit states in which the reward function is not defined. Thus, instead, we use the transitions in the data set we collected in the sample collection step. Such an approach saves agent-environment interactions and guarantees the agent will learn a policy taking only previously visited states into consideration.
\vspace{0.3cm}

\emph{Define option.} Finally, we define the option's initiation set and termination condition following the eigenoptions description in Section~\ref{sec:temporally_extended_exploration}, adding the new option to the option set.

\begin{algorithm}[p]
\caption{Covering Eigenoptions}\label{alg:rod_eigenoptions}
\KwInput{$\eta, \ \alpha_{o}$ \Comment*[r]{Step-sizes for learning the SR and the options' policies}}
\hspace{1.2cm} $\gamma_{SR}, \ \gamma_{o}$ \Comment*[r]{Discount factor for the SR and the options' policies}
\hspace{1.2cm} $p_{option}$ \Comment*[r]{Prob. of sampling an option instead of a primitive action}
\hspace{1.2cm} $N_{steps}$ \Comment*[r]{Maximum number of interactions with the environment}
\hspace{1.2cm} $N_{iter}$ \Comment*[r]{Number of iterations of the ROD cycle}

\vspace{0.5cm}

$\mathcal{D} \gets \emptyset$\\
$\Omega \gets \emptyset$\\
\For{$i \gets 0$ \KwTo $N_{iter}$}{
$\triangleright$ Collect samples\\
\For{$j \gets 0$ \KwTo $N_{steps}$}{
    With prob. $1 - p_{option}$ randomly sample, uniformly, a primitive action $a$; otherwise uniformly sample an option $\omega$ from $\Omega$\\
    \uIf{$\textrm{\normalfont primitive action was sampled}$}{
    In state $s$, take action $a$ and observe state $s'$ and reward $r$\\
    $\mathcal{D} \gets \mathcal{D} \ \| \ (s, a, r, s')$ \Comment*[r]{Append transition to data set $\mathcal{D}$}
    }
    \Else{
      \While{$\beta_\omega(s) \neq 1$ }{
      In state $s$, take action $\pi_\omega(s)$ and observe state $s'$ and reward $r$\\
    $\mathcal{D} \gets \mathcal{D} \ \| \ (s, \pi_\omega(s), r, s')$ \Comment*[r]{Append transition to data set $\mathcal{D}$}
    }
    }
}
$\triangleright$ Learn representation\\
$\Psi \gets \textrm{Successor Representation}(\eta, \gamma_{SR}, \mathcal{D})$\\

$\triangleright$ Derive an intrinsic reward function from the learned representation\\
$\mathbf{e} \gets \textrm{getTopEigenvector}(\Psi)$\\
$r^{\mathbf{e}}(s, s') \gets \textrm{\normalfont \textbf{e}}(s') - \textrm{\normalfont \textbf{e}}(s) \ \ \forall s, s' \in \mathscr{S}$ \Comment*[r]{Define eigenpurpose}
$\triangleright$ Learn to maximize intrinsic reward\\
 $Q \gets \textrm{Q-Learning}(r^{\mathbf{e}}, \alpha_o, \gamma_o, \mathcal{D})$ \Comment*[r]{Learn value function}
 $\triangleright$ Define option\\
 $\mathcal{I_\omega} \gets \emptyset$; $\pi_\omega(s) \gets \bot$, $\beta_\omega(s) \gets 1 \ \
 \forall s \in \mathscr{S}$ \Comment*[r]{Initialize option tuple}
 \For{$s$ \textrm{\normalfont \textbf{in}} $\mathscr{S}$}{
    $\triangleright$ \ If $s$ is not a terminal state for the eigenoption being learned:\\
    \uIf{$\exists a \in \mathscr{A}(s) \ s.t. \ Q(s, a) > 0 $}{
    $\mathcal{I}_\omega \gets \mathcal{I}_\omega \cup \{s\}$\\
    $\pi_\omega(s) \gets \argmax_a Q(s, a)$\\
    $\beta_\omega(s) \gets 0$
  }
 }
 $\Omega \gets \Omega \cup \langle \mathcal{I}_\omega, \pi_\omega, \beta_\omega \rangle$
}
\end{algorithm}

\subsection{Empirical Analysis}

As in the previous sections, we evaluate CEO in the four-room domain. However, instead of using episodes that are 1,000 steps long, we now consider episodes that are only 100 steps long. The agent starts every episode in the top right corner. In this setting, it is impossible for the agent to visit every state in the environment in a single episode. The agent interacts with the environment for a whole episode as part of the data collection step, with the agent using the collected data set to complete the other steps of the ROD cycle between episodes. In other words, each episode corresponds to a different iteration of the ROD cycle.

We performed numerical simulations to estimate how many time steps, on average, CEO needs to visit every state in the environment at least once.\footnote{If the agent visits a state for the first time at the second time step of the second iteration, we consider the agent has visited that state at time step 101 + 3 = 104.} This metric can be seen as a Monte Carlo estimate of the diffusion time when counting steps instead of decisions. We use this metric because it is not clear how to easily compute the diffusion time in closed form when taking into consideration the episodic nature of the problem. We use $\eta = \alpha_o = 0.1$, $\gamma_{SR} = \gamma_o = 0.99$, and we sample options with 5\% probability ($p_{option}$), which is similar to what we did in Section~\ref{sec:online_experiments}, where options were potentially sampled only in the exploration step of Q-Learning with $\epsilon$-greedy ($\epsilon = 0.05)$. We pass over $\mathcal{D}$ $100$ times when learning the SR, and $1,000$ when learning the option policy, leveraging the off-policy aspect of our problem formulation.

CEO needs, on average, 2,301.2 steps to visit every state at least once (n=100, SD=830.2, Mdn=2,069.5, Min=1,067, Max=5,793). In contrast, a uniform random policy needs 27,032.3 steps (n=100, SD=16,961.0, Mdn=22,397.0, Min=3,328, Max=95,118). CEO, by implementing multiple iterations of the ROD cycle, reduces the number of interactions an agent needs in order to visit every state by an order of magnitude! Importantly, when compared to a uniform random policy, CEO does not only visit every state quicker, but it also induces a more uniform visitation over the state space, as depicted in Figure~\ref{fig:comparison_exploration_iterative_exp}.

\begin{figure}[t]
     \centering
     \includegraphics[width=0.66\textwidth]{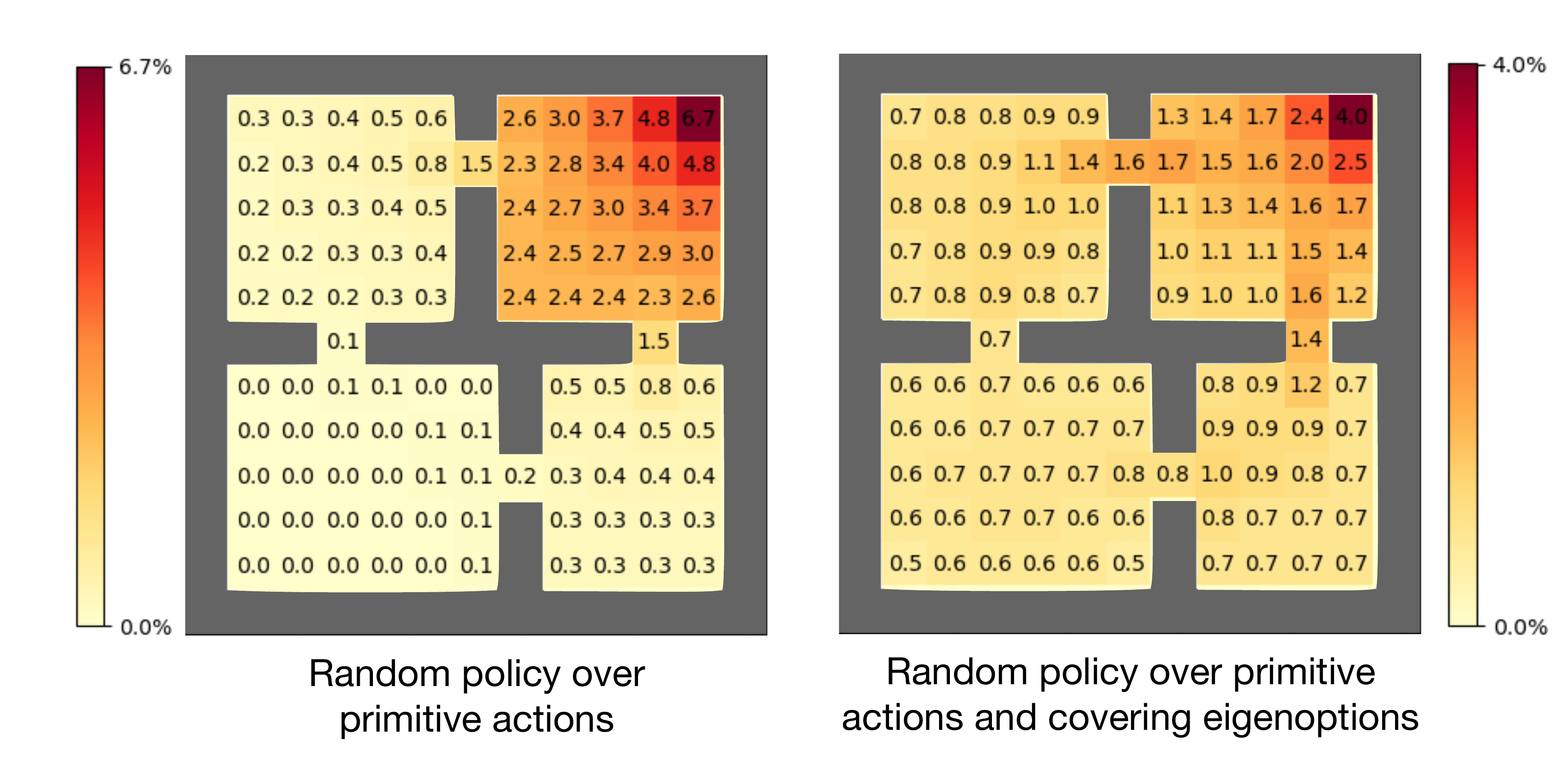}
     \caption{Proportion of time spent in each state when randomly selecting actions over primitive actions only, and over primitive actions and covering eigenoptions. Reported numbers are averaged over 100 seeds. The result of each seed is calculated at the end of the last episode, which is defined by the time in which the agent has visited every state in the environment. The proportion is computed as the number of time steps the agent was in that particular state divided by the total number of interactions with the environment.}
     \label{fig:comparison_exploration_iterative_exp}
\end{figure}

\begin{figure}
     \centering
     \includegraphics[width=0.95\textwidth]{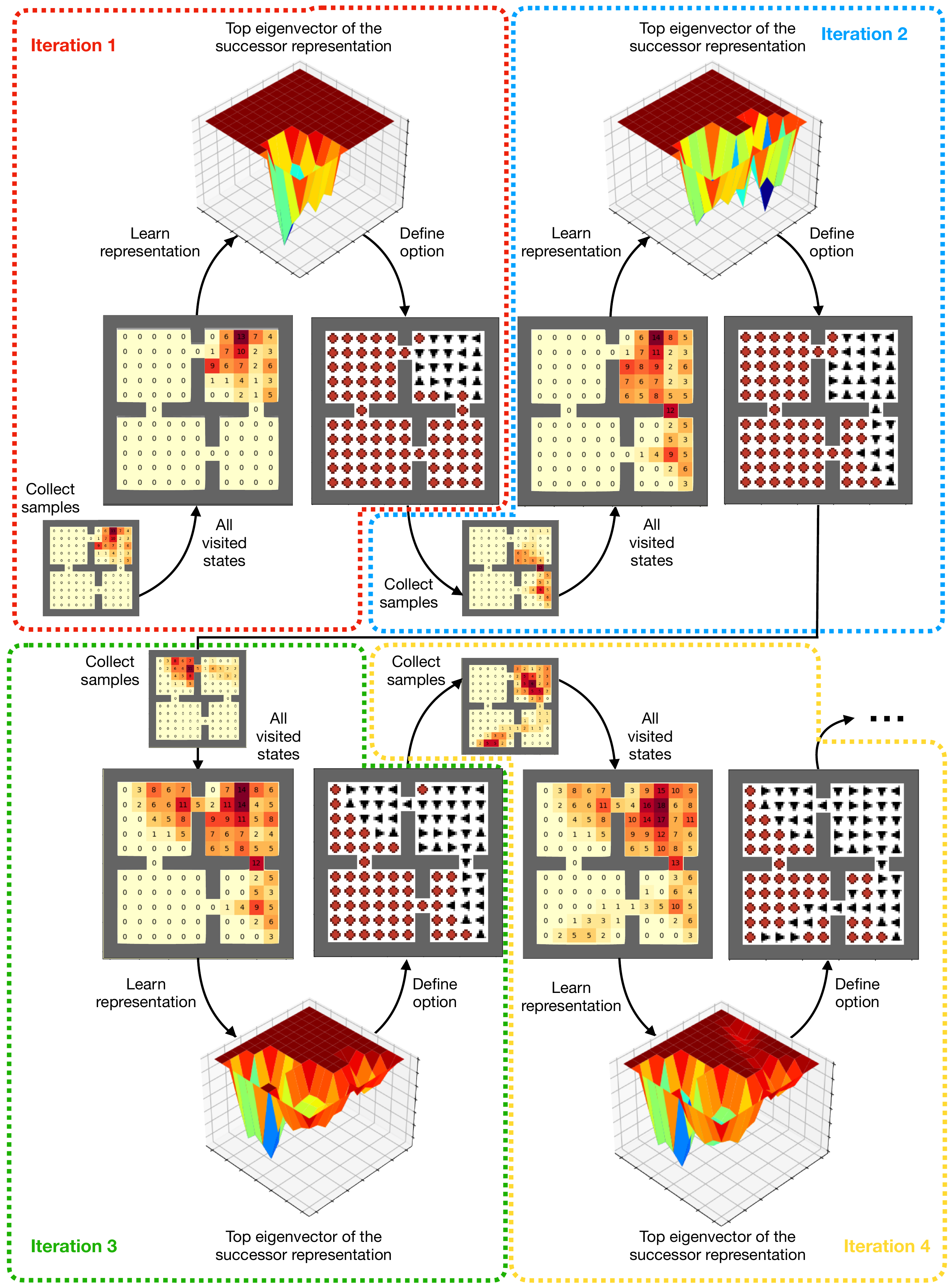}
     \caption{Illustration of multiple iterations of the ROD cycle when instantiated with covering eigenoptions. See text in Section~\ref{sec:illustration_rod_cycle} and Algorithm~\ref{alg:rod_eigenoptions} for details.}
     \label{fig:illustration_rod_cycle}
\end{figure}

The behavior induced by CEO indeed supports the claims around the benefits of multiple iterations of the ROD cycle. Figure~\ref{fig:illustration_rod_cycle} depicts the first four iterations of CEO with a particular seed. In iteration 1, the agent collects samples with a random walk and it learns a representation such that it is incentivized to visit states it has not visited often. This leads to an option that takes the agent closer to the hallway between the top and bottom rooms on the right. In iteration 2, the agent ends up closer to the referred hallway as it eventually samples the learned option; a random walk then leads the agent to the bottom right room. The learned representation still puts a lot of mass in the states often visited in the first iteration, but the reward function is now defined in more states. Aside from the option that takes the agent to the aforementioned hallway, the agent now also discovers an option that, when in the top right room, takes the agent to the hallway between the top right and left rooms or, when in the bottom right room, takes the agent close to the hallway between the bottom right and left room. At the third iteration the agent then has access to two options. By chance, the agent eventually samples an option that takes it to the state close to the hallway between the top and left room, with the random walk then, by chance, taking the agent to the top left room. The induced SR at the end of these three iterations leads to the discovery of a third option that, from most states, take the agent to the state close to the hallway between the bottom right and left room. By chance, the agent ends up sampling that option in the fourth iteration, visiting the room that had not been visited yet. This is a clear example of the benefits of the ROD cycle where options discovered from previous iterations act as a scaffold for more complex behaviors discovered in subsequent iterations. We chose to depict this particular behavior of CEO for pedagogical purposes, although the agent does not always visit the four rooms in the first four iterations, it eventually does so, generally much faster than a uniform random policy.

These results (and the proposed method) add to the growing list of approaches that can be seen as instantiations of the ROD cycle~\citep[e.g.][]{Erraqabi21,Jinnai20,Hoang21,Machado16}. The illustration in this section is particularly useful for making the multiple iterations of the cycle, each step, and their outcome, very explicit. We further discuss some of these methods in Section~\ref{sec:related_work}.

\section{Combining Options with the Option Keyboard}~\label{sec:options_keyboard}

In the previous sections we discussed the ROD cycle, a general framework for option discovery based on representation learning (c.f. Figure~\ref{fig:option_discovery_cycle}). We described two algorithms that instantiate the ROD cycle, one based on eigenoptions and one based on covering options, and we introduced an algorithm that combines both eigenoptions and covering options to benefit from multiple iterations of the ROD cycle. In all these cases, options are derived from the eigenvectors of the SR matrix.  

Options derived from the eigenvectors of the SR are particularly suitable for exploration because they induce complementary distributions that collectively cover the state space in a structured way. At first this suggests a straightforward strategy for exploration: compute one option per eigenvector and then use them together to explore the environment. The problem with this strategy is that, since each option must be \emph{learned}, there is an inherent cost associated with adding them to the library of available behaviors. Moreover, adding the options associated with \emph{all} the eigenvectors may be unnecessary, since some of them provide little benefit in terms of exploration in the presence of others. To illustrate this point, note that, even in a simple domain like  four-room, this exploration scheme would result in more than $100$ options. Our experiments clearly demonstrate that using this many options is not really necessary  (c.f. Figures~\ref{fig:comparison_diffusion_time_ablation}, \ref{fig:comparison_dt_eigenoptions_online}, and~\ref{fig:comparison_dt_co_online}), including those in Section~\ref{sec:illustration_rod_cycle}.

This leads us to the second useful property of options induced by the eigenvectors of the SR: their associated eigenvalues provide a natural ordering of the options according to their time scale---this roughly corresponds to the option's expected time before termination, as shown in Figure~\ref{fig:eigenoptions_length}. Based on this observation, we can improve the exploration strategy outlined above: instead of adding the options all at once, we rank them based on their eigenvalues and add them one by one until they collectively form a good basis for exploration. This is the basic recipe underlying both eigenoptions and covering options. But can we do better? Can we use these options to not only go to the far end of the environment, but in pretty much any state in the environment?

It has been argued that the ability to \emph{combine} options may be key to extend the range of available behaviors without incurring the otherwise inevitable cost in terms of sample transitions~\citep{sutton2016towards,hees2016learning,haarnoja2018latent}. By exploiting compositionality, one can potentially grow a finite number of options into a combinatorially larger counterpart without additional learning. 
In the context of eigenoptions and covering options, this benefit manifests itself in two complementary ways. First, by combining higher-order options, it may be possible to approximately emulate their lower-order counterparts, which will thus no longer need to be learned. Second, and perhaps more important, depending on how options are combined, they may give rise to new options whose behavior differ significantly from that of \emph{any} option induced by the eigenvectors of the SR---i.e., the combined options might effectively extend the SR behavioral basis used for exploration, even when thinking about multiple iterations of the ROD cycle.

The question then arises as to how to actually implement the combination of options. In this context, a natural choice is the \emph{option keyboard}~\citep{Barreto19}. The option keyboard is particularly suitable to be used with options induced by the SR because it is itself based on the concept of SR, making the integration between the two approaches natural and transparent. Given a set of options evaluated under different rewards, the option keyboard provides a way to instantaneously generate options induced by any linear combination of the rewards, without any learning involved. Although simple, this way of combining options provides all the benefits mentioned above in the context of options induced by the eigenvectors of the SR. In the next section we elaborate on how to build and use the option keyboard using options derived from the SR.

\subsection{Generalized Policy Evaluation and Generalized Policy Improvement}

The option keyboard is based on generalizations of the concepts of policy evaluation and policy improvement introduced in Section~\ref{sec:background}. Simply put, \emph{generalized policy evaluation}~(GPE) is the computation of a policy's value function under different reward functions. Given the value function of a set of policies under a specific reward function, \emph{generalized policy improvement} (GPI) is the computation of a new policy whose performance is no worse, and generally better, than that of the original policies. GPE and GPI are strict generalizations of their standard counterparts, which are recovered as special cases~\citep{Barreto20}.

To accommodate the generalization provided by GPE, we will use $v_\pi^r$ and $q_\pi^r$ to denote the state-value and action-value functions of policy $\pi$ under reward $r$. We start by noting that the SR provides a particularly efficient form of GPE: as shown in Eq.~\ref{eq:sr_reward}, once we have the SR of a policy $\pi$, $\Psi_\pi$, we can evaluate it under \emph{any} reward function $ {\bf r}$ by simply making $\mathbf{v}_{\pi}^{ \bf r} =  \Psi_\pi {\bf r}$. As discussed in Section~\ref{sec:successor_representation}, SFs generalize the SR by replacing its features $\bm{\phi}_i(s_j) =\mathbbm{1}_{\{i=j\}}$ with arbitrary features $\bm{\phi}: \mathscr{S} \times \mathscr{A} \mapsto \mathbb{R}^d$. Once we have the SFs of a policy $\pi$, $\bm{\psi}_{\pi}$, we can compute its value function under any linear combination of the features, $r(s,a) = \bm{\phi}(s,a)^{\top}\bm{w}$, by making $q_{\pi}^{r}(s,a) = \bm{\psi}_{\pi}(s,a)^\top \bm{w}$, where $\bm{w} \in \mathbb{R}^d$ (c.f. Eq.~\ref{eq:reward_features} and~\ref{eq:sfs_q}). Note that, because the features $\bm{\phi}(s,a)$ can be any function, one can in principle use an intrinsic reward $r_i(s,a)$ as the $i$-th feature: $\phi_i(s,a) = r_i(s,a)$. This will be important for the option keyboard.

GPI is the computation of a policy whose performance under a given reward is generally better than that of its precursors. The mechanics of GPI are actually very similar to that of its standard counterpart shown in Eq.~\ref{eq:policy_improvement}: given policies $\pi_1$, $\pi_2$, ..., $\pi_n$, and their value functions under reward $r$, $q_{\pi_1}^{r}$, $q_{\pi_2}^{r}$, ..., $q_{\pi_n}^{r}$, the GPI policy $\pi'$ is given by
\begin{equation}
 \label{eq:gpi}
\pi'(s) \in \mathrm{argmax}_{a} \max_i q_{\pi_i}^{r}(s,a) \text{ for all } s \in \mathscr{S}.
 \end{equation}
\citet{Barreto17} have shown that, starting from the execution of any action $a \in \mathscr{A}$ on any state $s \in \mathscr{S}$, the GPI policy $\pi'$ will do at least as well as, and generally better than any of its precursors $\pi_i$. More formally, we have that $q_{\pi'}^{r}(s,a) \ge  q_{\pi_i}^{r}(s,a)$ for all $i \in \{1, 2, ..., n\}$ and all $(s,a) \in \mathscr{S} \times \mathscr{A}$. This result can also be extended to the case in which GPI is applied with approximations $\tilde{q}_{\pi_i}^{r} \approx q_{\pi_i}^{r}$~\citep{Barreto17}.  

\subsection{Synthesizing Options with GPE and GPI}

Now that we have introduced the concepts of GPE and GPI, we describe how to use these operations to create options without any learning involved. For clarity, instead of presenting the option keyboard in its most general form, we will use the formalism of eigenoptions introduced in Section~\ref{sec:temporally_extended_exploration} (the adaptation to covering options is straightforward).

Let $\mathbf{e}_1$, $\mathbf{e}_2$, ..., $\mathbf{e}_d$ be eigenvectors of the SR, $\Psi_\pi$, induced by a policy $\pi$. As per Eq.~\ref{eq:eigenpurpose}, each $\mathbf{e}_i$ gives rise to a reward function $r^{\mathbf{e}_i}(s,s') =  \mathbf{e}_i^\top \big(\bm{\phi}(s') - \bm{\phi}(s)\big)$, which in turn gives rise to an eigenoption $\omega^{\mathbf{e}_i}$. As shown in Eq.~\ref{eq:extended_eigenoption_policy}, the eigenoption $\omega^{\mathbf{e}_i}$ can be compactly represented as a policy over an extended action space $\pi^{{\mathbf{e}_i}{+}}$. For ease of exposition, we will use $\pi^{{\mathbf{e}_i}{+}}$ to refer to $\omega^{\mathbf{e}_i}$ in this section. Suppose we have the value function of all the eigenoptions $\pi^{{\mathbf{e}_i}{+}}$ under all the rewards $r^{\mathbf{e}_j}$, that is, we have $q_{\pi^{{\mathbf{e}_i}{+}}}^{r^{\mathbf{e}_j}}$ for $i,j \in \{1, 2, ..., d\}$.
We will now show how to instantaneously generate options associated with linear combinations of the eigenvectors $\mathbf{e}_i$ without any learning involved. 

Let $\bm{w} \in \mathbb{R}^d$ and let
\begin{equation}
\label{eq:eigen_linear_combination}
 \mathbf{c} = \sum_i w_i \mathbf{e}_i.
\end{equation}
We want to compute an approximation of the option induced by the reward 
$r^{\bf{c}}(s, s') = {\bf c}^\top \big(\bm{\phi}(s') - \bm{\phi}(s)\big)$
without resorting to learning. First, we note that 
\begin{equation*}
r^{\bf{c}}(s, s')
= \sum_i w_i \mathbf{e}_i^\top \big(\bm{\phi}(s') - \bm{\phi}(s)\big) 
= \sum_i w_i r^{\mathbf{e}_i}(s, s').
 \end{equation*}
Connecting the above with Eq.~\ref{eq:reward_features}, we see that here we are using the intrinsic rewards $r^{\mathbf{e}_i}$ as features. This view allows us to resort to Eq.~\ref{eq:sfs_q} to compute the value function of the eigenoptions $\pi^{{\mathbf{e}_i}{+}}$ under the reward $r^{\bf{c}}$ as
\begin{equation}
 \label{eq:eigen_gpe}
 q_{\pi^{{\mathbf{e}_i}{+}}}^{r^{\bf{c}}}(s,a) = \sum_j w_j q_{\pi^{{\mathbf{e}_i}{+}}}^{r^{\mathbf{e}_j}}(s,a).
\end{equation}
Once we have $q_{\pi^{{\mathbf{e}_i}{+}}}^{r^{\bf{c}}}$ for all $i \in \{1, 2, ..., d\}$, we can use GPI defined in Eq.~\ref{eq:gpi} to compute 
\begin{equation}
 \label{eq:eigen_gpi}
  \tilde{\pi}^{{\mathbf{c}}+}(s) \in \mathrm{argmax}_{a \in \mathscr{A} \cup \{\bot\}} \max_i q_{\pi^{{\mathbf{e}_i}{+}}}^{r^{\bf{c}}}(s,a) \text{ for all } s \in \mathscr{S}.
\end{equation}
Note that, since $\tilde{\pi}^{{\mathbf{c}}+}$ is defined over an extended action space which also includes the terminate action $\bot$, it gives rise to a well-defined option, including the initiation and termination sets (see discussion in Section~\ref{sec:temporally_extended_exploration}). 

The option computed in Eq.~\ref{eq:eigen_gpi} is an approximation of the option $\pi^{{\mathbf{c}}+}$ induced by $\mathbf{c}$, that is, $\tilde{\pi}^{{\mathbf{c}}+} \approx \pi^{{\mathbf{c}}+}$. The advantage of using $\tilde{\pi}^{{\mathbf{c}}+}$ is that, unlike $\pi^{{\mathbf{c}}+}$, it can be obtained without any learning involved.
Based on the results regarding GPI, we know that  $\tilde{\pi}^{{\mathbf{c}}+}$ will perform at least as well as, and generally better than, any of the options $\pi^{{\mathbf{e}_i}{+}}$ under the reward $r^{\mathbf{c}}$. This means that the larger the number of options $\pi^{{\mathbf{e}_i}{+}}$ used in Eq.~\ref{eq:eigen_gpi} the closer $\tilde{\pi}^{{\mathbf{c}}+}$ will be to ${\pi}^{{\mathbf{c}}+}$. In any case, it is worth noting that, since we are using options mostly to generate diverse behavior, an eventual sub-optimal performance of   $\tilde{\pi}^{{\mathbf{c}}+}$ should not have a  catastrophic effect.

\emph{Putting it all together.} We now summarize how the procedure above can be combined with eigenoptions to considerably enlarge the number of options used for exploration. Given a set of eigenvectors $\mathbf{e}_1$, $\mathbf{e}_2$, ..., $\mathbf{e}_d$, the first thing we do is to compute the induced eigenoptions $\omega^{\mathbf{e}_1}, \omega^{\mathbf{e}_2}, ..., \omega^{\mathbf{e}_d}$, which here we represent as policies defined over an augmented action space: $\pi^{{\mathbf{e}_1}{+}}, \pi^{{\mathbf{e}_2}{+}}, ..., \pi^{{\mathbf{e}_d}{+}}$. Then, we evaluate each $\pi^{{\mathbf{e}_i}{+}}$ under the reward functions induced by the eigenvectors, that is, we compute $q_{\pi^{{\mathbf{e}_i}{+}}}^{r^{\mathbf{e}_j}}$ for $i,j \in \{1, 2, ..., d\}$ (obviously, we can compute $q_{\pi^{{\mathbf{e}_i}{+}}}^{r^{\mathbf{e}_j}}$ \emph{while} we learn the options $\pi^{{\mathbf{e}_i}{+}}$---see, for example, \citeauthor{Barreto19}'s (\citeyear{Barreto19}) Algorithm~3). Once we have the value functions $q_{\pi^{{\mathbf{e}_i}{+}}}^{r^{\mathbf{e}_j}}$, the successive application of Eq.~\ref{eq:eigen_linear_combination}, \ref{eq:eigen_gpe} and~\ref{eq:eigen_gpi} with \emph{any} $\bm{w} \in \mathbb{R}^d$ results in an option $\tilde{\pi}^{{\mathbf{c}}+}$ that approximates the option $\pi^{{\mathbf{c}}+}$ induced by $\mathbf{c} = \sum_i w_i \mathbf{e}_i$. See Algorithm~\ref{alg:ok} for a presentation of this discussion in pseudo-code.

\begin{algorithm}[t]
\caption{Option Keyboard}\label{alg:ok}
\KwInput{$Q_{\pi_{\omega^{\mathbf{e}}}}^{r^{\mathbf{e}}}$ \Comment*[r]{Matrix of Q-values for each reward-policy combination}}
\hspace{1.2cm} $\omega^{\mathbf{e}}$\Comment*[r]{Set of options generated from intrinsic rewards $\mathbf{e}_i$}
\hspace{1.2cm} \textbf{w} \Comment*[r]{Weights to be used to combine behaviors}
\KwOutput{$\tilde{\omega}$ \Comment*[r]{Options generated by the OK from the base options}}

\vspace{0.2cm}

$\triangleright$ GPE\\


\For{${\omega^{\mathbf{e}_i}} \textrm{\normalfont \textbf{ in }} {\omega^{\mathbf{e}}} $}{
    $Q_{\pi_{\omega^{\mathbf{e}_i}}}^{r^{\mathbf{c}}} \gets \mathbf{w}^\top Q_{\pi_{\omega^{\mathbf{e}_i}}}^{r^{\mathbf{e}}}$\\
}

\vspace{0.2cm}

$\mathcal{I}_{\tilde{\omega}} \gets \mathscr{S}$ \\ 
\For{$s \textrm{\normalfont \textbf{ in }} \mathscr{S}$}{
    $\triangleright$ GPI\\
    $\pi^+_{\tilde{\omega}}(s) \gets \argmax_{a} \max_{i} Q_{\pi_{\omega^{\mathbf{e}_i}}}^{r^{\mathbf{c}}}(s,a)$\\
    $\triangleright$ Converts $\pi^+_{\tilde{\omega}}(s)$ into a regular option \\    
  \If{$\pi^+_{\tilde{\omega}}(s) = \bot$}{
        $\pi_{\tilde{\omega}}(s) \gets$ random action in $\mathscr{A}$ \\
        $\mathcal{I}_{\tilde{\omega}} \gets \mathcal{I_{\tilde{\omega}}} \setminus \{ s \}$\\
        $\beta_{\tilde{\omega}}(s) \gets 1$\\
        }
    \Else{
        $\pi_{\tilde{\omega}}(s) \gets \pi^+_{\tilde{\omega}}(s)$\\
        $\beta_{\tilde{\omega}}(s) \gets 0$\\
        }
}
$\tilde{\omega} \gets \langle \mathcal{I}_{\tilde{\omega}}, \pi_{\tilde{\omega}}, \beta_{\tilde{\omega}}\rangle$

\end{algorithm}

We can think of the process above as implementing a mapping from $\bm{w} \in \mathbb{R}^d$ to an approximation of the corresponding option $\pi^{{\mathbf{c}}+}$, where $\bm{c} = \sum_i w_i \mathbf{e}_i$. This means that we immediately have at our disposal a potentially very large set of options induced by all possible instantiations of the vector $\bm{w}$.  If $\bm{w}$ happens to only have one non-zero element that is positive, we recover one of the eigenoptions $\pi^{{\mathbf{e}_i}{+}}$ induced by the eigevectors $\mathbf{e}_i$. For other instantiations of $\bm{w}$ we have options whose behavior can significantly deviate from that of the original eigenoptions~\citep{Barreto19}. In the next section we use experiments to illustrate the benefits of combining the option keyboard with eigenoptions.

\section{Combining Eigenoptions with the Option Keyboard}~\label{sec:experiments_ok}

In this section, we demonstrate the synergy between eigenoptions and the option keyboard. As mentioned above, the option keyboard allows one to extend a finite set of options to a combinatorially large counterpart without additional learning. Nevertheless, the option keyboard assumes an initial set of basis options is available beforehand, with existing results in the literature relying on handcrafted options as basis options. When considering discovered options, eigenoptions are natural candidates as basis options. They are autonomously discovered from the SR, the same object that makes the option keyboard computationally efficient, and they are generated by orthogonal vectors obtained from the agent's behavior. Intuitively, using the option keyboard to combine eigenoptions is the equivalent of computing linear combinations of orthogonal bases of behaviors.
 
We first present a qualitative analysis of the options generated by combining eigenoptions with the option keyboard. We present multiple eigenoptions and the options the option keyboard generates from them. We discuss the number of unique combinations and how diverse the generated options are. We then present a quantitative analysis focused on the diffusion time induced by these new options generated by the option keyboard. We conclude this section with a higher-level discussion about the benefits of combining eigenoptions with the option keyboard, and potential avenues for future work.

\subsection{Options Combined through the Option Keyboard are Diverse} \label{subsec:visualize_ok}

We define the set of basis options to be the first ten eigenoptions and we linearly combine these options with the option keyboard. Because there are infinite possible weight combinations, we constrain ourselves to $\{0,1\}$ combinations at first, meaning weights can be either $0$~or~$1$, and later, to $\{-1, 0, 1\}$ combinations, as shown in Algorithm~\ref{alg:ok_eigenoptions}. We perform our experiments in the four-room domain and in an open $10\times10$ gridworld, which we name open-room (see Figure~\ref{fig:eigenoptions_openroom}). The latter is particularly useful for this set of experiments because it allows us to build intuitions about the algebra of the options without being distracted by walls and asymmetries. For reference, some of the eigenoptions used as basis options are depicted in Figures~\ref{fig:eigenoptions_openroom} and~\ref{fig:evolution_policies_eigenoptions}, the latter in Appendix~\ref{sec:appendix_evolution}.

\begin{algorithm}[t]
\caption{OK-Eigenoptions}\label{alg:ok_eigenoptions}
\KwInput{$\Omega$ \Comment*[r]{Base options}}
\hspace{1.2cm} $C \in \mathbb{R}^{|\Omega| \times |\mathscr{S}| \times |\mathscr{A}| }$ \Comment*[r]{Reward function used to generate each option in $\Omega$}
\hspace{1.2cm} $\gamma$ \Comment*[r]{Discount factor}
\KwOutput{$\Omega_{\mathbf{w}}$ \Comment*[r]{Options generated by the OK from the base options}}

\vspace{0.2cm}
$\triangleright$ Compute the matrix of Q-values induced by each intr. reward $r^{\mathbf{e}_j}$ and option $\omega^{\mathbf{e}_i}$\\

\For{$\omega^{\mathbf{e}_i} \textrm{\normalfont \textbf{ in }} \Omega$}{
    \For{$r^{\mathbf{e}_j} \textrm{\normalfont \textbf{ in }} C$}{
        $Q_{\pi_{\omega^{\mathbf{e}_i}}}^{r^{\mathbf{e}_j}} \gets \textrm{PolicyEvaluation}(\pi_{\omega^{\mathbf{e}_i}}, r^{\mathbf{e}_j}, \gamma)$
    }
}

\vspace{0.2cm}
$\triangleright$ Using the option-keyboard, generate combination of eigenoptions with weights $\mathbf{w}$\\
$\Omega_{OK} \gets \emptyset$\\
\For{$\textrm{\normalfont all permutations } \mathbf{w} \textrm{\normalfont \textbf{ in }} [-1, 0, 1]^{|\Omega|}$}{
    $\Omega_{OK} \gets \Omega_{OK} \ \cup$ Option Keyboard($Q_{\pi_{\omega^{\mathbf{e}}}}^{r^{\mathbf{e}}}$, \textbf{w})
}
\end{algorithm}

\begin{figure}[t]
     \centering
     \includegraphics[width=\textwidth]{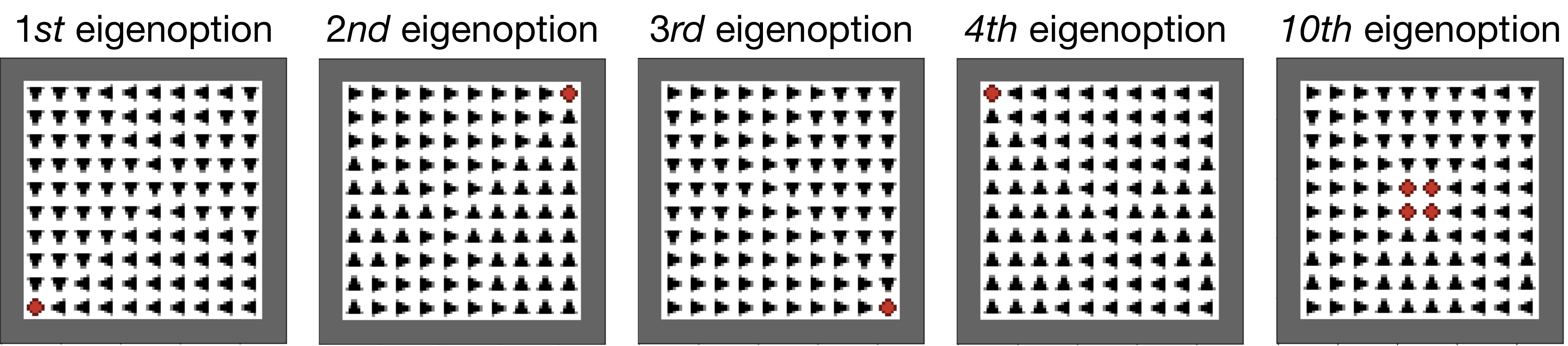}
     \caption{Eigenoptions discovered in the open-room. As in previous results, their ordering is defined by the eigenvalues corresponding to the eigenvectors that gave rise to each option. Both directions of the eigenvectors are taken into consideration.}
     \label{fig:eigenoptions_openroom}
\end{figure}

\begin{figure}[t]
     \centering
         \centering
         \includegraphics[width=\textwidth]{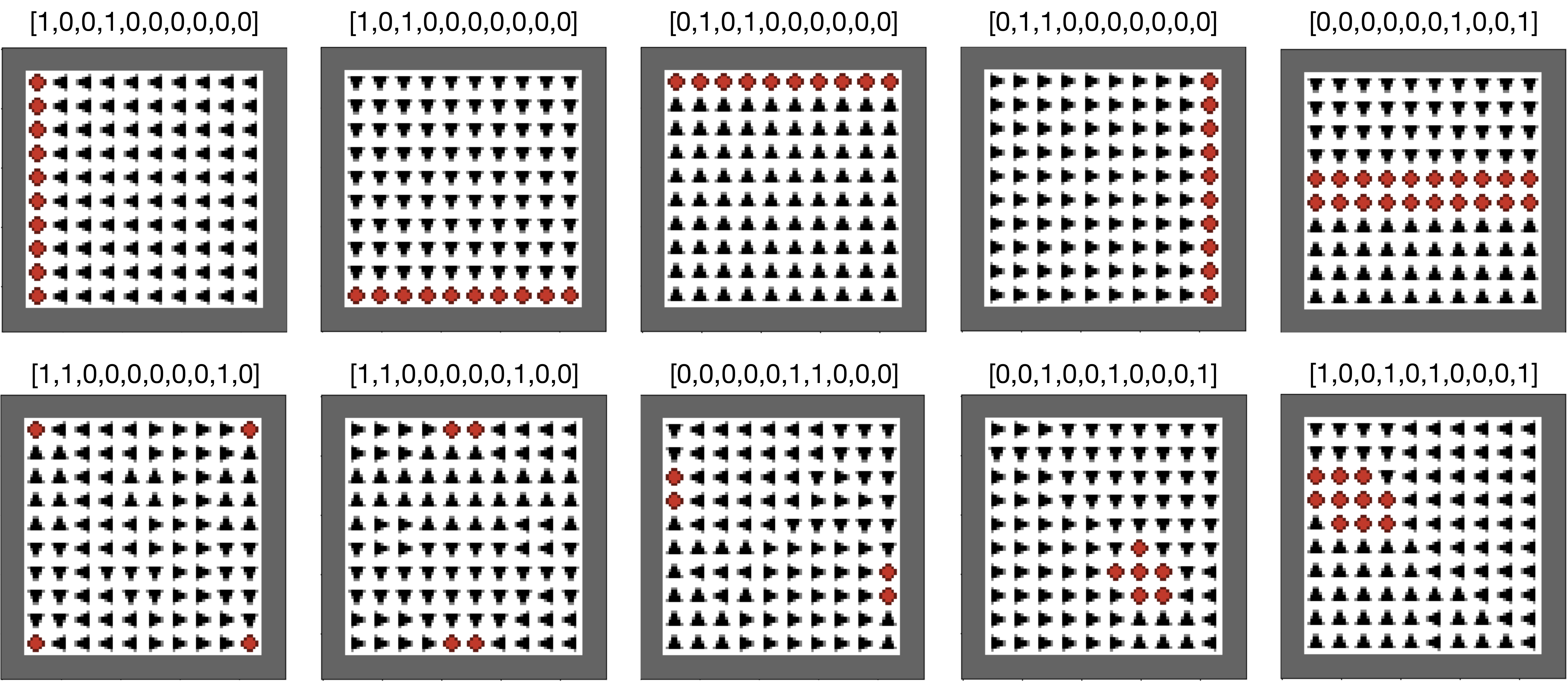}
     \caption{Options obtained by combining eigenoptions with the option keyboard. The weights used to generate them are above the option, where the $i$-th entry corresponds to the weight given to the $i$-th eigenoption.}
     \label{fig:ok_eigenoptions_openroom}
\end{figure}

In the open-room, when combining the first four eigenoptions in sets of two, the agent recovers cardinal directions, as shown in Figure~\ref{fig:ok_eigenoptions_openroom}. This is an interesting result because it shows a general method naturally discovering important abstractions. Besides that, we also observe the union of different options (e.g., going to the closest corner) and, once an eigenoption that takes the agent to the center of the room is available, the option keyboard generates combinations that together terminate in most states in the environment. In the four-room domain, we can draw similar conclusions, despite the asymmetric walls preventing behaviors as interpretable as those in the open-room. We see options that take the agent to specific walls/directions, to specific rooms, and to bottleneck states, as shown in Figure~\ref{fig:ok_eigenoptions_four_rooms}.

\begin{figure}[t]
     \centering
         \centering
         \includegraphics[width=\textwidth]{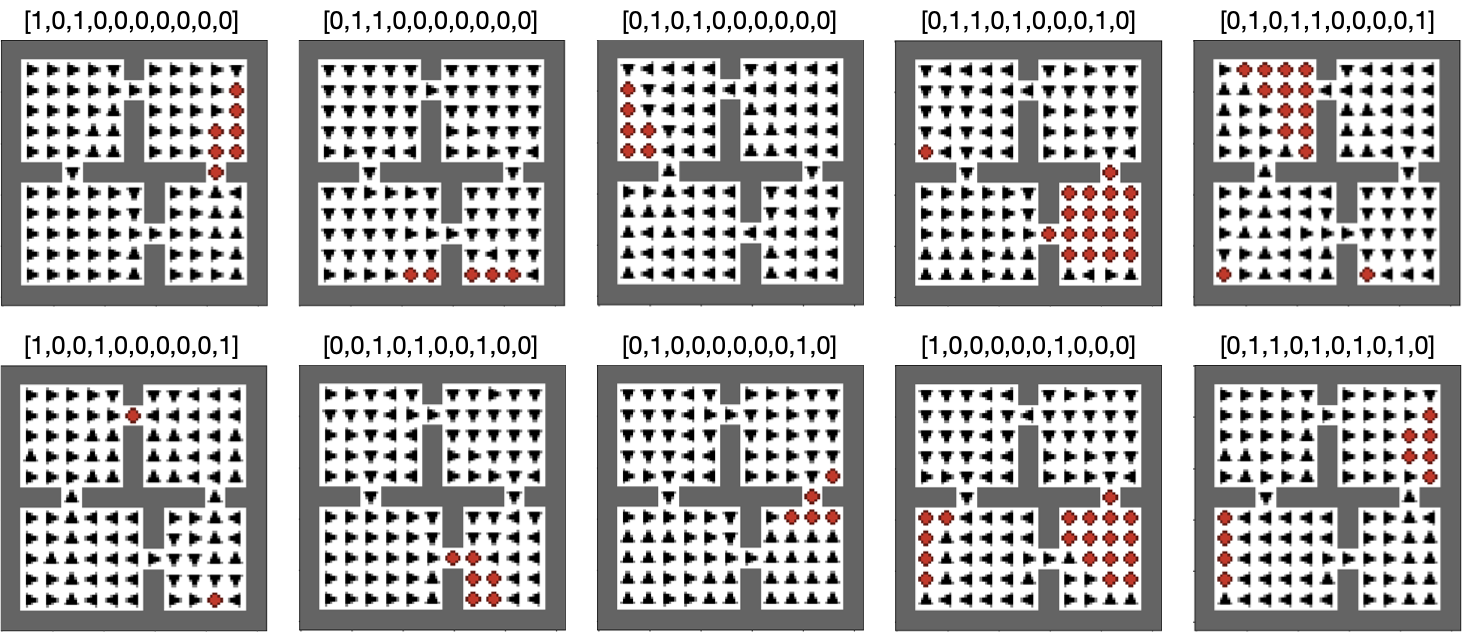}
     \caption{Options obtained by combining eigenoptions with the option keyboard. The weights used to generate them are above the option, where the $i$-th entry corresponds to the weight given to the $i$-th eigenoption.}
     \label{fig:ok_eigenoptions_four_rooms}
\end{figure}

\clearpage

The option keyboard also leads to a combinatorial explosion of new options, even when we only consider $\{0,1\}$ combinations. This is shown in Figure~\ref{fig:num_new_options_openroom}, which reports the number of \emph{unique} options generated by the option keyboard. The number of new options is not $2^n$, where $n$ is the number of basis options, because two options can be added together to cancel each other, for example, when they are derived from both directions of the same eigenvector. Additionally, different combinations can lead to the same option.

\begin{figure}[t]
     \centering
         \begin{subfigure}[b]{0.49\textwidth}
            \includegraphics[width=\textwidth]{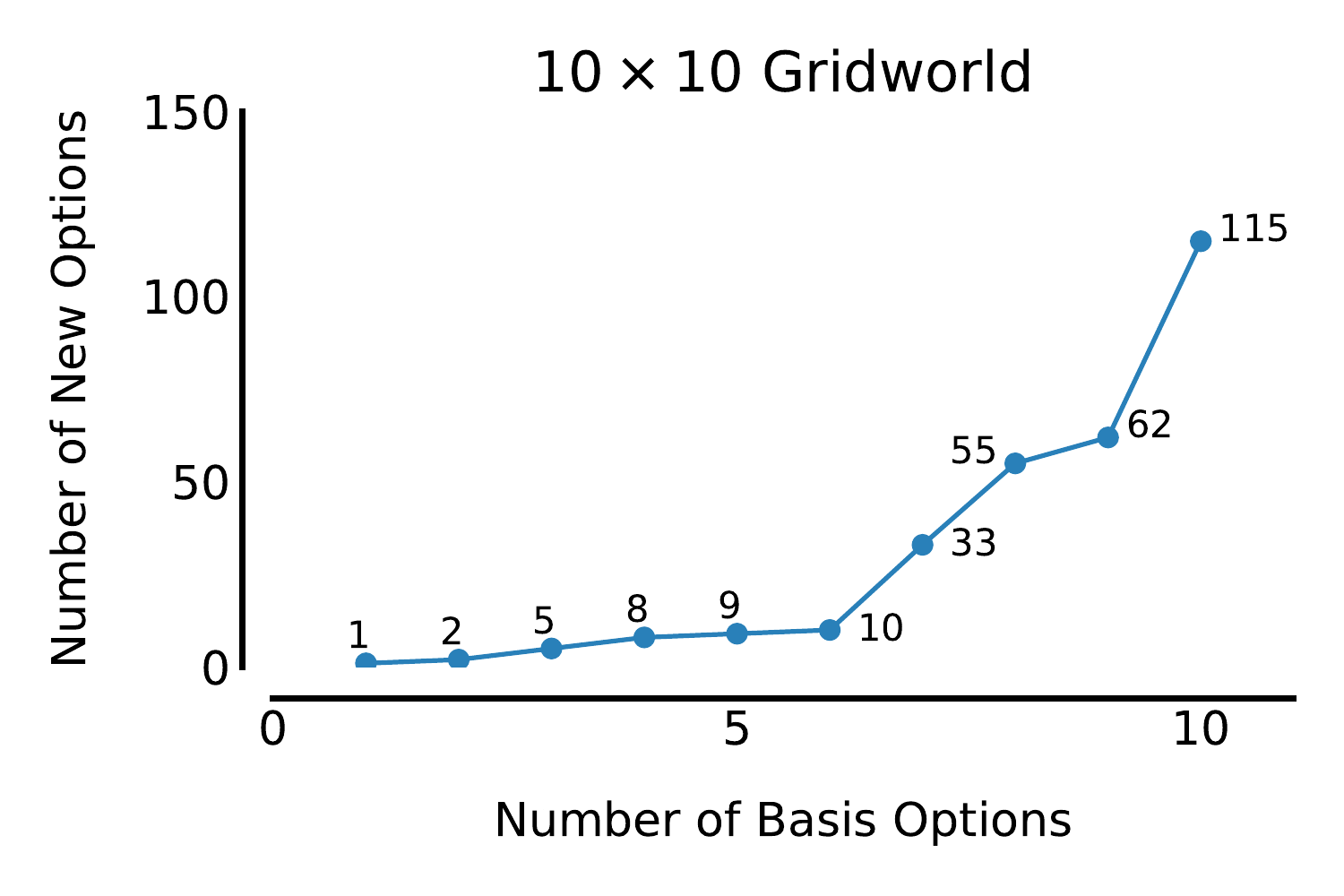}
         \end{subfigure}
         \begin{subfigure}[b]{0.49\textwidth}
            \includegraphics[width=\textwidth]{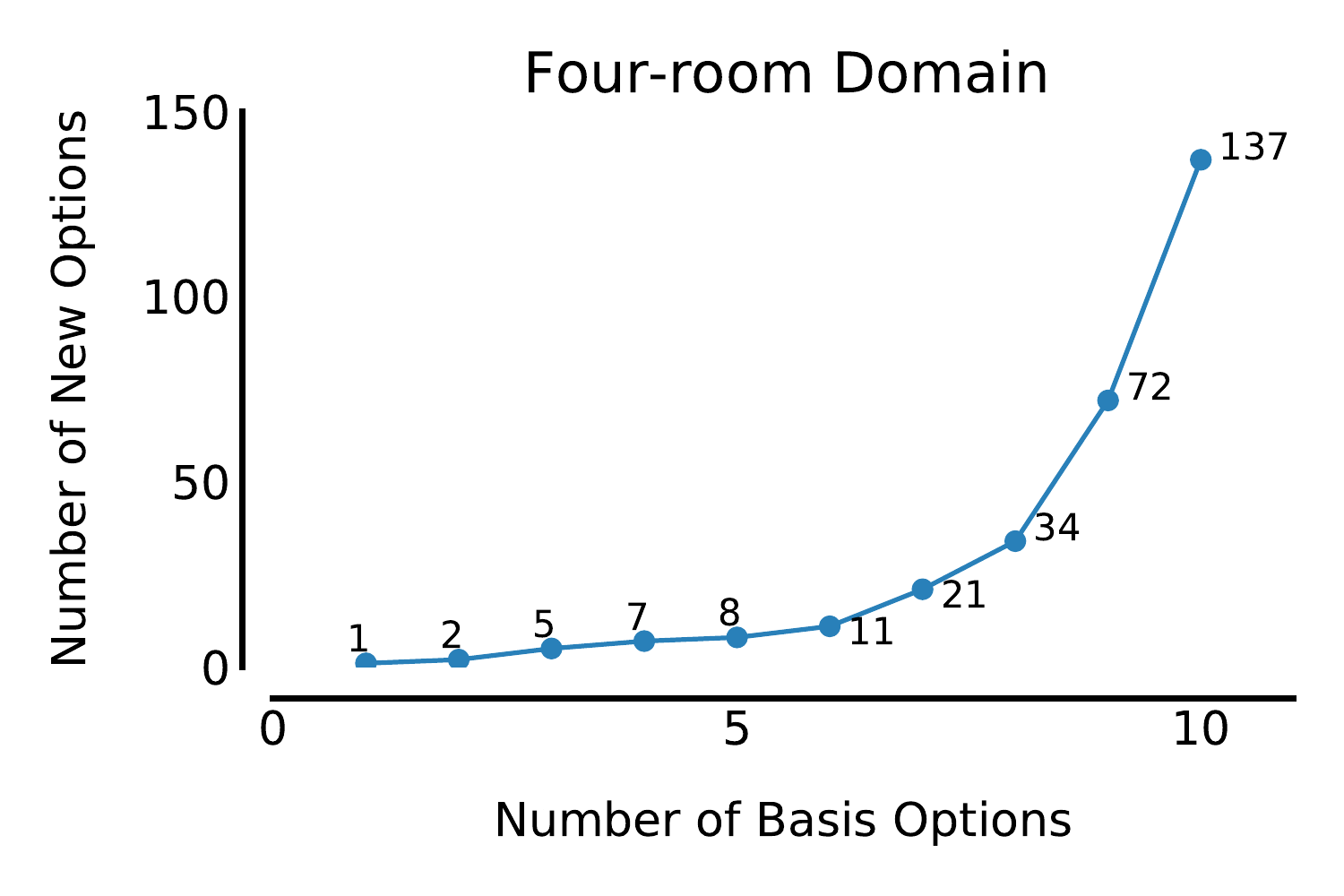}
         \end{subfigure}
     \caption{Number of unique options generated by combining eigenoptions. We consider two options to be the same if they have the same set of terminal states. In the open-room, for example, the $1st$ eigenoption leads to a single option: itself. Adding a $2nd$ eigenoption only leads to an extra option because the $1st$ and $2nd$ eigenoptions stem from the same eigenvector, thus they cancel each other when combined. Adding a $3rd$ eigenoption leads to two extra options (combination of the $1st$ and $3rd$, as well as the $2nd$ and $3rd$), totalling $5$ options (instead of $3$). This difference becomes starker as more basis options become available.}
     \label{fig:num_new_options_openroom}
\end{figure}

So far we have shown that combining eigenoptions with the option keyboard leads to interesting, semantically meaningful, options, as well as a large number of options.  We conclude this section showing these options are diverse. We do so by looking at the frequency at which these options terminate in different states. Figures~\ref{fig:heatmap_openroom} and~\ref{fig:heatmap_four_rooms} depict heatmaps contrasting the set of terminal states induced by eigenoptions and those induced by the combinations of those eigenoptions. The numbers in each tile report the number of times, across the considered options, that the corresponding state is a terminal state. Colors represent the relative frequency of termination across all states.

It is reassuring to see that the few basis options depicted on the left of Figures~\ref{fig:heatmap_openroom} and~\ref{fig:heatmap_four_rooms} can be combined to generate the diverse behaviors depicted on the right. In these environments, even when considering only $\{0,1\}$ combinations, ten options are enough for the option keyboard to generate options that visit most states in the environment. Eigenoptions are orthogonal \emph{bases of behavior} that span most of the behaviors one would be interested in. In the open-room, for example, eigenoptions terminating in $16$ states end up being combined to terminate in $96$ states. In the next section we show this diversity in fact allows the agent to better explore the environment.

\subsection{Options Combined through the Option Keyboard Improve Exploration} \label{subsec:ok_diffusion_time}

In this section, we show that the diversity introduced by the option keyboard in fact impacts an agent's ability to explore the environment. Figures~\ref{fig:dt_openroom_ok_eigenoptions} and~\ref{fig:dt_four_rooms_ok_eigenoptions} depict the diffusion time induced by the first ten eigenoptions and by the options generated by the option keyboard  when using the same eigenoptions as basis options. We term \emph{OK-Eigenoptions [0, 1]} the set of options generated by $\{0,1\}$ combinations of eigenoptions, and \emph{OK-Eigenoptions [-1, 0, 1]} the set of options generated by $\{-1, 0, 1\}$ combinations, which we discuss later.

We compare the diffusion time induced by these different option sets based on the number of basis options (i.e., eigenoptions) in them. This might seem unfair at first, as more options are used by OK-Eigenoptions, but this is exactly one of the benefits of the option keyboard: with an almost negligible computational cost, and no additional agent-environment interactions, a large combinatorial counterpart of the original options becomes available to the agent.

\begin{figure}[t]
     \centering
         \centering
         \includegraphics[width=0.65\textwidth]{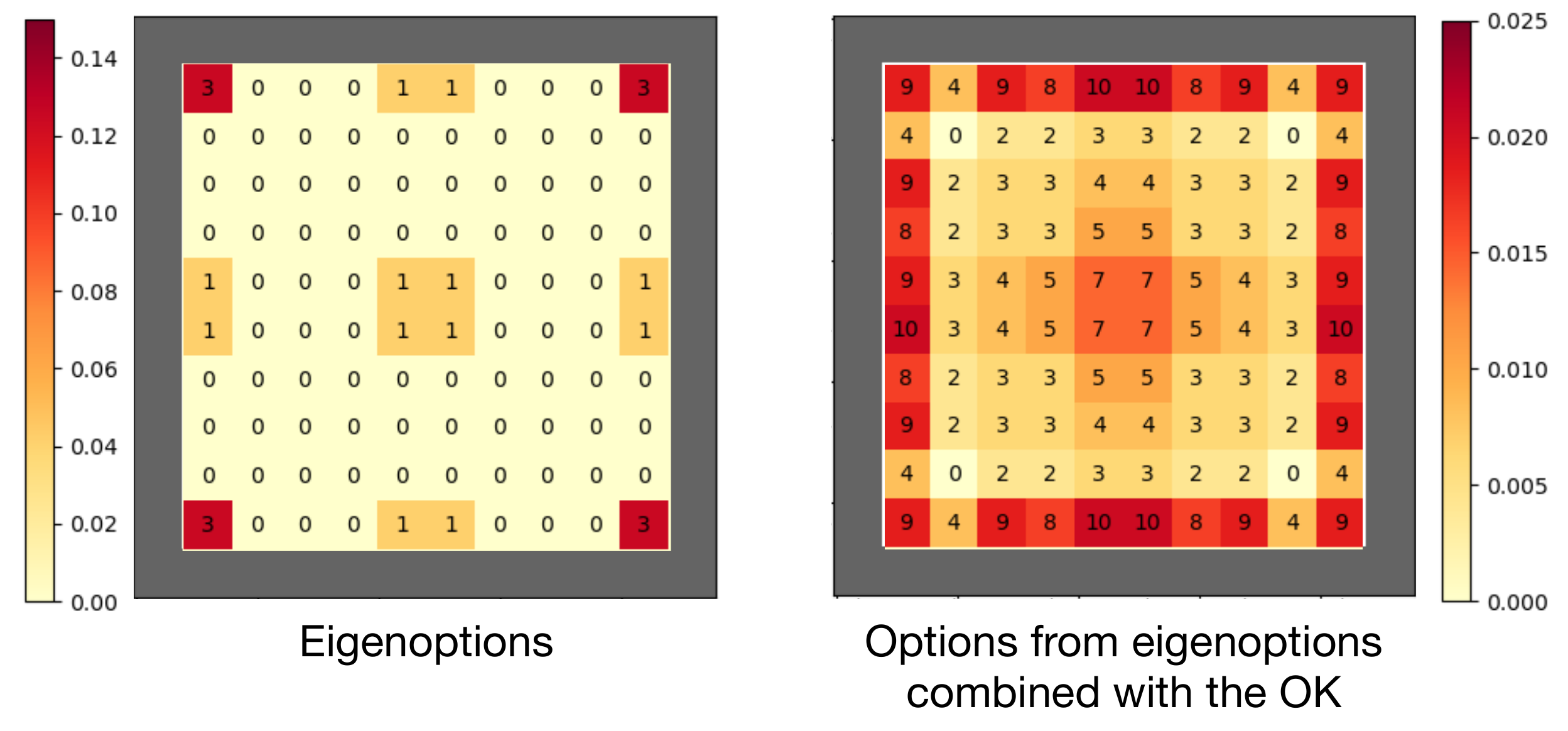}
     \caption{Frequency options terminate in each state in the open-room domain.}
     \label{fig:heatmap_openroom}
\end{figure}

\begin{figure}[t]
     \centering
         \centering
         \includegraphics[width=0.65\textwidth]{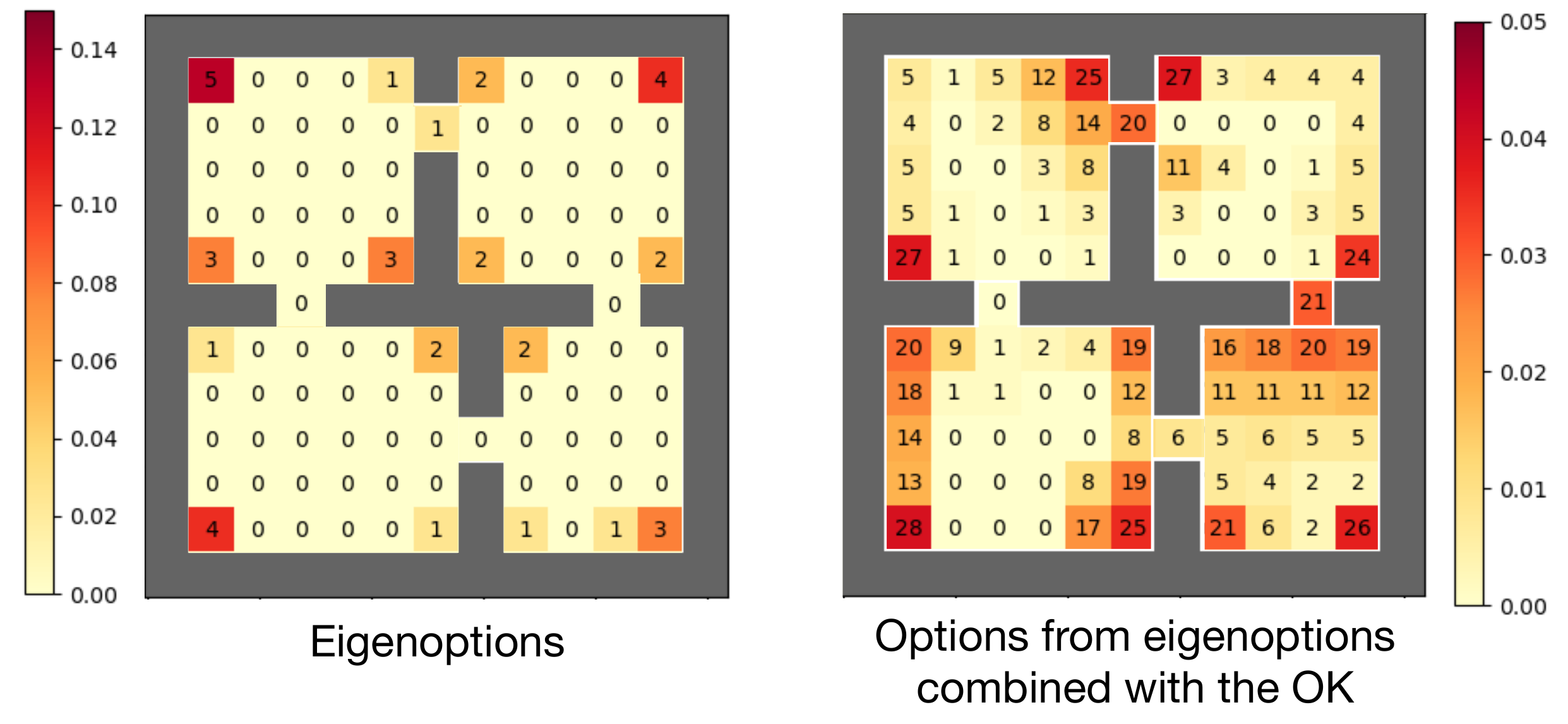}
         \caption{Frequency options terminate in each state in the four-room domain.}
         \label{fig:heatmap_four_rooms}
\end{figure}

Shortly, using the option keyboard to augment an agent's option set can drastically reduce the number of eigenoptions the agent needs to effectively explore the environment. In the open-room and four-room domains, the agent needs $10$ and $12$ eigenoptions to make the induced exploration more effective than that obtained by a uniform random policy. When augmenting the agent with the options generated by the option keyboard, this number is reduced by $30\%$ and $42\%$, respectively. Also, the median diffusion time is not affected when adding the options generated by the option keyboard, suggesting that these additional options improve exploration by making eigenoptions more robust.

\begin{figure}[t]
     \centering
     \includegraphics[width=\textwidth]{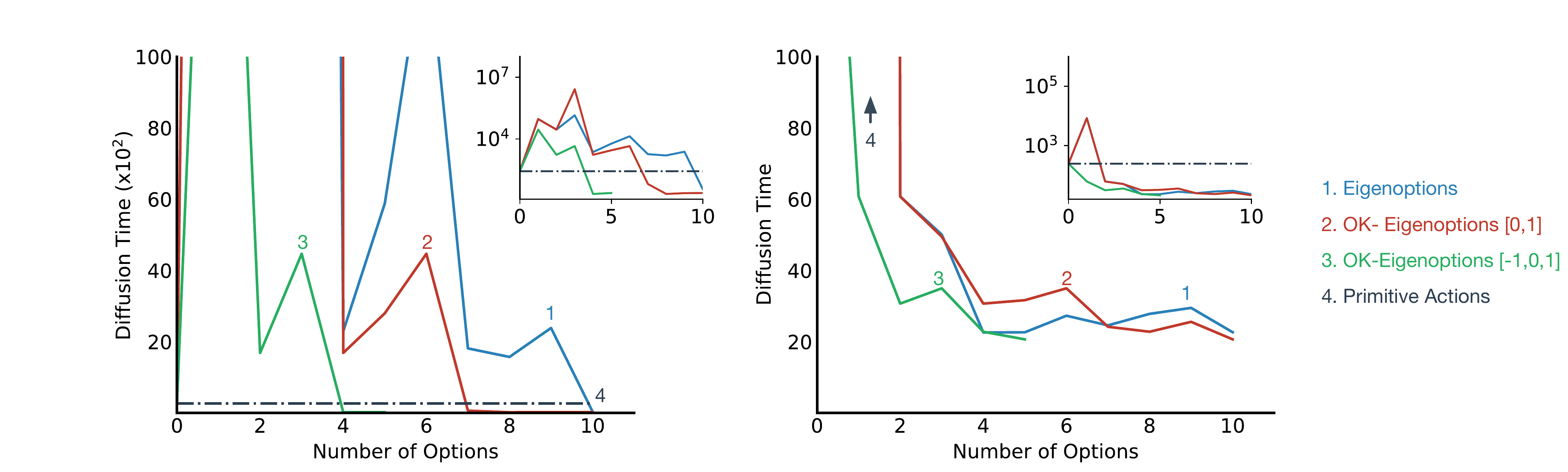}
     \caption{Average and median diffusion time in the open-room domain. Each curve depicts a function of the number of primitive options the agent has access to, but the actual number of options used may vary, as described in the text. The performance of primitive actions is not depicted in the plot on the right because it is out of the reported range. The scale on the left is $100$ times bigger than in the other plots due to the big impact individual options have at first.}
     \label{fig:dt_openroom_ok_eigenoptions}
\end{figure}

\begin{figure}[t]
     \centering
     \includegraphics[width=\textwidth]{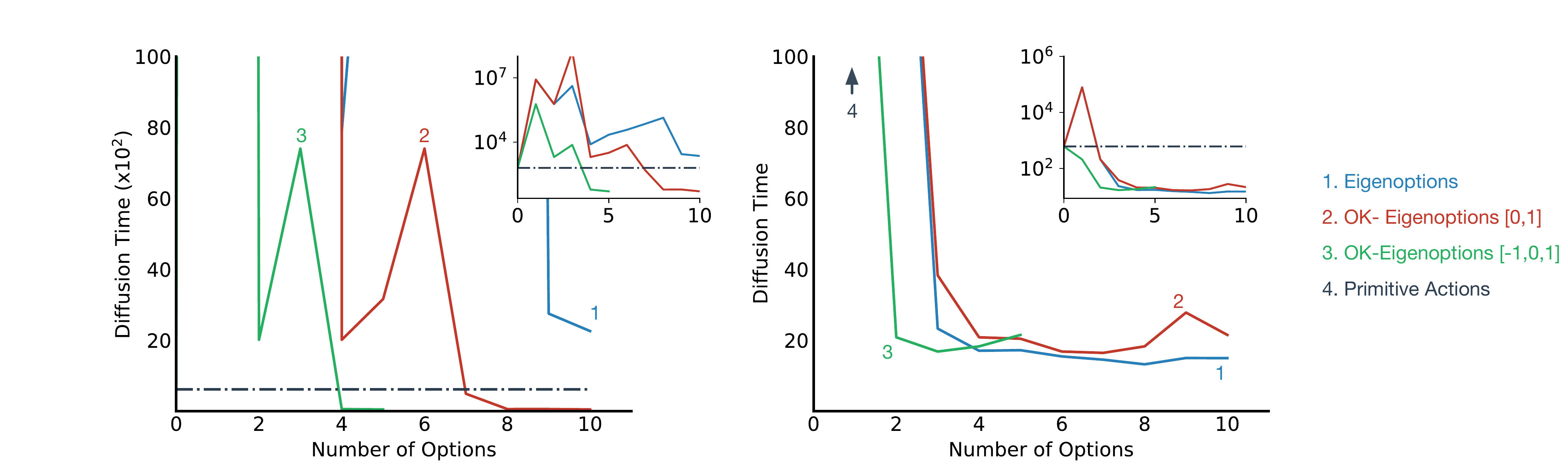}
     \caption{Average and median diffusion time in the four-room domain. Each curve depicts a function of the number of primitive options the agent has access to, but the actual number of options used may vary, as described in the text. The performance of primitive actions is not depicted in the plot on the right because it is out of the reported range. The scale on the left is $100$ times bigger than in the other plots due to the big impact individual options have at first.}
     \label{fig:dt_four_rooms_ok_eigenoptions}
\end{figure}

Finally, it is important to stress that we obtained these results using only $\{0,1\}$ weights. A trivial improvement to what we have done so far is to generate only one (instead of two) eigenoption from each eigenvector of the SR. The option corresponding to the opposite direction of each eigenvector can be obtained with the option keyboard by assigning a $-1$ weight to the corresponding eigenoption. This immediately reduces the number of options to be learned by half! We use the green line in Figures~\ref{fig:dt_openroom_ok_eigenoptions} and~\ref{fig:dt_four_rooms_ok_eigenoptions} to make this point obvious. Shortly, by leveraging the option keyboard in the simplest possible way we are able to show, in both environments, that $4$ eigenoptions are enough to improve exploration beyond what the uniform random policy does. This is an important result because, for the first time, we are able to obtain results that match the intuition that four options should suffice in these environments. It was not clear how to discover (and combine) such options until now.

\subsection{A Big Picture View of Combining Eigenoptions with the Option Keyboard} \label{subsec:conclusion_exp_ok}

The combination of eigenoptions and the option keyboard allow these approaches to heavily benefit from each other. Eigenoptions, by construction, provide a diverse set of behaviors to the agent, but dozens of options are necessary for better exploration. The option keyboard, on the other hand, combines different options to generate a large set of new behaviors, but only once basis options are provided. The combination of the eigenoptions and the option keyboard leads to more diverse behavior that improves exploration while drastically reducing the number of options that need to be discovered. The option keyboard is also able to drastically reduce the computational cost of learning eigenoptions.

Our experiments demonstrated the potential of combining an option discovery method based on the SR with the option keyboard. There are multiple directions one could pursue in a future work. An interesting direction is how to define the weights used by the option keyboard. While we used $\{-1, 0, 1\}$ weights, different weights are likely to be more expressive, generating even more diverse behaviors. From a theoretical perspective, it would be interesting to characterize the number of basis options needed by an agent in order to, for example, ensure that there is at least one combined option that terminates in each state. This number could also be used as a measure of task complexity (or similarity).

From an algorithmic perspective, it would be valuable to understand the impact of using the option keyboard to combine options estimated online, and how it can be used with multiple iterations of the ROD cycle. It would also be useful to see these ideas evaluated in the function approximation case. An immediate challenge in this case would be to identify equivalent options, something trivially done in the tabular case. Scaling these ideas to more challenging domains might also require non-linear function approximation and representation learning, problems tackled by deep reinforcement learning. As previously mentioned, for clarity purposes, we decided to not discuss these ideas in this paper, but there are several advances and successes of this line of work in the deep reinforcement learning case, such as the introduction of optimization objectives that allow non-linear function approximators such as neural networks to approximate the eigenvectors of the graph Laplacian~\citep{Pfau18,Wang21,Wu18}, and their use for option discovery in problems which require function approximation~\citep{Jinnai20}. We discuss some of these ideas in the next section.

\section{Related Work}~\label{sec:related_work}

We have discussed the role of the SR for temporal abstraction in reinforcement learning, focusing on options for temporally-extended exploration. Additionally, we explored how the SR can be used to seamlessly combine different options without additional learning. It is unfeasible to thoroughly discuss even all the methods for options discovery that aim at exploration, let alone the diverse set of methods that do not. Instead, here we outline only the main ideas behind these other approaches. Besides additional approaches, we also discuss existing extensions of the ideas we presented to the function approximation setting, and we conclude by drawing connections between the topics we discussed and recent results in neuroscience.

Naturally, there are also other approaches, not based on the SR, that share similar intuitions to those described here, such as discovering task-agnostic options that terminate in a diverse set of states. They achieve this in various ways, such as maximizing metrics like empowerment, diversity, and entropy~\citep[e.g.,][]{Eysenbach19,Gregor16,Hansen20}. However, to focus on approaches based on the SR, we do not further discuss them here.

\subsection{Additional Option Discovery Methods based on the SR}

Planning and faster credit assignment are common use cases for options. These ideas are often associated with bottleneck options, that is, options that lead to states that connect different closely connected regions of the environment. The SR has also been used by option discovery methods in this setting. \cite{Stachenfeld14,Stachenfeld17}, for example, explicitly searched bottleneck states by using the eigenvectors of the SR to obtain an approximate solution to the $k$-way normalized min-cut problem. In this case, the eigenvectors of the SR are used in a different way from what we described so far, with positive and negative elements of an eigenvector approximating different partitions, a known result from spectral graph theory~\citep{Shi00}. \cite{Kulkarni16a}, in the context of deep reinforcement learning, also discovered bottleneck options through normalized cuts. Another relevant idea in this context is the algorithm named \emph{successor options}~\citep{Ramesh19}. This method clusters the SR vectors and defines the clusters centroids as the options' terminal states. In this case, even though the method is not explicitly designed to seek for bottleneck states, the empirical evidence provided shows that successor options tend to find bottleneck states much more often than eigenoptions~\citep[see Figure~5 by][]{Ramesh19}.

In the context of bottleneck options and the framework presented in Section~\ref{sec:framework}, Continual Curiosity-driven Skill Acquisition~\citep[CCSA;][]{Kompella17} is also relevant to our discussion. CCSA also discovers options that maximize an intrinsic reward obtained from a learned representation, which in this case is obtained from Slow Feature Analysis~\citep[SFA;][]{Wiskott02}. Importantly, \cite{Sprekeler11} has shown that, given a specific choice of adjacency function, proto-value functions are equivalent to SFA. SFA becomes an approximation of proto-value functions if the function space used in the SFA does not allow arbitrary mappings from the observed data to an embedding. Recall that one can see proto-value functions as a special case of the eigenvectors of the SR~\citep{Machado18b}, making SFA, by transitivity, connected to the ideas discussed in this paper.

In the context of options for temporally-extended exploration, \cite{Bar20} recently introduced \emph{diffusion options}, which were inspired by the ideas discussed here. Instead of looking at individual eigenvectors, this approach sets out to use the full eigenspectrum of the adjacency matrix. Specifically, it uses the eigenvalues as weights to linearly combine the right and left eigenvectors of the non-symmetric lazy random walk matrix, a variation of the graph Laplacian matrix we discussed in Sections~\ref{sec:background} and~\ref{sec:successor_representation}. Despite the intuitive similarity between the eigenvectors of the non-symmetric lazy random walk matrix and the eigenvectors of the SR, there is still not a formal connection between these two objects. However, it is interesting to note that \citeauthor{Bar20}'s empirical results corroborate what we present in this paper. Shortly, (1) these options provide temporal-extended exploration, (2) eigenoptions tend to be more effective than covering options, and (3) diffusion options also needs to use $10-20$ eigenvectors to be effective in gridworld domains, similar to our results in Section~\ref{sec:experiments_temporally_extended_exploration}. \looseness=-1

Finally, the eigenoption-critic \citep{Liu17} is also relevant to this overview. It combines eigenoptions to the option-critic architecture~\citep{Bacon17} for a single online phase for option discovery and option learning, while also taking the reward generated by the environment into consideration. Moreover, it is applicable to settings in which non-linear function approximation is required, which we discuss in the next section. Importantly, it also extends eigenoptions to continuous state spaces with the Nystr\"om approximation.

Successor features and the other extensions to the function approximation case, which we discuss in the next section, naturally allow us to extend ideas such as eigenoptions and covering options to continuous state spaces. The results in Figure~\ref{fig:results_cycle_exploration}, for example, were generated in a continuous state space. But the Nystr\"om method is another interesting approach that has been used in the past to extend eigenoptions to the continuous case~\citep[c.f.][for details]{Mahadevan07}. Shortly, it interpolates the value of eigenvectors computed on observed states to novel states, addressing the issue that the exact same state is rarely visited twice in continuous state spaces. For the normalized Laplacian, this is achieved with $$\mathbf{e}_i(s) = \frac{1}{1-\lambda_i} \sum_{i : |s - s_k| < \epsilon} \frac{w(s, s_k)}{\sqrt{d(s)d(s_k)}} \mathbf{e}_i(s_k),$$
where $\lambda_i$ and $\mathbf{e}_i$ denote the $i$-th eigenvalue and eigenvector, $\epsilon$ is a threshold parameter, $s_k$ is the state observed in the past that is closest to the new observation $s$; $w(s, s_k)$ are the weights from $s$ to $s_k$, and $d(s) = \sum_{k:|s-s_k| < \epsilon} w(s, s_k)$, is the degree of $s$. Note that $d(s)$ is determined when a new state $s$ is encountered while $d(s_k)$ is computed in the graph construction phase~\citep{Liu17}.

\subsection{Function Approximation and the SR}~\label{sec:related_work_fa}

In this paper, we focused on the tabular case for conceptual clarity. Nevertheless, in several problems one cannot uniquely identify the states in the environment and function approximation is required. Scaling these ideas up has been an active topic of research, with solutions proposed to both the linear and non-linear function approximation cases. 

In the \textbf{linear function approximation} case, given a set of features $\bm{\phi}(\cdot)$, \cite{Machado17} proposed to generate the options' intrinsic reward function with the singular vectors of the matrix obtained from the difference between current and previous observations. If all transitions in the graph are sampled once, for tabular representations, this matrix recovers the same options we obtain with the combinatorial Laplacian. This is formalized in Theorem~\ref{thlinearfa} in Appendix~\ref{app:proofs}.

In the \textbf{non-linear function approximation} case, when neural networks are used to approximate the value function, there are many more solutions to the problem of scaling up the ideas we presented here. A direct way of doing so is by having the network also outputting successor features~\citep[e.g.][]{Barreto20,Borsa19,Hansen20,Hoang21,Kulkarni16a,Liu21,Machado18b,Machado20}. This is often done by training the neural network to also minimize the loss
\begin{eqnarray}
\mathcal{L}(s, s') = \mathbb{E} \Bigg[ \bigg(\bm{\phi}(s) + \gamma \bm{\psi}\big(\bm{\phi}(s')\big) - \bm{\psi}\big(\bm{\phi}(s)\big) \bigg)^2\Bigg]\label{eq:loss_sf},
\end{eqnarray}
or one of its variations, where $\bm{\psi}$ denotes successor features. Sometimes it is assumed the representation $\bm{\phi}(\cdot)$ is known beforehand~\citep[e.g.][]{Borsa19}, while other times it is defined as the output of an inner layer of the neural network~\citep[e.g.,][]{Hoang21,Machado18b}. The latter can be unstable and, because the representation is also being learned, one needs to be careful since $\bm{\phi}(\cdot) = \mathbf{0}$ is a fixed point of this minimization problem. In this case, it is common to prevent gradients to backpropagate to the representation layer, and to introduce auxiliary tasks to shape the representation learning process~\citep{Jaderberg17}.

Alternatively, researchers have also been using results from graph drawing theory~\citep{Koren03} to design loss functions that allow the network to explicitly estimate the eigenvectors of the Laplacian~\citep[e.g.,][]{Erraqabi21,Wang21,Wu18}. The loss function originally proposed, with later methods introducing variations, was
\begin{eqnarray}
G(\mathbf{e}_1, \ldots \mathbf{e}_d) &=& \frac{1}{2}\mathbb{E} \Bigg[\sum_{k=1}^d \Big(\mathbf{e}_k(u) - \mathbf{e}_k(v)\Big)^2\Bigg] \nonumber\\
& & \ \ \ \ \ \ \ \ \ \ \ \ \ + \beta \mathbb{E} \Bigg[\sum_{j,k} \Big(\mathbf{e}_j(u)\mathbf{e}_k(u) - \delta_{jk} \Big)\Big(\mathbf{e}_j(v)\mathbf{e}_k(v) - \delta_{jk} \Big) \Bigg],
\end{eqnarray}
where $\mathbf{e}_1, \ldots, \mathbf{e}_d$ are the first $d$ eigenvectors, $u$ and $v$ are different states, $\beta$ is a penalty weight parameter, and $\delta_{jk}$ is a soft constraint used to capture the orthogonality constraint between different eigenvectors.

Intuitively, the loss function above has an attractive and a repulsive term, which is common in contrastive losses~\citep[e.g.,][]{Li21}. The first term, the attractive one, tries to ensure consecutive states are put together in the embedding, as in the SR. The repulsive term repels the embedding of states independently sampled from the stationary distribution of the induced Markov chain, trying to make eigenvectors orthogonal to each other. In the first term, the state $u$ is assumed to be (uniformly) sampled from the state distribution and $v$ is defined by the environment's dynamics. In the second, $v$ is randomly sampled. We did not depict this in the expectations to avoid cluttering the equations. \citeauthor{Wu18} presents a precise discussion and the accompanying derivations. This approach, based on the graph drawing objective, has also been used to scale covering options to the deep reinforcement learning case~\citep{Jinnai20}. This is what was used to generate Figure~\ref{fig:results_cycle_exploration}.

Finally, there have also been attempts to scale up the ideas discussed here by generating abstract transition graphs, with a neural network being used to generate the state abstractions~\citep{Hoang21,Mendonca19}. The work by \cite{Hoang21} is particularly interesting as it can be seen as instantiating multiple iterations of the ROD cycle while dealing with function approximation. Shortly, \citeauthor{Hoang21} use a neural network to learn successor features, as outlined in Eq.~\ref{eq:loss_sf}, and they define successor feature similarity, which is the dot-product between two SFs, to measure the distance between two different states; using those to define landmark states that the agent plans to visit. When in a landmark (abstract) state considered to be in the frontier of the agent's experience, the agent starts to act randomly to further explore the environment.

\subsection{The Relationship of the SR to Other Ideas in RL and Other Fields}

\vspace{0.3cm}

As mentioned in Section~\ref{sec:successor_representation}, the SR is directly related to several other ideas in reinforcement learning. As already discussed, it is related to proto-value functions~\citep{Machado18b} and slow-feature analysis~\citep{Sprekeler11}, and it can be seen as encoding the LSTD matrix~\citep{Lagoudakis03}. Moreover, the SR can be seen, for example, as a form of dual approach to value-function based methods \citep{Wang07}. In this formalism, which has been shown to avoid the risk of divergence, one maintains an explicit representation of visit distributions instead of value functions. For exploration, it has been shown that the norm of the SR implicitly encodes state-visitation counts~\citep{Machado20}. 

It is particularly interesting to observe that methods introduced for computational reinforcement learning are related to recent results in neuroscience and human behavior. From a neuroscience perspective, \cite{Stachenfeld14,Stachenfeld17} have suggested the SR is able to model activations in the hippocampus. Specifically, they argue that hippocampal place fields encode a predictive representation of the current and future states under the current transition distribution. Additionally, and directly related to the option discovery methods we presented here, \cite{Stachenfeld14,Stachenfeld17} present results demonstrating that the eigenvectors of the SR model grid cells activations, also suggesting that the entorhinal cortex may use that information to aid hierarchical reinforcement learning. This result is motivated by the fact that the eigenvectors of the SR allow one to capture natural boundaries, potentially enconding metric information about the space. From a planning perspective, they show how one can use the eigenvectors of the SR to identify bottleneck states, which are convenient waypoints for likely being traversed along many optimal trajectories~\citep[see][]{Solway14}.

On a higher level of abstraction, the SR has also been used to explain human behavior, specifically the notion of choice in humans. \cite{Momennejad17}, for example, has suggested the SR can be used to introduce a ``subtler, more cognitive notion of habit''. They argue that the SR can be seen as an underlying computational procedure that allows humans to balance the flexibility of model-based methods and the efficiency of model-free methods. Even more directly related to the concepts we presented, \cite{Tomov21} also discusses the SR in the context of human decision making, suggesting it can be used, alongside GPI, as a model for human decision making in multi-task scenarios.

In summary, the ideas discussed in this paper around the SR, such as using it as a representation, for efficient transfer learning, or for option discovery, seem to model well data collected from intelligent, biological animals. Although this does not necessarily mean that these ideas are the correct way of tackling the problems discussed here, these results are encouraging evidence that it might be worth investigating these ideas further.

\clearpage

\section{Conclusion}~\label{sec:conclusion}

In this paper, we discussed the role of the successor representation (SR) when using temporal abstractions in reinforcement learning. We presented a general framework for option discovery, termed Representation-driven Option Discovery cycle, which follows a constructivist approach, and we examined two instantiations of this cycle, executed for different numbers of iterations. These instantiations use the eigenvectors of the SR to discover options for temporally-extended exploration, thoroughly evaluating them and shedding light on decisions every option discovery method should make, such as on how to define the initiation set and termination condition of each option. Moreover, we discussed how the SR can also be used to address the inevitable trade-off every agent is subject to: to discover more options in order to have access to a more expressive set of behaviors, or to restrict them to facilitate learning. This is done through the option keyboard, which extends, without additional learning, a finite set of options to a combinatorially large counterpart. The empirical evaluation of using the option keyboard to combine options discovered with the SR provides encouraging evidence of the synergy of these approaches, with this combination drastically reducing the computational cost of option learning while increasing the expressivity of the options available to the agent.

The goal of this paper was to provide a unified perspective over different approaches for temporal abstraction in reinforcement learning, showing how they can all be cast as discovering options from the SR, and how these ideas have evolved throughout the time. Naturally, there are several extensions of these approaches in order to scale them up to problems with large state spaces. Moreover, there is increasing evidence that the SR also plays an important role, at different levels of abstraction, in human and animal decision-making. We believe using the SR as the main substrate for temporal abstraction is a promising research direction and we hope this paper serves as a catalyst to this line of work. In particular, we believe that a cycle such as the one described here might end up being a core component of agents that are capable to continually learn and to acquire increasingly complex skills. Intelligent agents should be constantly acquiring new data, and this data should allow the agent to refine its representation of the world, which in turn should serve to inform which temporal abstractions an agent should learn, further empowering the agent's ability to collect new data, in a virtuous cycle of discovery.

\section*{Acknowledgements}

This work was partially developed while Marlos C. Machado was at Google Research, Brain Team. The authors would like to thank Tom Schaul, Adam White, and the anonymous reviewers for their thorough feedback on an earlier draft; and Dale Schuurmans, Yuu Jinnai, Marc G. Bellemare, and Patrick Pilarski for useful discussions. Marlos C. Machado, Doina Precup, and Michael Bowling are supported by a Canada CIFAR AI Chair.

\vfill

\appendix

\section{Proto-value Functions and their Equivalence to the Eigenvectors of the SR}\label{app:pvfs}

As mentioned in the main paper, the SR is present in several RL algorithms, either explicitly or implicitly. An important result for this paper is that the eigenvectors of the SR are equivalent to proto-value functions \citep[PVFs;][]{Mahadevan05,Machado18b}.\footnote{This holds when the SR is defined w.r.t. the uniform random policy in a deterministic and symmetric environment. The cardinality of the action set should also be the same across all states.} Because these properties were first discussed in the PVFs literature, we further discuss PVFs and their properties here.

Proto-value funtions~\citep[PVFs;][]{Mahadevan05} were originally introduced as representations that reflect the geometry of the environment. Specifically, they are basis functions based on the notion of diffusion models~\citep{Coifman05,Kondor02}, which capture how information flows in the environment by modeling it as a graph connecting states that are one action away from each other. This is motivated by the fact that value functions can be seen as the result of rewards diffusing through the state space, governed by the environment dynamics~\citep{Mahadevan07}. Formally, PVFs are the eigenvectors of a symmetric diffusion operator such as the \emph{normalized Laplacian},

\begin{eqnarray}
\mathbf{L} = \mathbf{D}^{-\frac{1}{2}}(\mathbf{D} - \mathbf{W})\mathbf{D}^{-\frac{1}{2}},
\end{eqnarray}
where $\mathbf{W}$ is the graph's adjacency matrix and $\mathbf{D}$ the diagonal matrix whose entries are the row sums of $\mathbf{W}$. In the simplest setting, for states $s_i$ and $s_j$, the $ij$-th entry of matrix $\mathbf{W}$ is
$$\mathbf{W}_{ij}=\left\{\begin{array}{rl}
1,&\mbox{if}\quad p (s_j | s_i, a) > 0,\\
0, &\mbox{otherwise},
\end{array}\right.
$$
for any action $a \in \mathscr{A}$. Notice the matrix $\mathbf{W}$ can potentially be extended to a weight matrix.

These diffusion models are tightly related to the random walk diffusion model $\mathbf{D}^{-1}\mathbf{W}$. A diffusion model works as a surrogate that is easier to estimate than the full transition matrix, while being useful for value function approximation because we can represent the value function as a linear combination of the eigenvectors of the transition matrix, as shown in Eq.~\ref{eq:value_function}. See the work by \citeauthor{Mahadevan07}~(\citeyear{Mahadevan07}) for a detailed discussion.

The formal result of equivalence between PVFs and the eigenvectors of the SR is below.

\begin{restatable}[\citeauthor{Machado18b} \citeyear{Machado18b}]{theorem}{thequivalence} Let $n$ be the  number of rows (and columns) of matrix $\mathbf{P_\pi}$, and let $i$ and $j$ denote indices such that $i + j = n + 1$. The $i$-th eigenvalue of the SR, defined w.r.t. a uniform random policy, and the $j$-th eigenvalue of the normalized Laplacian are related as follows when in a symmetric and deterministic environment:
$$\lambda_{\mbox{\tiny{PVF}}, j} = \Big[1 - (1 - {\lambda^{-1}_{\mbox{\tiny{SR}}, i}}) \gamma^{-1}\Big].$$
The $i$-th eigenvector of the SR, ${\bf e}_{\mbox{\tiny{SR}}, i}$, and the $j$-th eigenvector of the normalized Laplacian, ${\bf e}_{\mbox{\tiny{PVF}}, j}$, are related as follows:
$${\bf e}_{\mbox{\tiny{PVF}}, j} = (\gamma^{-1} \mathbf{D}^{1/2}) {\bf e}_{\mbox{\tiny{SR}}, i}.$$
\label{th:equivalence}
\end{restatable}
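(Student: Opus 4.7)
The plan is to translate everything into statements about eigenvalues/eigenvectors of a single well-behaved matrix and chase the algebra from there. Under the stated assumptions (symmetric, deterministic transitions; uniform random policy; identical action-set cardinality at every state), the transition matrix factors cleanly as $\mathbf{P}_\pi = \mathbf{D}^{-1}\mathbf{W}$, because each of the $|\mathscr{A}|$ actions deterministically leads to a distinct neighbor and the policy weights them equally. First I would record this identity and then observe that the non-symmetric matrix $\mathbf{D}^{-1}\mathbf{W}$ is similar to the symmetric matrix $\mathbf{D}^{-1/2}\mathbf{W}\mathbf{D}^{-1/2}$ via the conjugation by $\mathbf{D}^{1/2}$; concretely, if $\mathbf{P}_\pi \mathbf{v} = \mu \mathbf{v}$, then $(\mathbf{D}^{-1/2}\mathbf{W}\mathbf{D}^{-1/2})(\mathbf{D}^{1/2}\mathbf{v}) = \mu(\mathbf{D}^{1/2}\mathbf{v})$. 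This guarantees that $\mathbf{P}_\pi$ has a full real spectrum and pins down the relationship between its eigenvectors and those of any symmetric function of $\mathbf{W}$.

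Next I would read off the eigendecompositions of the two objects of interest as rational functions of the spectrum of $\mathbf{P}_\pi$. Writing $\mathbf{L} = \mathbf{I} - \mathbf{D}^{-1/2}\mathbf{W}\mathbf{D}^{-1/2}$ and $\bm{\Psi}_\pi = (\mathbf{I}-\gamma \mathbf{P}_\pi)^{-1}$, the similarity above gives $\mathbf{L}(\mathbf{D}^{1/2}\mathbf{v}) = (1-\mu)(\mathbf{D}^{1/2}\mathbf{v})$, while $\bm{\Psi}_\pi$ shares the eigenvectors of $\mathbf{P}_\pi$ with eigenvalues $(1-\gamma\mu)^{-1}$. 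Eliminating the common parameter $\mu$ between the two expressions yields the stated eigenvalue identity
\begin{equation*}
\lambda_{\text{PVF}} \;=\; 1 - (1 - \lambda_{\text{SR}}^{-1})\gamma^{-1},
\end{equation*}
and, since eigenvectors are defined only up to scaling, the same similarity yields $\mathbf{e}_{\text{PVF}} \propto \mathbf{D}^{1/2}\mathbf{e}_{\text{SR}}$, which matches the claimed formula $\mathbf{e}_{\text{PVF},j} = (\gamma^{-1}\mathbf{D}^{1/2})\mathbf{e}_{\text{SR},i}$ up to the chosen $\gamma^{-1}$ normalization.

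The remaining piece is the index bijection $i+j=n+1$. Here I would note that $\mu \mapsto (1-\gamma\mu)^{-1}$ is strictly increasing on $\mu\in(-1,1)$ while $\mu \mapsto 1-\mu$ is strictly decreasing, so the orderings induced on the two spectra are reversed. Under the standard conventions (SR eigenvalues sorted in decreasing order, normalized-Laplacian eigenvalues sorted in increasing order, both starting at index $1$), the $i$-th largest SR eigenvalue is paired with the $i$-th largest $\mu$, which corresponds to the $i$-th smallest $1-\mu$, i.e.\ the $(n{+}1{-}i)$-th largest PVF eigenvalue, giving $j = n+1-i$ as required.

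The step I expect to be the most delicate is the first one—justifying $\mathbf{P}_\pi = \mathbf{D}^{-1}\mathbf{W}$. This uses all three hypotheses simultaneously (symmetry, determinism, constant action cardinality) and is the place where the assumptions actually bite; everything afterwards is pure linear algebra. A minor additional care point is the index convention: one must fix at the outset whether the SR eigenvalues are indexed in increasing or decreasing order, because otherwise the bijection $i+j=n+1$ can look off by a flip. Once those two bookkeeping matters are handled, the proof is essentially a one-paragraph spectral calculation.
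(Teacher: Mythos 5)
Your proposal is correct and follows essentially the same route as the paper: both arguments reduce to the observation that the SR and the normalized Laplacian are functions of $\mathbf{P_\pi}=\mathbf{D}^{-1}\mathbf{W}$, conjugate by $\mathbf{D}^{1/2}$ to pass to the symmetric form, and read off the eigenvalue map $\lambda_{\text{SR}}=(1-\gamma\mu)^{-1}$ versus $\lambda_{\text{PVF}}=1-\mu$. If anything you are more careful than the paper's own proof, which substitutes $\mathbf{P_\pi}=\mathbf{D}^{-1}\mathbf{W}$ without comment and never justifies the index reversal $i+j=n+1$; your monotonicity argument for the reversed orderings fills that gap explicitly.
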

\begin{proof}
See Appendix~\ref{app:proofs}.
\end{proof}

Notice that while the eigenvectors of the normalized Laplacian and the eigenvectors of the SR are the same, the order in which they appear is flipped. The eigenvectors with corresponding lowest eigenvalues, when using PVFs, are equivalent to the eigenvectors with corresponding largest eigenvalues when using the SR. Importantly, the eigenvectors of the SR are equivalent to PVFs in a symmetric and deterministic MDP, that is, when every transition is reversible and action outcomes are deterministic. Thus, while the SR reduces to PVFs in a more restrictive case, it is also applicable to the more general case, allowing us to more easily capture stochasticity and asymmetries in the environment.

As discussed in the main paper, PVFs capture properties of the dynamics environment and this is the main motivation behind several option discovery methods. Importantly, PVFs were originally introduced as basis functions for function approximation and they ended up not being widely adopted. This paper did not study the use of PVFs as basis functions. This is a major difference from how PVFs were used in the past. In this paper we advocate for the use of these concepts for the discovery and combination of temporal abstractions. Moreover, the singular vectors of the SR can already be seen as a generalization of PVFs. Thus, the shortcomings of PVFs in the past should not be carried over when considering their use for temporal abstraction.

\section{Theoretical Results}~\label{app:proofs}

\vspace{-0.3cm}

We present the theoretical results below for completeness. These results were obtained in other works and we present them, as well as their proofs, in their original version. We cite them accordingly before each theorem. We refer the reader to the works by \cite{Machado17,Machado18b} for further details.

\begin{restatable}[\citeauthor{Machado17} \citeyear{Machado17}]{lem}{lemma_pvf1}\label{lemma:pvf1}
 Suppose \ $(\mathbf{I} + \mathbf{A})$ \ is \ a \ non-singular \ matrix, \ with $||\mathbf{A}|| \leq 1$. We have:
$$||(\mathbf{I} + \mathbf{A})^{-1}|| \leq \frac{1}{1 - ||\mathbf{A}||}.$$
\end{restatable}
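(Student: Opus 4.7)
The plan is to prove this via the standard Neumann series expansion, since the statement is essentially the operator-norm bound that underlies the convergence of $\sum_{k=0}^\infty (-\mathbf{A})^k$. I will first note that the hypotheses really require $\|\mathbf{A}\| < 1$ (so that the right-hand side $1/(1-\|\mathbf{A}\|)$ is finite and positive); the non-singularity of $\mathbf{I}+\mathbf{A}$ follows automatically in that regime but is stated as a hypothesis in the lemma.

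The first step is to establish the identity $(\mathbf{I}+\mathbf{A})^{-1} = \sum_{k=0}^\infty (-\mathbf{A})^k$. This can be shown by checking that the partial sums $\mathbf{S}_n = \sum_{k=0}^n (-\mathbf{A})^k$ form a Cauchy sequence in operator norm, using submultiplicativity $\|\mathbf{A}^k\|\le\|\mathbf{A}\|^k$ and the fact that $\sum_k \|\mathbf{A}\|^k$ converges whenever $\|\mathbf{A}\|<1$. Then a telescoping calculation gives $(\mathbf{I}+\mathbf{A})\mathbf{S}_n = \mathbf{I} - (-\mathbf{A})^{n+1} \to \mathbf{I}$, so the limit of $\mathbf{S}_n$ is the inverse of $\mathbf{I}+\mathbf{A}$.

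The second step is to bound the norm of the series directly:
\begin{equation*}
\|(\mathbf{I}+\mathbf{A})^{-1}\| \;=\; \Big\|\sum_{k=0}^\infty (-\mathbf{A})^k\Big\| \;\le\; \sum_{k=0}^\infty \|\mathbf{A}\|^k \;=\; \frac{1}{1-\|\mathbf{A}\|},
\end{equation*}
using the triangle inequality, submultiplicativity, and the geometric series formula.

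Alternatively, one can avoid series reasoning entirely by a one-line argument: set $\mathbf{B} = (\mathbf{I}+\mathbf{A})^{-1}$, so that $\mathbf{B} = \mathbf{I} - \mathbf{B}\mathbf{A}$, take norms to obtain $\|\mathbf{B}\| \le 1 + \|\mathbf{B}\|\,\|\mathbf{A}\|$, and rearrange (valid because $\|\mathbf{A}\|<1$) to conclude $\|\mathbf{B}\| \le 1/(1-\|\mathbf{A}\|)$. I would probably present the Neumann-series version as the main proof, since it also explains why $\mathbf{I}+\mathbf{A}$ is invertible in the first place, and mention the rearrangement argument as a shortcut. There is no real obstacle here — the only subtle point is flagging that the strict inequality $\|\mathbf{A}\|<1$ is what the bound actually needs.
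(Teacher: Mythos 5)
Your proposal is correct, and in fact it contains the paper's proof verbatim as your ``alternative'' one-line argument: the paper proves this lemma exactly by expanding $(\mathbf{I}+\mathbf{A})(\mathbf{I}+\mathbf{A})^{-1}=\mathbf{I}$ into $(\mathbf{I}+\mathbf{A})^{-1}=\mathbf{I}-\mathbf{A}(\mathbf{I}+\mathbf{A})^{-1}$, taking norms with the triangle inequality and submultiplicativity to get $\|(\mathbf{I}+\mathbf{A})^{-1}\|\leq 1+\|\mathbf{A}\|\,\|(\mathbf{I}+\mathbf{A})^{-1}\|$, and rearranging. The only difference from your shortcut is which side the factor of $\mathbf{A}$ lands on (the paper uses $\mathbf{A}\mathbf{B}$ where you use $\mathbf{B}\mathbf{A}$), which is immaterial. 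Your primary route via the Neumann series $\sum_k(-\mathbf{A})^k$ is a genuinely different and slightly stronger argument: it derives the invertibility of $\mathbf{I}+\mathbf{A}$ rather than assuming it, at the cost of a convergence argument; the paper's rearrangement is shorter but leans on the non-singularity hypothesis. Your observation that the bound really requires the strict inequality $\|\mathbf{A}\|<1$ is a fair and worthwhile catch --- the paper's final line conditions on $\|\mathbf{A}\|\leq 1$, which would make the division by $1-\|\mathbf{A}\|$ degenerate in the boundary case; in the paper's actual application (Lemma 2, with $\mathbf{A}=-\gamma\mathbf{T}$ and $\gamma<1$) the strict inequality holds, so nothing downstream breaks.
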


\begin{proof}
\begin{align*}
(\mathbf{I} + \mathbf{A})(\mathbf{I} + \mathbf{A})^{-1} &= \mathbf{I}\\
\mathbf{I}(\mathbf{I} + \mathbf{A})^{-1} + \mathbf{A}(\mathbf{I} + \mathbf{A})^{-1} &= \mathbf{I}\\
(\mathbf{I} + \mathbf{A})^{-1} &= \mathbf{I} -  \mathbf{A}(\mathbf{I} + \mathbf{A})^{-1}\\
||(\mathbf{I} + \mathbf{A})^{-1}|| &= ||\mathbf{I} -  \mathbf{A}(\mathbf{I}+\mathbf{A})^{-1}||\\
                 &\leq ||\mathbf{I}|| + ||\mathbf{A} (\mathbf{I} + \mathbf{A})^{-1}||\\
                 &\leq 1 + ||\mathbf{A}||||(\mathbf{I} + \mathbf{A})^{-1}||
\end{align*}
\begin{align*}
||(\mathbf{I} + \mathbf{A})^{-1}|| - ||\mathbf{A}||||(\mathbf{I} + \mathbf{A})^{-1}||&\leq 1\\
(1-||\mathbf{A}||) ||(\mathbf{I} + \mathbf{A})^{-1}|| &\leq 1\\
||(\mathbf{I} + \mathbf{A})^{-1}|| &\leq \frac{1}{1 - ||\mathbf{A}||} && \text{if} \ \ ||\mathbf{A}|| \leq 1. \\
\end{align*}

Where the first inequality is due to the fact that $||\mathbf{A} + \mathbf{B}|| \leq ||\mathbf{A}|| + ||\mathbf{B}||$ and the second inequality comes from the fact that $||\mathbf{A B}|| \leq ||\mathbf{A}|| \cdot ||\mathbf{B}||$.
\end{proof}

\begin{restatable}[\citeauthor{Machado17} \citeyear{Machado17}]{lem}{lemma_pvf2}
\label{lemma:pvf2}
The induced infinity norm of $(\mathbf{I} -  \gamma \mathbf{T})^{-1}\mathbf{T}$ is bounded by
$$||(\mathbf{I} -  \gamma \mathbf{T})^{-1}\mathbf{T}||_\infty \leq \frac{1}{(1 - \gamma)}.$$
\end{restatable}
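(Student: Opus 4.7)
The plan is to apply Lemma~\ref{lemma:pvf1} directly, since that lemma was just proved precisely for this purpose. Take $\mathbf{A} = -\gamma \mathbf{T}$ in the statement of Lemma~\ref{lemma:pvf1}. Assuming $\mathbf{T}$ is a row-stochastic matrix (which is the usual setting in the paper, where $\mathbf{T}$ plays the role of $\mathbf{P}_\pi$), we have $\|\mathbf{T}\|_\infty = 1$, and therefore $\|\mathbf{A}\|_\infty = \gamma < 1$ since $\gamma \in [0,1)$. The hypothesis $\|\mathbf{A}\| \le 1$ of Lemma~\ref{lemma:pvf1} is satisfied with strict inequality, so the lemma yields
\[
\|(\mathbf{I} - \gamma \mathbf{T})^{-1}\|_\infty \;\le\; \frac{1}{1 - \gamma}.
\]

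Next, I would invoke submultiplicativity of the induced infinity norm to split the product:
\[
\|(\mathbf{I} - \gamma \mathbf{T})^{-1}\mathbf{T}\|_\infty \;\le\; \|(\mathbf{I} - \gamma \mathbf{T})^{-1}\|_\infty \cdot \|\mathbf{T}\|_\infty \;\le\; \frac{1}{1-\gamma}\cdot 1 \;=\; \frac{1}{1-\gamma},
\]
which is the desired bound.

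There is no substantive obstacle here; the only point worth being careful about is verifying that $\|\mathbf{A}\|_\infty < 1$ strictly (so that the hypothesis of Lemma~\ref{lemma:pvf1} is satisfied and the denominator $1-\|\mathbf{A}\|$ in that lemma is positive), which is guaranteed by $\gamma < 1$ together with $\mathbf{T}$ being row-stochastic. An alternative, slightly more self-contained route would be to expand $(\mathbf{I} - \gamma\mathbf{T})^{-1}$ as the Neumann series $\sum_{k=0}^\infty (\gamma \mathbf{T})^k$, right-multiply by $\mathbf{T}$ to get $\sum_{k=0}^\infty \gamma^k \mathbf{T}^{k+1}$, and then bound the norm termwise by $\sum_{k=0}^\infty \gamma^k = (1-\gamma)^{-1}$ using $\|\mathbf{T}^{k+1}\|_\infty \le 1$; but the Lemma~\ref{lemma:pvf1} route is shorter and directly reuses the preceding result.
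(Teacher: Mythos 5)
Your proposal is correct and follows essentially the same route as the paper: both apply Lemma~\ref{lemma:pvf1} with $\mathbf{A} = -\gamma\mathbf{T}$ together with submultiplicativity of the induced infinity norm and the fact that $\|\mathbf{T}\|_\infty = 1$ for a row-stochastic $\mathbf{T}$ (the paper merely performs these steps in the opposite order, bounding the product first and then invoking the lemma). Your explicit remark that $\gamma < 1$ is needed for the denominator to be positive, and the alternative Neumann-series derivation, are both consistent with what the paper does implicitly.
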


\begin{proof}
\begin{align*}
||(\mathbf{I} -  \gamma \mathbf{T})^{-1}\mathbf{T}||_\infty & \leq ||(\mathbf{I} -  \gamma \mathbf{T})^{-1}||_\infty||\mathbf{T}||_\infty  && \text{because} \ \  ||\mathbf{AB}||_{\infty} \leq ||\mathbf{A}||_{\infty} \cdot ||\mathbf{B}||_{\infty}\\
||(\mathbf{I} -  \gamma \mathbf{T})^{-1}\mathbf{T}||_\infty & \leq \frac{1}{1 - ||-\gamma \mathbf{T}||_\infty} ||\mathbf{T}||_\infty && \text{Lemma~\ref{lemma:pvf1}} \\
||(\mathbf{I} -  \gamma \mathbf{T})^{-1}\mathbf{T}||_\infty & \leq \frac{1}{1 - \gamma ||\mathbf{T}||_\infty} ||\mathbf{T}||_\infty && \text{because}  \ \ ||\lambda \mathbf{B}s|| = |\lambda| ||\mathbf{B}|| \\
||(\mathbf{I} -  \gamma \mathbf{T})^{-1}\mathbf{T}||_\infty & \leq \frac{1}{(1 - \gamma)}
\end{align*}
\end{proof}

\thtermination*

\begin{proof}
We can write the Bellman equation in the matrix form: ${\bf v} = {\bf r} + \gamma \mathbf{P_\pi} \bf{v}$, for a fixed policy $\pi$, where $\bf{v}$ is a \emph{finite} column vector with one entry per state encoding its value function. From Equation \ref{eq:eigenpurpose} we have ${\bf r} = \mathbf{P_\pi}{\bf w - w}$ with ${\bf w} = \Phi {\bf e}$, where ${\bf e}$ denotes the eigenvector of interest and $\Phi \in \mathbb{R}^{|\mathscr{S}| \times d}$ denotes the matrix representing the $d$-dimensional feature representation for each state. We use $r: \mathscr{S} \rightarrow \mathbb{R}$ for simplicity. This function can be seen as the expected reward in a given state. With that we have:
\begin{align*}
{\bf v}                                      &=      \mathbf{P_\pi}{\bf w - w} + \gamma \mathbf{P_\pi} {\bf v}\\
{\bf v + w}                                  &=      \mathbf{P_\pi}{\bf w} + \gamma \mathbf{P_\pi} {\bf v}\\
                                             &=      \mathbf{P_\pi}{\bf w} + \gamma \mathbf{P_\pi} {\bf v} + \gamma \mathbf{P_\pi} {\bf w} - \gamma \mathbf{P_\pi} {\bf w}\\
                                             &=      (1- \gamma) \mathbf{P_\pi} {\bf w} + \gamma \mathbf{P_\pi} ({\bf v + w})\\
{\bf v + w} - \gamma \mathbf{P_\pi} ({\bf v + w})     &=      (1 - \gamma) \mathbf{P_\pi} {\bf w}\\
(\mathbf{I} -  \gamma \mathbf{P_\pi}) ({\bf v + w})             &=      (1 - \gamma) \mathbf{P_\pi} {\bf w}\\
 \bf{v} + \bf{w}                             &=      (1 - \gamma) (\mathbf{I} -  \gamma \mathbf{P_\pi})^{-1} \mathbf{P_\pi} {\bf w} \
\end{align*}
where the last step is true because $(\mathbf{I} -  \gamma \mathbf{P_\pi})^{-1}$ is guaranteed to be nonsingular since $||\mathbf{P_\pi}|| \leq 1$, where $||\mathbf{P_\pi}|| = \sup_{\mathbf{v}:||\mathbf{v}||_\infty = 1} ||\mathbf{P_\pi} {\bf v}||_\infty$. By the Neumann series we have $(\mathbf{I} -  \gamma \mathbf{P_\pi})^{-1} = \sum_{n=0}^\infty \gamma^n\mathbf{P_\pi}^n$. Using the induced norm we have:
\begin{align*}
 ||{\bf v + w}||_\infty                      &=      (1 - \gamma)||(\mathbf{I} -  \gamma \mathbf{P_\pi})^{-1} \mathbf{P_\pi} {\bf w}||_\infty\\
 ||{\bf v + w}||_\infty                      &\le    (1 - \gamma)||(\mathbf{I} -  \gamma \mathbf{P_\pi})^{-1} \mathbf{P_\pi}||_\infty ||{\bf w}||_\infty && \text{because $||A{\bf x}||\leq||A||\cdot||{\bf x}||$}\\
 ||{\bf v + w}||_\infty                      &\le    (1 - \gamma) \frac{1}{(1-\gamma)} ||{\bf w}||_\infty && \text{Lemma}~\ref{lemma:pvf2}\\
 ||{\bf v + w}||_\infty                      &\le    ||{\bf w}||_\infty
\end{align*}
We can shift ${\bf w}$ by any finite constant without changing the reward, that is, $\mathbf{P_\pi}{\bf w - w} = \mathbf{P_\pi}({\bf w} + \bm{\delta}) - ({\bf w} + \bm{\delta})$ because $\mathbf{P_\pi}{\bf 1}\bm{\delta} = {\bf 1}\bm{\delta}$ since $\sum_j P_{\pi_{i,j}} = 1$. Therefore, we can assume ${\bf w} \ge {\bf 0}$. Let $s^* = \argmax_s {\bf w}_{s^*}$, so that ${\bf w}_{s^*} = ||{\bf w}||_\infty$. Clearly ${\bf v}_{s^*} \le {\bf 0}$, otherwise $||{\bf v + w}||_\infty \ge |{\bf v}_{s^*} + {\bf w}_{s^*}| = {\bf v}_{s^*} + {\bf w}_{s^*} > {\bf w}_{s^*} = ||{\bf w}||_\infty$, arriving at a contradiction.
\end{proof}

\thequivalence*
\begin{proof}
Let $\lambda_i$, ${\bf e}_i$ denote the $i$-th eigenvalue and eigenvector of the SR, respectively. Using the fact that the SR converges to $(\mathbf{I} -  \gamma \mathbf{P_\pi})^{-1}$ (through the Neumann series), we have:

\begin{eqnarray}
(\mathbf{I} -  \gamma \mathbf{P_\pi})^{-1} {\bf e}_i                        &=&      \lambda_i {\bf e}_i \nonumber \\
{\bf e}_i                                                &=&      \lambda_i (\mathbf{I} -  \gamma \mathbf{P_\pi}) {\bf e}_i \nonumber \\
(\mathbf{I} -  \gamma \mathbf{P_\pi}) {\bf e}_i                             &=&      \lambda_i^{-1} {\bf e}_i \nonumber \\
(\mathbf{I} -  \gamma \mathbf{P_\pi}) \gamma^{-1} {\bf e}_i                 &=&      \lambda_i^{-1} \gamma^{-1} {\bf e}_i \nonumber \\
\gamma^{-1}{\bf e}_i -  \mathbf{P_\pi} {\bf e}_i                   &=&      \lambda_i^{-1} \gamma^{-1} {\bf e}_i \nonumber \\
\mathbf{P_\pi} {\bf e}_i                                          &=&      \gamma^{-1} {\bf e}_i -  \lambda_i^{-1} \gamma^{-1} {\bf e}_i \nonumber \\
                                                         &=&      (1 - \lambda_i^{-1}) \gamma^{-1} {\bf e}_i \nonumber \\
\mathbf{I}{\bf e}_i -  \mathbf{P_\pi} {\bf e}_i                             &=&      \mathbf{I}{\bf e}_i -  (1 - \lambda_i^{-1}) \gamma^{-1} {\bf e}_i \nonumber \\
(\mathbf{I} -  \mathbf{P_\pi}) {\bf e}_i                                    &=&      [\gamma - (1 - \lambda_i^{-1})] \gamma^{-1} {\bf e}_i \nonumber \\
(\mathbf{I} -  \mathbf{P_\pi}) \gamma^{-1} {\bf e}_i                        &=&      [1 - (1 - \lambda_i^{-1}) \gamma^{-1}] \gamma^{-1} {\bf e}_i \nonumber \\
(\mathbf{I} -  \mathbf{P_\pi}) \gamma^{-1} {\bf e}_i                        &=&      \lambda_j' \gamma^{-1} {\bf e}_i \label{eq:change_variables_sr}
\end{eqnarray}
\begin{eqnarray}
(\mathbf{I} -  \mathbf{D}^{-1}\mathbf{W}) \gamma^{-1} {\bf e}_i                      &=&      \lambda_j' \gamma^{-1} {\bf e}_i \nonumber \\
(\mathbf{D}^{-1}(\mathbf{D} - \mathbf{W})) \gamma^{-1} {\bf e}_i                    &=&      \lambda_j' \gamma^{-1} {\bf e}_i \nonumber \\
\mathbf{D}^{1/2}(\mathbf{D}^{-1}(\mathbf{D} - \mathbf{W})) \gamma^{-1} {\bf e}_i             &=&      \lambda_j' \gamma^{-1} \mathbf{D}^{1/2} {\bf e}_i \nonumber\\
\mathbf{D}^{-1/2}(\mathbf{D} - \mathbf{W}) \gamma^{-1} {\bf e}_i                    &=&      \lambda_j' \gamma^{-1} \mathbf{D}^{1/2} {\bf e}_i \nonumber\\
\mathbf{D}^{-1/2}(\mathbf{D} - \mathbf{W})\mathbf{D}^{-1/2} \mathbf{D}^{1/2} \gamma^{-1} {\bf e}_i    &=&      \lambda_j' \gamma^{-1} \mathbf{D}^{1/2} {\bf e}_i \nonumber\\
\mathbf{L} \mathbf{D}^{1/2} \gamma^{-1} {\bf e}_i                          &=&      \lambda_j' \gamma^{-1} \mathbf{D}^{1/2} {\bf e}_i \nonumber
\end{eqnarray}
\vspace{-1cm}
\end{proof}

\begin{restatable}[\citeauthor{Machado17} \citeyear{Machado17}]{lem}{lemma_pvf_lfa}
\label{lemma:pvf_lfa}
In the tabular case, if all transitions in the MDP have been sampled once, $\mathbf{T}^\top \mathbf{T} = 2 \mathbf{L}$.
\end{restatable}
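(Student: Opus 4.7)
\medskip
\noindent\textbf{Proof proposal.} The plan is a direct calculation that expands $\mathbf{T}^\top \mathbf{T}$ as a sum of rank-one outer products, one per sampled transition, and then rearranges the resulting sum to read off its entries in terms of the degree matrix $\mathbf{D}$ and the adjacency matrix $\mathbf{W}$ of the state graph. In the tabular case, the feature map is $\bm{\phi}(s)=\mathbf{e}_s$, so each row of $\mathbf{T}$ corresponding to a sampled transition $(s,s')$ equals $(\mathbf{e}_{s'}-\mathbf{e}_s)^\top$. Hence
\[
\mathbf{T}^\top \mathbf{T} \;=\; \sum_{(s,s')\,\text{sampled}} (\mathbf{e}_{s'}-\mathbf{e}_s)(\mathbf{e}_{s'}-\mathbf{e}_s)^\top,
\]
and the whole argument reduces to evaluating the entries of the right-hand side under the assumption that each directed edge of the transition graph appears exactly once in the sum.

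Next, I would expand each summand as $\mathbf{e}_{s'}\mathbf{e}_{s'}^\top-\mathbf{e}_{s'}\mathbf{e}_{s}^\top-\mathbf{e}_{s}\mathbf{e}_{s'}^\top+\mathbf{e}_{s}\mathbf{e}_{s}^\top$ and compute $(\mathbf{T}^\top\mathbf{T})_{ij}$ in two cases. For $i\neq j$, the two ``diagonal'' outer products contribute zero and only the cross terms survive, yielding $-\mathbf{W}_{ij}-\mathbf{W}_{ji}$; by the symmetry assumption implicit in the PVF setting (and required for Theorem~\ref{th:equivalence}), $\mathbf{W}$ is symmetric, so this equals $-2\mathbf{W}_{ij}$. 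For $i=j$, the cross terms vanish (assuming no self-loops, so $\mathbf{W}_{ii}=0$) and the two diagonal terms sum the number of sampled transitions into $i$ plus the number out of $i$, which equals $2d(i)$ by symmetry. Collecting the two cases gives $\mathbf{T}^\top\mathbf{T}=2(\mathbf{D}-\mathbf{W})=2\mathbf{L}$, where here $\mathbf{L}$ denotes the combinatorial Laplacian.

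The main obstacle, and what I would be careful to make explicit, is the origin of the factor of $2$: it is not an arbitrary normalization but a consequence of counting each \emph{directed} edge once in a symmetric MDP, so that every undirected edge $\{s,s'\}$ contributes \emph{two} identical rank-one outer products $(\mathbf{e}_{s'}-\mathbf{e}_s)(\mathbf{e}_{s'}-\mathbf{e}_s)^\top$ to $\mathbf{T}^\top\mathbf{T}$. This is also the point at which the ``all transitions sampled once'' and symmetry hypotheses enter: without symmetry, the off-diagonal entries would read $-(\mathbf{W}_{ij}+\mathbf{W}_{ji})$ and the in- and out-degrees on the diagonal would no longer coincide, so the clean identification with $2\mathbf{L}$ would fail. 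A brief remark that self-loops are excluded (consistent with the paper's definition of $\mathbf{W}$) closes the last loose end, and the lemma follows.
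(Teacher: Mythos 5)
Your proposal is correct and follows essentially the same route as the paper: both expand $\mathbf{T}^\top\mathbf{T}$ over the sampled transitions, split into diagonal and off-diagonal entries, and identify the diagonal with $2\mathbf{D}$ (each node's degree counted once for arrivals and once for departures) and the off-diagonal with $-2\mathbf{W}$ (the $i\to j$ and $j\to i$ transitions each contributing $-1$). Your outer-product phrasing and your explicit flagging of where symmetry and the no-self-loop convention enter are merely a cleaner presentation of the paper's entrywise $\mathbf{K}+\mathbf{Z}$ decomposition, not a different argument.
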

\begin{proof}
Let $t_{ij}$ and $tt_{ij}$ denote the entries in the $i$-th row and $j$-th column of matrices $\mathbf{T}$ and $\mathbf{T}^\top \mathbf{T}$. We can write $tt_{ij}$ as:
\begin{equation}
tt_{ij} = \sum_k t_{ik} \times t_{jk}.
\end{equation}
In the tabular case, $t_{ij}$ has three possible values:
\begin{itemize}
    \item $t_{ij} = +1$, meaning that the agent arrived in state $j$ at time step $i$, 
    \item $t_{ij} = -1$, meaning that the agent left state $j$ at time step $i$,
    \item $t_{ij} = 0$, meaning that the agent did not arrive nor leave state $j$ at time step~$i$.
\end{itemize}
We decompose $\mathbf{T}^\top \mathbf{T}$ in two matrices, $\mathbf{K}$ and $\mathbf{Z}$, such that $\mathbf{T}^\top \mathbf{T} = \mathbf{K} + \mathbf{Z}$. Here $\mathbf{Z}$ is a diagonal matrix such that $z_{ii} = tt_{ii}$, for all $i$; and $\mathbf{K}$ contains all elements from $\mathbf{T}^\top \mathbf{T}$ that lie outside the main diagonal.

When computing the elements of $\mathbf{Z}$ we have $i = j$. Thus $z_{ii} = \sum_k t_{ik}^2$. Because we square all elements, we are in fact summing over all transitions leaving ($-1^2$) \underline{and} arriving ($1^2$) in state $i$, counting the node's degree twice. Thus, $\mathbf{Z} = 2\mathbf{D}s$.

When not computing the elements in the main diagonal, for the element $tt_{ij}$, we add all transitions that leave state $i$ arriving in state $j$ ($-1 \times 1$), \underline{and} those that leave state $j$ arriving in state $i$ ($1 \times -1$). We assume each transition has been sampled once, thus:
$$tt_{ij} = \left\{\begin{array}{rl}
-2, &\mbox{if the transition between states $i$ and $j$ exists},\\
0, &\mbox{otherwise}.
\end{array}\right.
$$
Therefore, we have $\mathbf{K} = -2 \mathbf{W}$ and $\mathbf{T}^\top \mathbf{T} = \mathbf{K} + \mathbf{Z} = 2 (\mathbf{D} - \mathbf{W})$.
\end{proof}

\begin{restatable}[\citeauthor{Machado17} \citeyear{Machado17}]{theorem}{thlinearfa}
Consider the singular value decomposition of the matrix $\mathbf{T}$ s.t. $\mathbf{T} = \mathbf{U \Sigma V}$, with each row of $\mathbf{T}$ consisting of the difference between observations, i.e., $\bm{\phi}(s') - \bm{\phi}(s)$. In the tabular case, if all transitions in the MDP have been sampled once, the orthonormal vectors of $\mathbf{L} = \mathbf{D} - \mathbf{W}$ are the columns of $\mathbf{V}^\top$.
\label{thlinearfa}
\end{restatable}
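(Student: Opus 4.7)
The plan is to combine Lemma~\ref{lemma:pvf_lfa}, which establishes the key identity $\mathbf{T}^\top \mathbf{T} = 2\mathbf{L}$, with the standard algebraic consequence of the SVD. This reduces the claim to a routine computation comparing two eigendecompositions of the same symmetric matrix.

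First I would invoke Lemma~\ref{lemma:pvf_lfa} to rewrite $\mathbf{L}$ in terms of $\mathbf{T}$: since all transitions have been sampled once, $\mathbf{T}^\top \mathbf{T} = 2(\mathbf{D} - \mathbf{W}) = 2\mathbf{L}$. Next, I would substitute the given singular value decomposition $\mathbf{T} = \mathbf{U}\mathbf{\Sigma}\mathbf{V}$, using orthogonality of $\mathbf{U}$ ($\mathbf{U}^\top \mathbf{U} = \mathbf{I}$) to obtain
\begin{equation*}
\mathbf{T}^\top \mathbf{T} = \mathbf{V}^\top \mathbf{\Sigma}^\top \mathbf{U}^\top \mathbf{U} \mathbf{\Sigma} \mathbf{V} = \mathbf{V}^\top \mathbf{\Sigma}^2 \mathbf{V}.
\end{equation*}
Combining the two expressions yields $\mathbf{L} = \mathbf{V}^\top \tfrac{1}{2}\mathbf{\Sigma}^2 \mathbf{V}$, which is precisely a spectral decomposition of the symmetric matrix $\mathbf{L}$, with $\mathbf{V}$ orthogonal and $\tfrac{1}{2}\mathbf{\Sigma}^2$ diagonal.

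From here I would conclude by reading off the eigenvectors and eigenvalues: the columns of $\mathbf{V}^\top$ are orthonormal eigenvectors of $\mathbf{L}$, with corresponding eigenvalues $\tfrac{1}{2}\sigma_i^2$, where $\sigma_i$ are the singular values of $\mathbf{T}$. The claim about orthonormality is immediate from the orthogonality of $\mathbf{V}$ inherited from the SVD.

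I do not anticipate a significant obstacle here: the entire argument is essentially a one-line calculation once Lemma~\ref{lemma:pvf_lfa} is in hand. The only subtlety worth mentioning explicitly is uniqueness—if $\mathbf{L}$ has repeated eigenvalues, the right singular vectors of $\mathbf{T}$ are only unique up to rotations within each eigenspace, but this does not affect the statement of the theorem, which only asserts that the columns of $\mathbf{V}^\top$ \emph{are} orthonormal eigenvectors of $\mathbf{L}$, not that they are the unique such choice.
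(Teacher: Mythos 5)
Your proposal is correct and follows essentially the same route as the paper: both rest on Lemma~\ref{lemma:pvf_lfa} ($\mathbf{T}^\top\mathbf{T} = 2\mathbf{L}$) together with the standard fact that the right singular vectors of $\mathbf{T}$ are eigenvectors of $\mathbf{T}^\top\mathbf{T}$, which the paper simply cites while you derive it explicitly from orthogonality of $\mathbf{U}$. Your added remarks on the eigenvalues $\tfrac{1}{2}\sigma_i^2$ and on non-uniqueness under repeated eigenvalues are harmless elaborations, not a different argument.
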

\begin{proof}
Given the SVD decomposition of a matrix $\mathbf{A} = \mathbf{U \Sigma V}^\top$, the columns of $\mathbf{V}$ are the eigenvectors of $\mathbf{A}^\top \mathbf{A}$~\citep{Strang05}. We know that $\mathbf{T}^\top \mathbf{T} = 2 \mathbf{L}$, where $\mathbf{L}~=~\mathbf{D} - \mathbf{W}$ (Lemma~\ref{lemma:pvf_lfa}). Thus, the columns of $\mathbf{V}$ are the eigenvectors of $\mathbf{T}^\top \mathbf{T}$, which can be rewritten as $2 (\mathbf{D} - \mathbf{W})$. Therefore, the columns of $\mathbf{V}$ are also the eigenvectors of $\mathbf{L}$.
\end{proof}

\section{Pseudo-code for the Discussed Algorithms}~\label{sec:pseudocode}

Algorithm~\ref{alg:eigenoptions_closed_form} and~\ref{alg:eigenoptions_online} summarize eigenoption discovery. They present in an algorithm box the algorithm outlined in Section~\ref{sec:temporally_extended_exploration}. Of note, is the assumption that the function \emph{eigendecomposition} returns eigenvectors sorted by their eigenvalues---we return the eigenvalues from this function only for clarity, but they are not used anywhere. The matrix $E$ contains all eigenvectors. In the closed-form algorithm, we assume access to the transition matrix $P$. In the online case, we use $\mathcal{U}$ to denote a uniform distribution and we use $||$ to represent the operation of appending a transition to the data set.

To be able to write the covering options algorithm in closed-form more succinctly, we assumed access to a \emph{getInducedTransitionMatrix} function that receives as input the matrix $P$ that contains the underlying transition dynamics, as well as the option set $\Omega$, and returns a new transition matrix induced by such options. For simplicity, we also define a \emph{getTopEigenvector} function, and we abstract away the fact that we use both directions of each eigenvector (we do the same for eigenoptions). In the online case, notice we store temporally extended transitions when an option is sampled, as if the agent had teleported from one state to another.

\begin{algorithm}
\caption{Eigenoptions Closed-Form}\label{alg:eigenoptions_closed_form}
\KwInput{$\gamma_{SR}, \ \gamma_{o}$ \Comment*[r]{Discount factor for the SR and the options' policies}} 
\hspace{1.2cm} $P \in \mathbb{R}^{|\mathscr{S}| \times |\mathscr{S}|}$ \Comment*[r]{Transition matrix}
\KwOutput{$\Omega$ \Comment*[r]{Set of eigenoptions \newline}}

$\triangleright$ Learn representation\\
$\Psi \gets (I - \gamma_{SR} P)^{-1}$\\
$\mathbf{\lambda}, E \gets \textrm{eigendecomposition}(\Psi)$\\
$\Omega \gets \emptyset$\\

\For{$\textrm{\normalfont \textbf{e}}$ \textrm{\normalfont \textbf{in}} $E$}{
$\triangleright$ Derive intrinsic reward function from learned representation\\
$r^{\mathbf{e}}(s, s') \gets \textrm{\normalfont \textbf{e}}(s') - \textrm{\normalfont \textbf{e}}(s) \ \ \forall s, s' \in \mathscr{S}$ \Comment*[r]{Define eigenpurpose}
$\triangleright$ Learn to maximize intrinsic reward\\
 $Q \gets \textrm{PolicyIteration}(r^{\mathbf{e}}, P, \gamma_o)$ \Comment*[r]{Learn value function}
 $\triangleright$ Define option\\
 $\mathcal{I} \gets \emptyset$; $\pi(s) \gets \bot$, $\beta(s) \gets 1 \ \ \forall s \in \mathscr{S}$ \Comment*[r]{Initialize option tuple}
 \For{$s$ \textrm{\normalfont \textbf{in}} $\mathscr{S}$}{
    $\triangleright$ \ If $s$ is not a terminal state for the eigenoption being learned:\\
    \uIf{$\exists a \in \mathscr{A}(s) \ s.t. \ Q(s, a) > 0 $}{
    $\mathcal{I} \gets \mathcal{I} \cup \{s\}$\\
    $\pi(s) \gets \argmax_a Q(s, a)$\\
    $\beta(s) \gets 0$
  }
 }
 $\Omega \gets \Omega \cup \langle \mathcal{I}, \pi, \beta \rangle$
}
\end{algorithm}

\begin{algorithm}
\caption{Eigenoptions Online}\label{alg:eigenoptions_online}
\KwInput{$\eta, \ \alpha_{o}$ \Comment*[r]{Step-sizes for learning the SR and the options' policies}}
\hspace{1.2cm} $\gamma_{SR}, \ \gamma_{o}$ \Comment*[r]{Discount factor for the SR and the options' policies}
\hspace{1.2cm} $N_{steps}$ \Comment*[r]{Maximum number of interactions with the environment}
\KwOutput{$\Omega$ \Comment*[r]{Set of eigenoptions \newline}}

$\mathcal{D} \gets \emptyset$\\
$\triangleright$ Collect samples\\
\For{$i \gets 0$ \KwTo $N_{steps}$}{
    $a \gets \mathcal{U}\Big(\mathscr{A}(s)\Big)s$ \Comment*[r]{Choose action randomly}
    In state $s$, take action $a$ and observe state $s'$\\
    $\mathcal{D} \gets \mathcal{D} \ \| \ (s, a, s')$ \Comment*[r]{Append transition to data set $\mathcal{D}$}
}

$\triangleright$ Learn representation\\
$\Psi \gets \textrm{Successor Representation}(\eta, \gamma_{SR}, \mathcal{D})$\\
$\mathbf{\lambda}, E \gets \textrm{eigendecomposition}(\Psi)$\\
$\Omega \gets \emptyset$\\

\For{$\textrm{\normalfont \textbf{e}}$ \textrm{\normalfont \textbf{in}} $E$}{
$\triangleright$ Derive intrinsic reward function from learned representation\\
$r^{\mathbf{e}}(s, s') \gets \textrm{\normalfont \textbf{e}}(s') - \textrm{\normalfont \textbf{e}}(s) \ \ \forall s, s' \in \mathscr{S}$ \Comment*[r]{Define eigenpurpose}
 $\triangleright$ Learn to maximize intrinsic reward\\
 $Q \gets \textrm{Q-Learning}(r^{\mathbf{e}}, \alpha_o, \gamma_o)$ \Comment*[r]{Learn value function}
 $\triangleright$ Define option\\
 $\mathcal{I} \gets \emptyset$; $\pi(s) \gets \bot$, $\beta(s) \gets 1 \ \ \forall s \in \mathscr{S}$ \Comment*[r]{Initialize option tuple}
 \For{$s$ \textrm{\normalfont \textbf{in}} $\mathscr{S}$}{
    $\triangleright$ \ If $s$ is not a terminal state for the eigenoption being learned:\\
    \uIf{$\exists a \in \mathscr{A}(s) \ s.t. \ Q(s, a) > 0 $}{
    $\mathcal{I} \gets \mathcal{I} \cup \{s\}$\\
    $\pi(s) \gets \argmax_a Q(s, a)$\\
    $\beta(s) \gets 0$
  }
 }
 $\Omega \gets \Omega \cup \langle \mathcal{I}, \pi, \beta \rangle$
}
\end{algorithm}

\begin{algorithm}
\caption{Covering Options Closed-Form}\label{alg:co_closed_form}
\KwInput{$\gamma_{SR}, \ \gamma_{o}$ \Comment*[r]{Discount factor for the SR and the options' policies}} 
\hspace{1.2cm} $P \in \mathbb{R}^{|\mathscr{S}| \times |\mathscr{S}|}$ \Comment*[r]{Transition matrix}
\hspace{1.2cm} $N_{iter}$ \Comment*[r]{Number of iterations}
\KwOutput{$\Omega$ \Comment*[r]{Set of covering options \newline}}

$\Omega \gets \emptyset$\\
\For{$i$ \textrm{\normalfont \textbf{in}} $N_{iter}$}{
$\triangleright$ Learn representation\\
$T \gets \textrm{getInducedTransitionMatrix}(P, \Omega)$\\
$\Psi \gets (I - \gamma_{SR} T)^{-1}$\\
$\triangleright$ Derive intrinsic reward function from learned representation\\
$\mathbf{e} \gets \textrm{getTopEigenvector}(\Psi)$\\

$r^{\mathbf{e}}(s, s') \gets 0 \ \ \forall s, s' \in \mathscr{S}$ \Comment*[r]{Define reward function}
$r^{\mathbf{e}}(s, \argmax \mathbf{e}) = 1 \ \ \forall s \in \mathscr{S}$\\
 
 $\triangleright$ Learn to maximize intrinsic reward\\
 $Q \gets \textrm{PolicyIteration}(r^{\mathbf{e}}, P, \gamma_o)$ \Comment*[r]{Learn value function}
 $\triangleright$ Define option\\
 $\mathcal{I} \gets \emptyset$; $\pi(s) \gets \bot$, $\beta(s) \gets 1 \ \ \forall s \in \mathscr{S}$ \Comment*[r]{Initialize option tuple}
 $\mathcal{I} \gets \argmin \mathbf{e}$\\
 $\pi(s) \gets \argmax_a Q(s, a)$\\
 $\beta(\argmax \mathbf{e}) \gets 1$\\
 $\Omega \gets \Omega \cup \langle \mathcal{I}, \pi, \beta \rangle$
}
\end{algorithm}

\begin{algorithm}
\caption{Covering Options Online}\label{alg:co_online}
\KwInput{$\eta, \ \alpha_{o}$ \Comment*[r]{Step-sizes for learning the SR and the options' policies}}
\hspace{1.2cm} $\gamma_{SR}, \ \gamma_{o}$ \Comment*[r]{Discount factor for the SR and the options' policies}
\hspace{1.2cm} $N_{steps}$ \Comment*[r]{Maximum number of interactions with the environment}
\hspace{1.2cm} $N_{iter}$ \Comment*[r]{Number of option discovery iterations}
\KwOutput{$\Omega$ \Comment*[r]{Set of covering options \newline}}

$\Omega \gets \emptyset$\\
\For{$i$ \textrm{\normalfont \textbf{in}} $N_{iter}$}{

$\mathcal{D} \gets \emptyset$\\
$\triangleright$ Collect samples\\
\For{$i \gets 0$ \KwTo $N_{steps}$}{
    $\omega \gets \mathcal{U}\big(\mathscr{A}(s) \cup \Omega\big)$ \Comment*[r]{Choose action (or option) randomly}
    
    \uIf{$\omega \in \mathscr{A}(s)$}{
    In state $s$, take action $a$ and observe state $s'$\\
    }
    \Else{
    Act according to sampled option $\omega$ from $s$ until termination in $s'$ 
    }
    $\mathcal{D} \gets \mathcal{D} \ \| \ (s, \omega, s')$ \Comment*[r]{Append transition to data set $\mathcal{D}$}
}
$\triangleright$ Learn representation\\
$\Psi \gets \textrm{Successor Representation}(\eta, \gamma_{SR}, \mathcal{D})$\\
$\triangleright$ Derive intrinsic reward function from learned representation\\
$\mathbf{e} \gets \textrm{getTopEigenvector}(\Psi)$\\

$r^{\mathbf{e}}(s, s') \gets 0 \ \ \forall s, s' \in \mathscr{S}$ \Comment*[r]{Define reward function}
$r^{\mathbf{e}}(s, \argmax \mathbf{e}) = 1 \ \ \forall s \in \mathscr{S}$\\

$\triangleright$ Learn to maximize intrinsic reward\\
 $Q \gets \textrm{Q-Learning}(r^{\mathbf{e}}, \alpha_o, \gamma_o)$ \Comment*[r]{Learn value function}
$\triangleright$ Define option\\
 $\mathcal{I} \gets \emptyset$; $\pi(s) \gets \bot$, $\beta(s) \gets 1 \ \ \forall s \in \mathscr{S}$ \Comment*[r]{Initialize option tuple}
 $\mathcal{I} \gets \argmin \mathbf{e}$\\
 $\pi(s) \gets \argmax_a Q(s, a)$\\
 $\beta(\argmax \mathbf{e}) \gets 1$\\
 $\Omega \gets \Omega \cup \langle \mathcal{I}, \pi, \beta \rangle$
}
\end{algorithm}

\clearpage

\section{Additional Results for Return Maximization with Eigenoptions and Covering Options}\label{sec:appendix_accum_return}

\vspace{2.56cm}

\begin{figure}[h]
     \centering
         \includegraphics[width=0.9\textwidth]{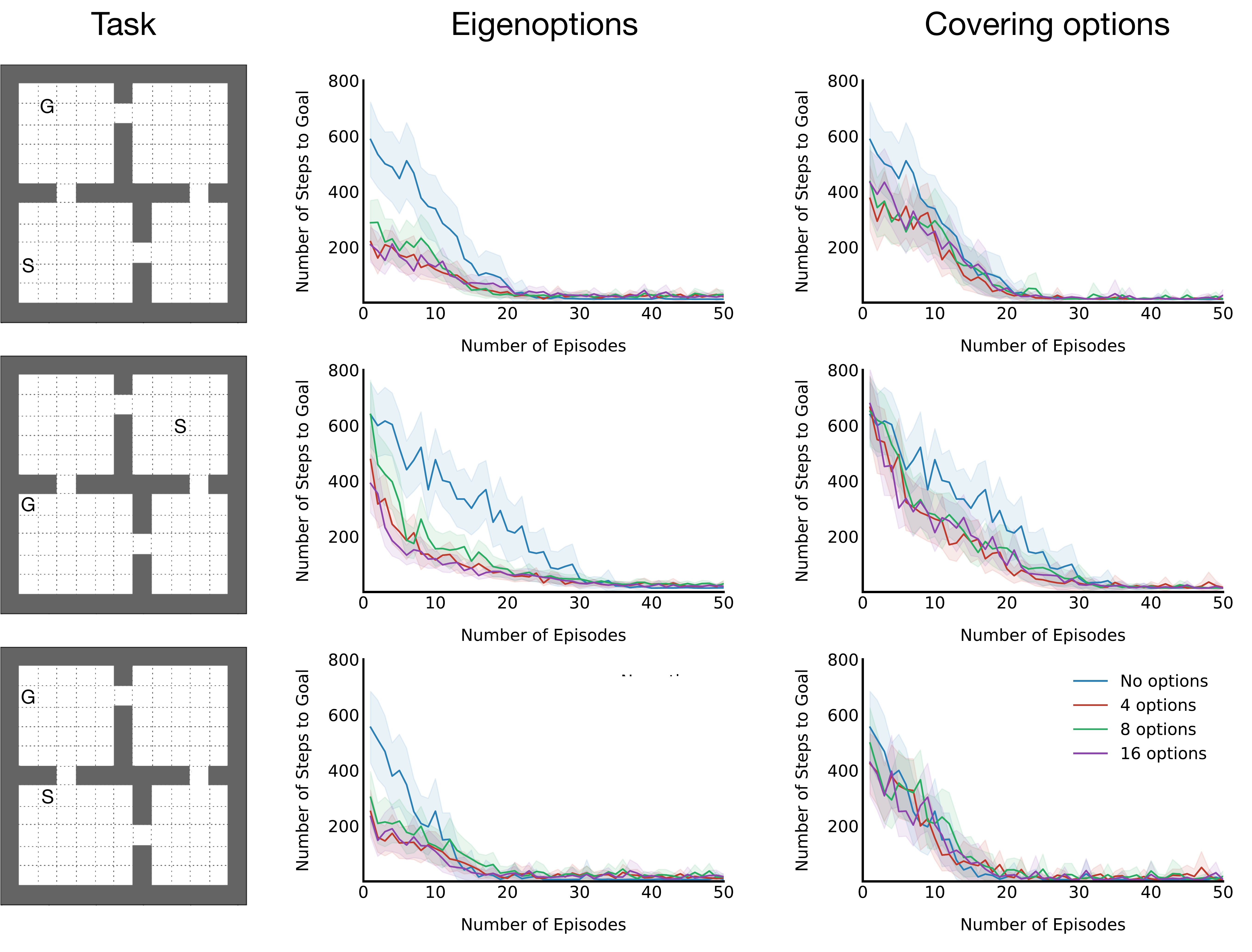}
          \caption{Performance of Q-learning augmented with eigenoptions and covering options in additional tasks as described in Section~\ref{sec:reward_max}. In the task, \textsf{S} and \textsf{G} denote the start and goal state. Results are averaged across 50 runs and shaded regions denote a 99\% confidence interval. See text for details.}
          
\end{figure}

\begin{figure}[h]
     \centering
         \includegraphics[width=0.9\textwidth]{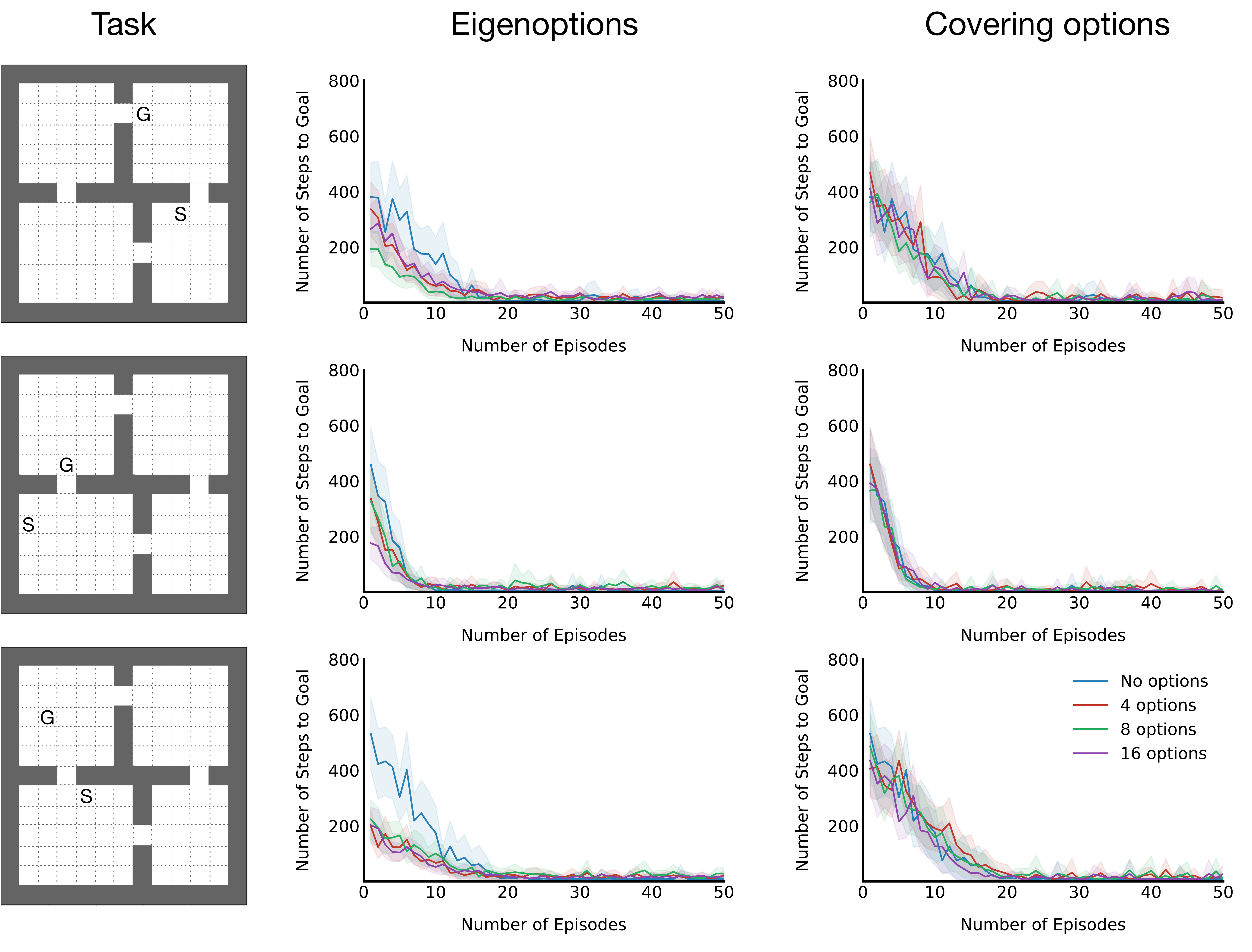}
          \caption{Performance of Q-learning augmented with eigenoptions and covering options in additional tasks as described in Section~\ref{sec:reward_max}. In the task, \textsf{S} and \textsf{G} denote the start and goal state. Results are averaged across 50 runs and shaded regions denote a 99\% confidence interval. See text for details.}
\end{figure}

\begin{figure}[h]
     \centering
         \includegraphics[width=0.9\textwidth]{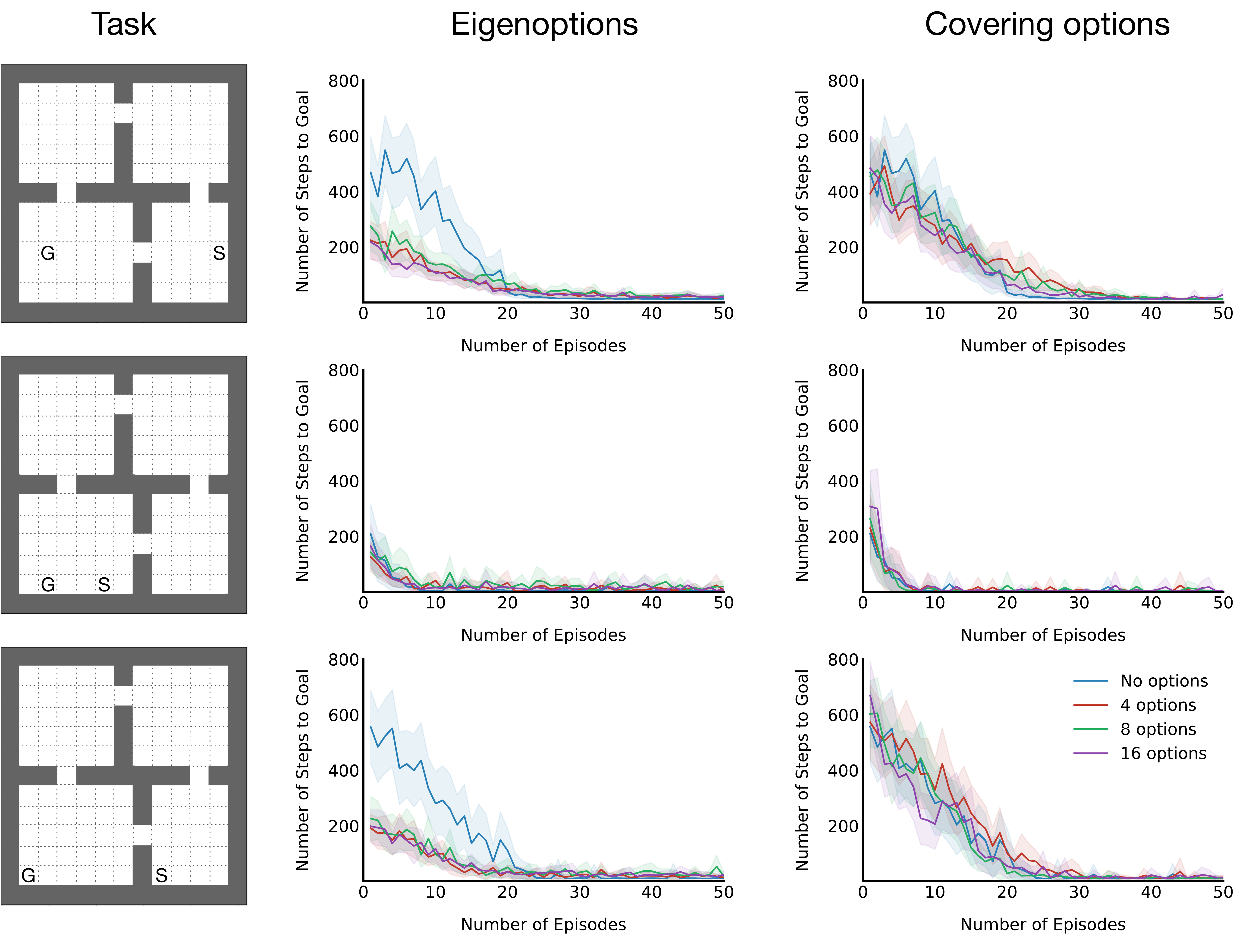}
          \caption{Performance of Q-learning augmented with eigenoptions and covering options in additional tasks as described in Section~\ref{sec:reward_max}. In the task, \textsf{S} and \textsf{G} denote the start and goal state. Results are averaged across 50 runs and shaded regions denote a 99\% confidence interval. See text for details.}
\end{figure}

\clearpage

\section{Eigenoptions and Covering Options Learned Online}\label{sec:appendix_evolution}

\vspace{3.4cm}

\begin{figure}[h]
     \centering
         \includegraphics[width=0.9\textwidth]{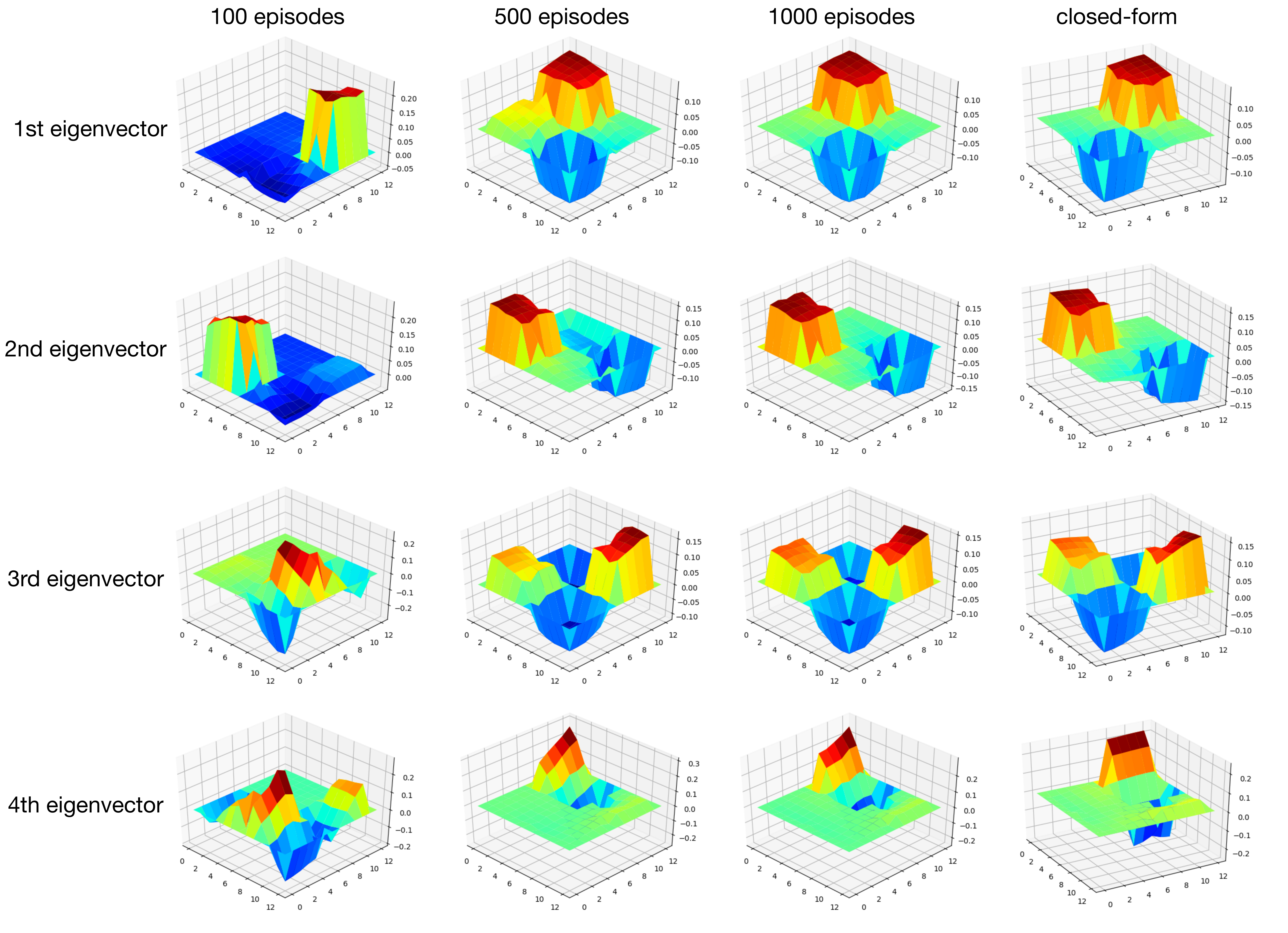}
          \caption{Top four eigenvectors of the SR when learning eigenoptions online for a varied number of episodes. We randomly selected one of the fifty runs to plot. The agent interacted with the environment for 1,000 steps in each episode.} \label{fig:evolution_eigenvecs_eigenoptions}
\end{figure}

\begin{figure}[h]
     \centering
         \includegraphics[width=0.9\textwidth]{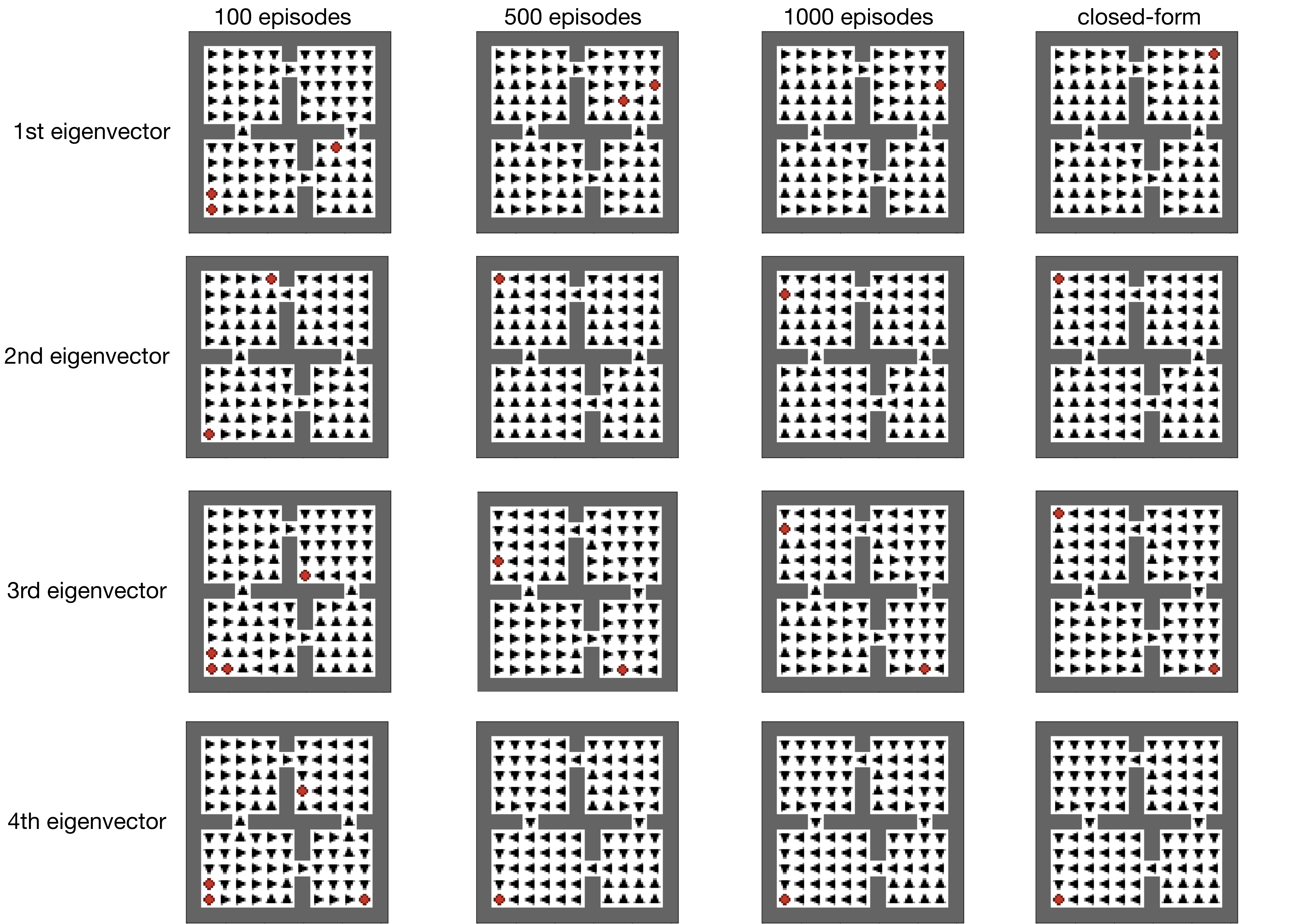}
          \caption{First four eigenoptions discovered online for a varied number of episodes, computed from the eigenvectors in Figure~\ref{fig:evolution_eigenvecs_eigenoptions}. We plot only one eigenoption per eigenvector (instead of two). Policies were learned off-policy from the data collected when learning~the~SR.}
          \label{fig:evolution_policies_eigenoptions}
\end{figure}

\begin{figure}[p]
     \centering
         \includegraphics[width=0.9\textwidth]{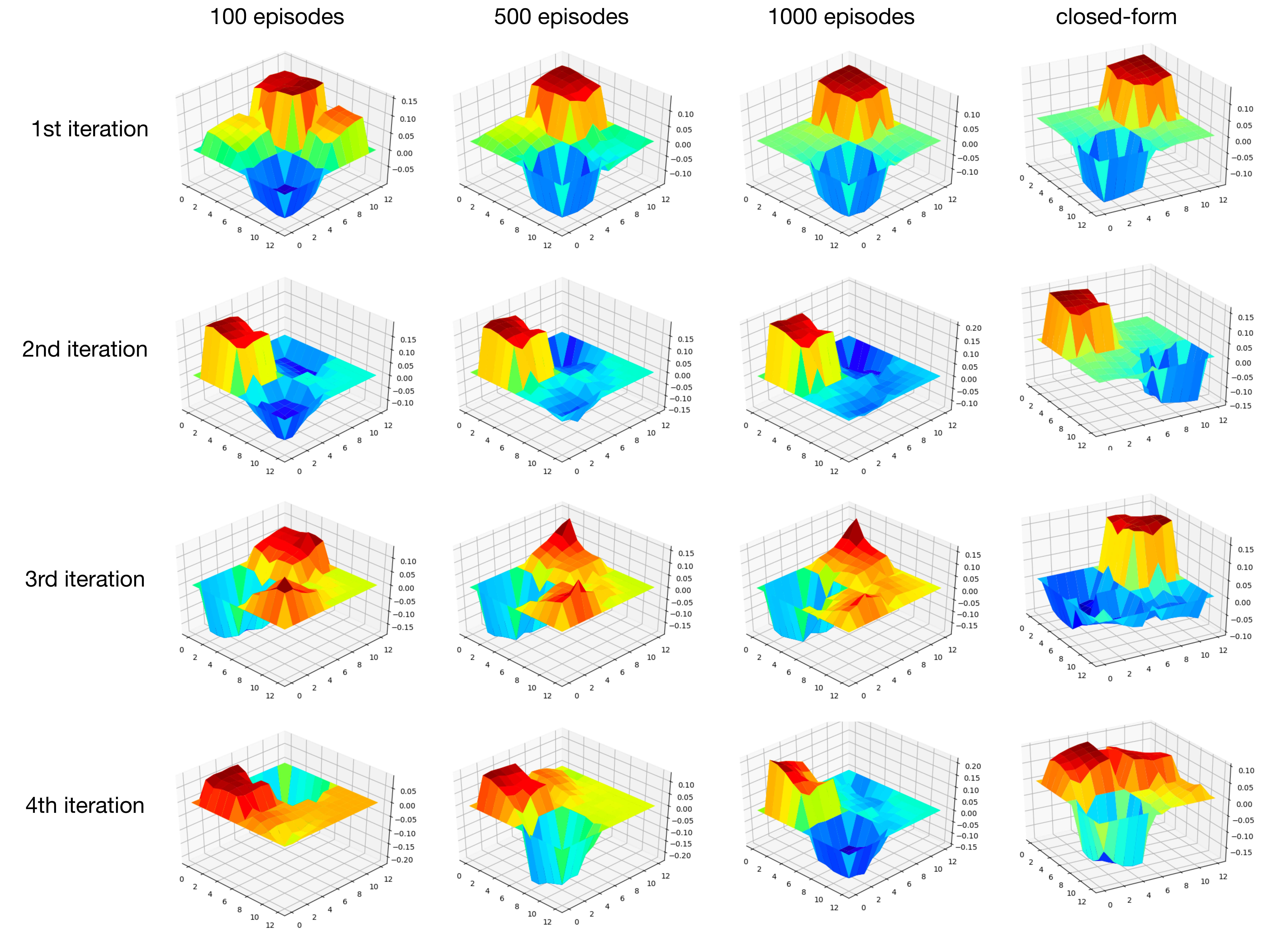}
          \caption{Top four eigenvectors of the SR when learning covering options online for a varied number of episodes. We randomly selected one of the fifty runs to plot. The agent interacted with the environment for 1,000 steps in each episode.} \label{fig:evolution_eigenvecs_co}
\end{figure}

\begin{figure}[p]
     \centering
         \includegraphics[width=0.9\textwidth]{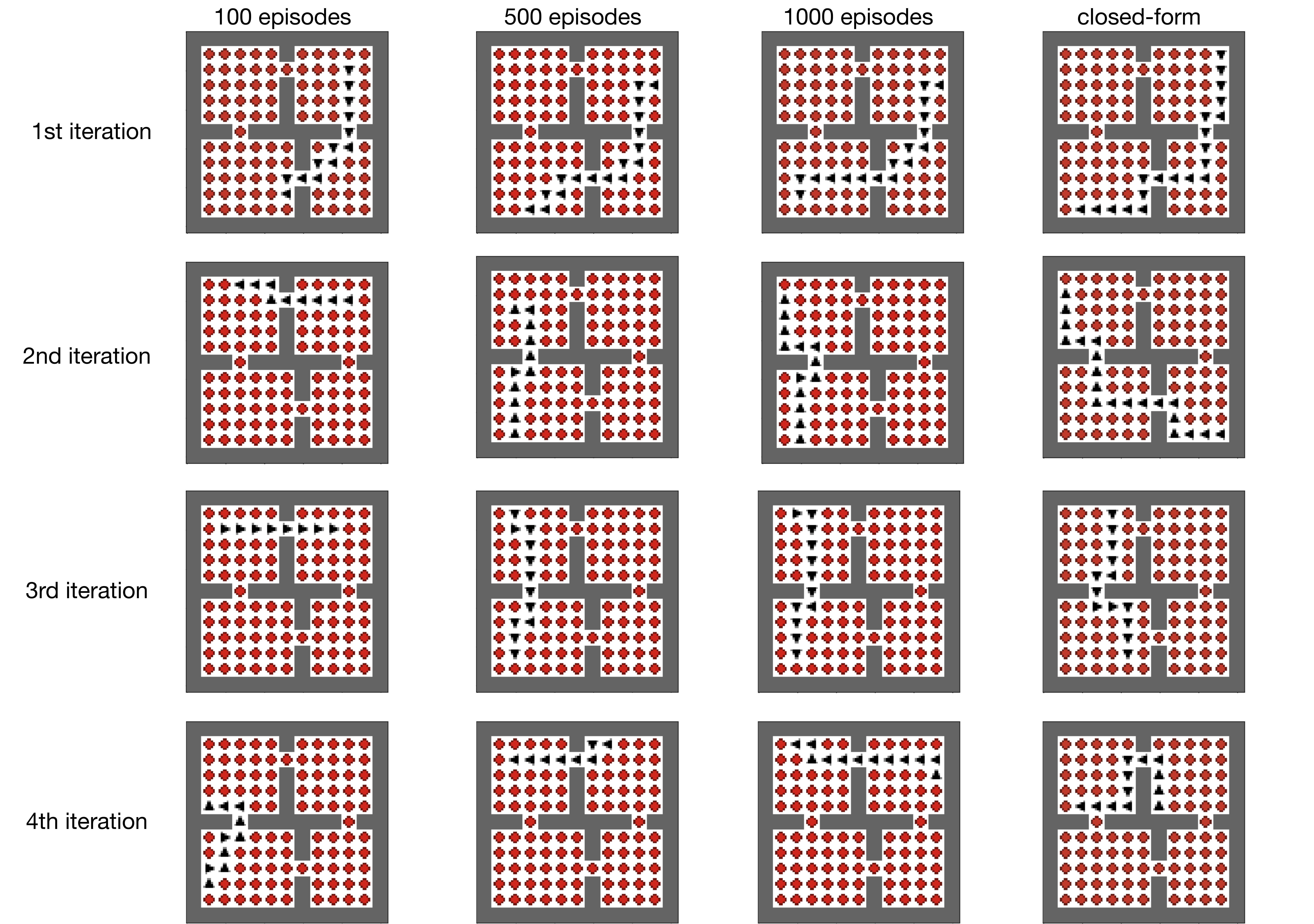}
          \caption{First four covering options discovered online for a varied number of episodes, computed from the eigenvectors in Figure~\ref{fig:evolution_eigenvecs_co}. We plot only one covering option per eigenvector (instead of two). Policies were learned off-policy from the data collected when learning the SR. Notice how the covering options discovered online in the second and third iteration overlap, and how none of the first four options capture the NW-SE diagonal when discovered online.}
          \label{fig:evolution_policies_co}
\end{figure}

\clearpage

\section{Impact of using the Eigenvectors of the SR instead of the Eigenvectors of the Laplacian when Discovering Covering Options}~\label{app:mismatch_co_sr_pvfs}

The theoretical guarantees we discussed for the equivalence between the eigenvectors of the graph Laplacian and the eigenvectors of the SR do not hold in later iterations of covering options. In the four-room domain, for example, after the first iteration of covering options, in some states the agent has access to four primitive actions and one option, while only four actions are available in the other states. This creates an asymmetry that violates one of the assumptions of Theorem~\ref{th:equivalence}. One of the consequences of this is that, in practice, when estimating the SR instead of the graph Laplacian, we sometimes observe eigenvectors with a non-zero complex part. We empirically evaluated, in closed-form, the impact of using the  SR to learn covering options, ignoring its complex component, which is what we actually used in Section~\ref{sec:online_experiments}. We used the same parameters we used when learning eigenoptions, also learning the options' policies off-policy. The results are depicted in Figure~\ref{fig:comparison_sr_laplacin_co}. Despite some variations, at least in the four-room domain, when computing covering options in closed-form, the mismatches between the eigenvectors of the Laplacian and of the SR do not lead to very different results.\footnote{Because the environment we consider is ultimately symmetric, to avoid asymmetries in the SR, after we estimate the SR matrix $\Psi$, we actually compute the eigendecomposition of the matrix $(\Psi + \Psi^\top)/2$. Alternative solutions for dealing with the asymmetry of time-based representations include using a singular value decomposition~\citep{Machado18b} or applying spectral analysis on the result of a polar decomposition~\citep{Bar20}. We consider this problem to be outside the scope of our paper.}

\begin{figure}[h!]
     \centering
         \includegraphics[width=\textwidth]{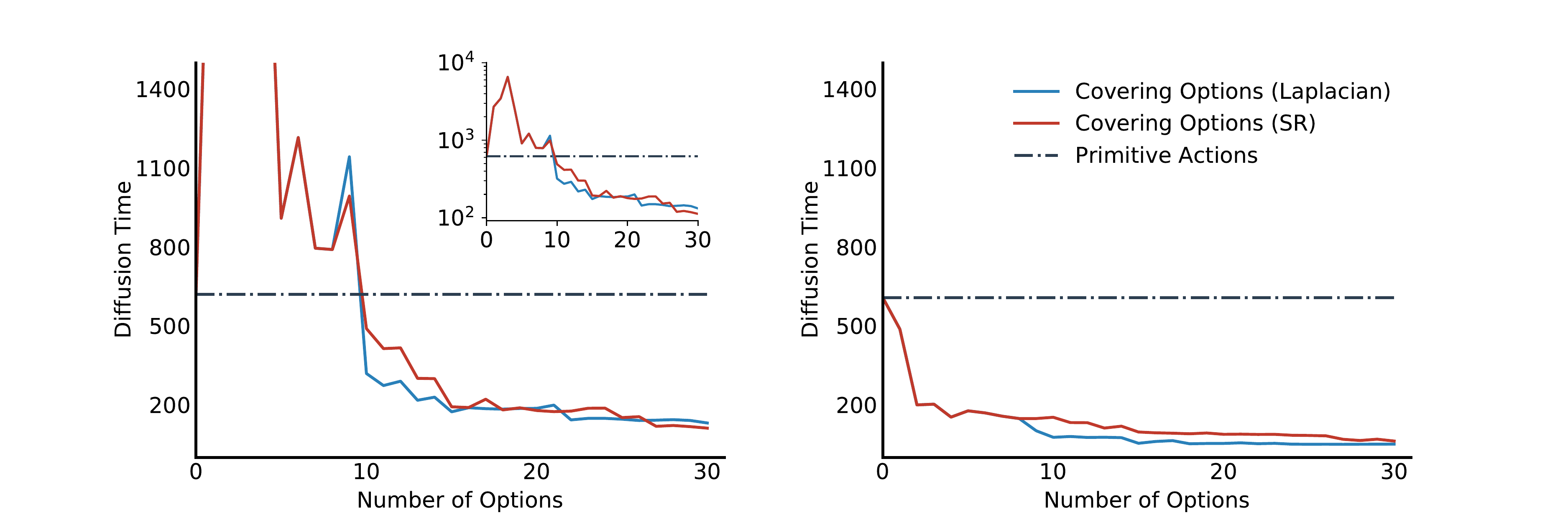}
     \caption{Average (left) and median (right) diffusion time in the four-room domain induced by covering options when computed with the SR and the eigenvectors of the graph Laplacian, in closed-form. The inset plot on the left figure depicts, in log-scale, the range the diffusion time lies.}
     \label{fig:comparison_sr_laplacin_co}
\end{figure}

\clearpage

\bibliography{refs}

\end{document}